\definecolor{cvprblue}{rgb}{0.21,0.49,0.74}
\newtheorem{assumption}{Assumption}
\newtheorem{theorem}{Theorem}
\newtheorem{remark}{Remark}
\newtheorem{corollary}{Corollary}
\newtheorem{lemma}{Lemma}
\title{DIMAT: Decentralized Iterative Merging-And-Training\\for Deep Learning Models}
\author{Nastaran Saadati$^{1}$ \\
{\tt\small nsaadati@iastate.edu}
\and
Minh Pham$^{2}$ \\
{\tt\small mp5847@nyu.edu}
\and
Nasla Saleem$^{1}$ \\
{\tt\small nasla@iastate.edu}
\and
Joshua R. Waite$^{1}$ \\
{\tt\small jrwaite@iastate.edu}
\and
Aditya Balu$^{1}$ \\
{\tt\small baditya@iastate.edu}
\and
Zhanhong Jiang$^{1}$ \\
{\tt\small zhjiang@iastate.edu}
\and
Chinmay Hegde$^{2}$ \\
{\tt\small chinmay.h@nyu.edu}
\and
Soumik Sarkar$^{1}$ \\
{\tt\small soumiks@iastate.edu}
\and 
$^{1}$Iowa State University, Ames; $^{2}$New York University, New York
\and
 } 
\begin{document}
\maketitle
\begin{abstract}
Recent advances in decentralized deep learning algorithms have demonstrated cutting-edge performance on various tasks with large pre-trained models. However, a pivotal prerequisite for achieving this level of competitiveness is the significant communication and computation overheads when updating these models, which prohibits the applications of them to real-world scenarios.
To address this issue, drawing inspiration from advanced model merging techniques without requiring additional training, we introduce the Decentralized Iterative Merging-And-Training (DIMAT) paradigm—a novel decentralized deep learning framework. Within DIMAT, each agent is trained on their local data and periodically merged with their neighboring agents using advanced model merging techniques like activation matching until convergence is achieved. DIMAT provably converges with the best available rate for nonconvex functions with various first-order methods, while yielding tighter error bounds compared to the popular existing approaches. We conduct a comprehensive empirical analysis to validate DIMAT's superiority over baselines across diverse computer vision tasks sourced from multiple
datasets.  
 Empirical results validate our theoretical claims by showing that DIMAT attains faster and higher initial gain in accuracy with independent and identically distributed (IID) and non-IID data, incurring lower communication overhead. This DIMAT paradigm presents a new opportunity for the future decentralized learning, enhancing its adaptability to real-world with sparse and light-weight communication and computation.  
\end{abstract}    


\begin{figure}[t]  
  \centering
  \includegraphics[width=1\linewidth]{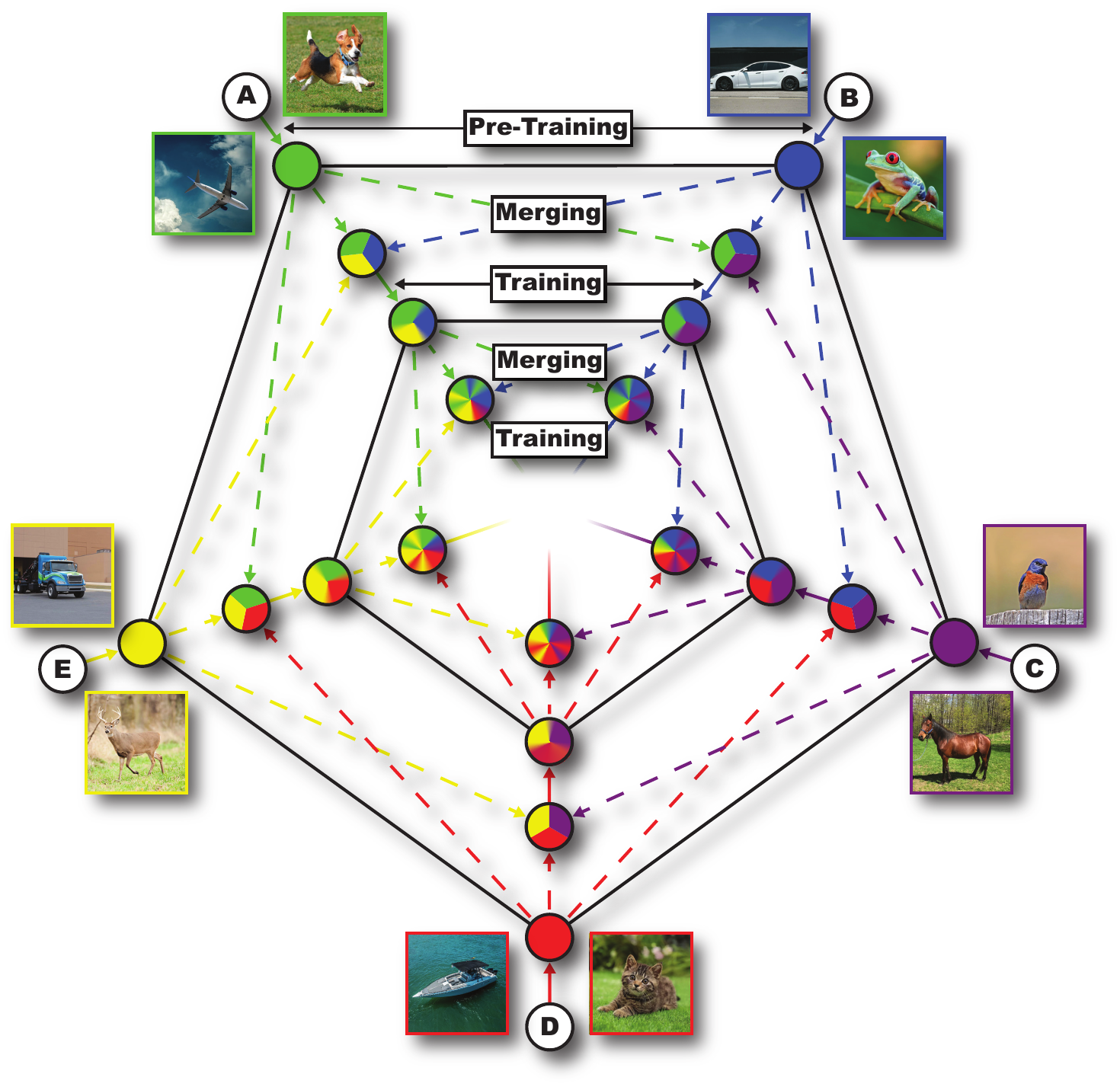}
  \caption{Illustration of \textsc{DIMAT} with ring topology for 5 agents. Agents, denoted by capital letters, undergo pre-training on two unique classes from CIFAR-10 each (solid lines). Subsequently, adjacent agents merge, forming tri-colored nodes (dashed lines). Updated agents then train on the original datasets (solid lines) with a slight increase in dataset colors. This merging and training cycle repeats for specified iterations until final fine-tuning.}
  \label{fig:dimat_schematic}
\end{figure}

\begin{table*}[ht!]
  \centering
  \begin{tabular}{cc}
    \toprule
    Method & Convergence Rate\\
    \midrule
    CD(M)SGD~\cite{yu2019linear} & $\mathcal{O}(\sqrt{\frac{1}{NK}}+\frac{N}{(1-\rho)K}+\frac{N}{(1-\sqrt{\rho})^2K})$ \\
    SGP~\cite{assran2019stochastic}&$\mathcal{O}(\sqrt{\frac{1}{NK}}+\frac{N}{(1-\rho)^2K}+\frac{N}{(1-\rho)^2K^{1.5}})$ \\
    CGA~\cite{esfandiari2021cross}&$\mathcal{O}(\sqrt{\frac{1}{NK}}+\frac{N}{(1-\rho)K}+\frac{\sqrt{N}}{K^{1.5}}+\frac{N}{(1-\sqrt{\rho})^2K})$ \\
    \textsc{DIMAT} & $\mathcal{O}(\sqrt{\frac{1}{NK}}+\frac{N}{(1-\rho')K}+\frac{N}{(1-\sqrt{\rho'})^2K})$ \\
    \bottomrule
  \end{tabular}
  \caption{Comparison between different methods. $N$: \# of agents, $K$: the number of iterations, $0< \rho, \rho'<1$: positive constant related to doubly stochastic matrices and $\rho'\leq \rho$.}
  \label{tab:comparison}
\end{table*}
\section{Introduction}
\label{sec:intro}
Faced with various large-scale deep learning tasks~\cite{qian2023multi,hamdan2023intelligent,beaini2023towards}, researchers and practitioners have made considerable efforts to advance numerous decentralized training algorithms~\cite{beltran2023decentralized}. These also benefit from the rapid development of hardware technologies~\cite{dally2023hardware,rasch2023hardware}. Decentralized deep learning algorithms have demonstrated compellingly cutting-edge performance, nearly matching their centralized counterparts~\cite{sun2021decentralized, esfandiari2021cross, yuan2021decentlam, jiang2021consensus}. The most popular decentralized learning algorithms are first-order methods such as decentralized stochastic gradient descent (SGD)~\cite{jiang2017collaborative,chen2021dacfl,sun2021stability}, their momentum accelerated variants~\cite{balu2021decentralized,yuan2021decentlam,yau2022docom}, and more recently developed decentralized adaptive gradient methods~\cite{chen2023convergence,nazari2022dadam}, which all provably show convergence with a sublinear rate and empirically exhibit appealing performance.

A prerequisite for delivering competitive results is the significant communication and computation overheads, particularly when the models are large such as VGG~\cite{ding2021repvgg} and ResNet~\cite{tai2017image}. However, such an requirement extremely prohibits the use of decentralized learning in real-world scenarios. 
Another concern is the slower performance gain when they are utilized with sparser topology networks, which results in more pronounced communication and computation overheads~\cite{cao2023communication}. Some recent works~\cite{chiu2023laplacian,zhmoginov2023decentralized,le2023refined,wu2023hiflash} have attempted to address the above issues; however, they still remain critically challenging when local agents have widely distinct tasks.

Inspired by a recent line of works in deep learning --- model merging~\cite{pena2023re,stoica2023zipit,csiszarik2023mode,sung2023empirical,ainsworth2022git} --- our paper proposes a novel decentralized deep learning paradigm, that we call \underline{D}ecentralized \underline{I}terative \underline{M}erging-\underline{A}nd-\underline{T}raining (\textsc{DIMAT}). Specifically, within \textsc{DIMAT}, each agent is first trained on their local data and then periodically \textit{merged} with their neighboring agents using advanced model merging techniques. Such a merging-and-training manner is iterated until the convergence is reached. Our method is different from other decentralized deep learning methods significantly driven by vanilla weight averaging~\cite{lian2017can,jiang2017collaborative,chen2021dacfl}; instead, it leverages modulo permutation symmetries from model merging to allow local agents reaching a better ``consensus regime" by enlarging the spectral gap; this eventually leads to a smaller optimality gap. Please see Figure~\ref{fig:dimat_schematic} for the schematic illustration of \textsc{DIMAT}. 

In addition, \textsc{DIMAT} is found to speed up performance gain at the early phase of optimization with lower communication cost. Our contributions are as follows:
\begin{itemize}
    \item We propose and develop \textsc{DIMAT}, a novel decentralized deep learning framework with periodical model merging in the communication protocol. For the local training strategy, \textsc{DIMAT} can be equipped with different first-order methods flexibly. The model merging frequency can even be adjusted in diverse scenarios to reduce the communication overheads.
    \item We theoretically show that \textsc{DIMAT} provably converges with a sublinear rate to a stationary point for nonconvex functions, while yielding a tighter error bound and maintaining linear speed up, compared to the popular existing approaches. The theory also implies the faster initial performance gain due to the larger spectral gap compared to existing algorithms.
    \item A comprehensive empirical analysis is conducted to validate \textsc{DIMAT}'s efficiency and superiority over baselines across IID and non-IID data distributions sourced from three benchmark datasets, including CIFAR-10, CIFAR-100, and Tiny ImageNet datasets by using two popular deep neural network models. See Table~\ref{tab:comparison} for the comparison between the proposed and existing algorithms.
\end{itemize}





\section{Related Work}
\label{sec:related_work}


\textbf{Decentralized Learning:} Several decentralized learning algorithms have demonstrated performance comparable to centralized counterparts on standard vision datasets. In their comprehensive investigation, Lian et al. \cite{lian2017can} conducted a case study on decentralized parallel stochastic gradient descent (D-PSGD), combining stochastic gradient descent with a gossip averaging algorithm \cite{xiao2004fast}. Jiang et al. introduced consensus-based distributed SGD (CDSGD) exploring collaborative deep learning in fixed topology networks, contributing insights into decentralized cooperation \cite{jiang2017collaborative}. The extension of D-PSGD to directed and time-varying graphs introduced Stochastic Gradient Push (SGP) \cite{assran2019stochastic}, while a momentum version, Decentralized Momentum Stochastic Gradient Descent (DMSGD), was proposed in \cite{balu2021decentralized}. Fotouhi et al. proposed an algorithm called minimum connected Dominating Set Model Aggregation (DSMA) to address communication overhead issues \cite{FOTOUHI202425}. However, a critical assumption for achieving state-of-the-art performance in these decentralized algorithms is that the data is IID across agents \cite{assran2019stochastic, jiang2017collaborative, FOTOUHI202425}. Although, Khawatmi et al. delved into decentralized clustering and linking, shedding light on challenges posed by heterogeneous settings and node assignment within clusters \cite{khawatmi2017decentralized}. Recent efforts aim to bridge the performance gap between IID and non-IID data in decentralized setups \cite{tang2018d, pu2021distributed, esfandiari2021cross, aketi2022neighborhood, vogels2021relaysum}. Lin et al. proposed a  Quasi-Global momentum-based decentralized deep learning approach for heterogeneous data \cite{lin2021quasi}. Tang et al. introduced the \(D^2\) algorithm, an extension of D-PSGD tailored for non-IID data, while Nadiradze et al. proposed SwarmSGD, utilizing random interactions between agents in a graph for consensus \cite{tang2018d, nadiradze2021asynchronous}. Esfandiari et al. introduced Cross-Gradient Aggregation (CGA) and its compressed variant (CompCGA) for decentralized learning on entirely non-IID data, asserting superior performance \cite{esfandiari2021cross}. However, these techniques incur higher communication costs compared to standard decentralized algorithms such as DSGD. Federated learning methods, such as the one proposed by McMahan et al. \cite{mcmahan2017communication}, offer a practical solution for training deep networks by iteratively averaging models, thus reducing communication costs compared to synchronous stochastic gradient descent. Federated Multi-Task Learning \cite{smith2017federated}, demonstrated the natural suitability of multi-task learning to handle the statistical challenges of federated learning settings. Our primary aim is to leverage multi-task learning and decentralized learning on IID and non-IID data with minimal computational and communication overhead.

\textbf{Model Merging:} Recently, model merging has emerged as a technique that integrates the parameters or predictions of multiple deep-learning models into a single one, requiring minimal computation. This process exploits the proximity of models with the same pre-trained weights in the same error basin \cite{neyshabur2020being}. Numerous studies \cite{huang2017snapshot, izmailov2018averaging, von2020neural, wortsman2022robust} have utilized this characteristic to average the weights of a model during training. The merging process can enhance single-task performance for models designed for the same task \cite{wortsman2022model} or create a multi-task solution for models targeting different tasks \cite{ilharco2022editing, wortsman2022robust, don2022cold}. Various merging methods such as linear interpolation, and task arithmetic \cite{ilharco2022editing, jin2022dataless} have also been proposed. However, for models without shared pre-trained weights, weight permutation may be necessary before merging \cite{entezari2021role, ainsworth2022git, jordan2022repair}. In our approach, we leverage modulo permutation symmetries from model merging, enabling local agents to reach a more robust ``consensus regime". This integrated approach aims to address challenges arising from heterogeneous data distributions, providing an effective solution with minimal computational and communication overhead.





\section{Methodology}
\label{sec:methodology}
In this section, we first present some preliminaries regarding one model merging technique called activation matching, and then formulate the generic optimization problem in the decentralized learning setting, followed by the proposed algorithms.
\subsection{Preliminaries: Activation Matching}
In this section, we present the activation matching methodologies, extending the principles established by Ainsworth et al.~\cite{ainsworth2022git}. These methodologies form the cornerstone of our decentralized model merging approach, inspired by the concept that models acquiring similar features are likely to perform similar tasks~\cite{li2016convergent}.
Consider two multi-layer perceptron models, \(M_1\) and \(M_2\). Given activations for each model, we aim to associate each unit in \(M_1\) with a unit in \(M_2\). This is accomplished by fitting the linear relationship into the regression framework, constraining a matrix of cross-correlation coefficients to solutions in the set of all symmetric permutation matrices, $S_d\in\mathbb{R}^{d\times d}$.
For the \(\ell\)-th layer, let \(\mathbf{Z}^{(M_1)} \in \mathbb{R}^{d \times s}\) and \(\mathbf{Z}^{(M_2)} \in \mathbb{R}^{d \times s}\) denote the cross-correlation of the activations of models \(M_1\) and \(M_2\), respectively, where $s$ signifies the number of all training data points in models. We aim to find a permutation matrix \(\mathbf{P}_\ell\in\mathbb{R}^{d\times d}\) that minimizes the Frobenius norm of the difference between \(\mathbf{Z}^{(M_1)}\) and the permuted \(\mathbf{Z}^{(M_2)}\):
\begin{equation}
    \begin{split}
        \mathbf{P}_\ell &= \arg\min_{\mathbf{P} \in S_d} \sum_{p=1}^{s} \|\mathbf{Z}^{(M_1)}_{:,p} - \mathbf{P} \mathbf{Z}^{(M_2)}_{:,p}\|^2 \\
        &= \arg\max_{\mathbf{P} \in S_d} \langle \mathbf{P}, \mathbf{Z}^{(M_1)}(\mathbf{Z}^{(M_2)})^\top \rangle_F.
    \end{split}
\end{equation}
Please note that the dimension of $\mathbf{P}$ will be expanded in the decentralized learning setting in the rest of the paper.
This problem constitutes a ``linear assignment problem" (LAP), for which efficient algorithms exist. Once LAP is solved for each layer, we proceed to permute the parameters of model \(M_2\) to align them with those of model \(M_1\):
\[
\mathbf{\mathcal{W}}_\ell' = \mathbf{P}_\ell \mathbf{\mathcal{W}}_\ell^{M_2} \mathbf{P}_{\ell-1}^\top, \quad b_\ell' = \mathbf{P}_\ell b_\ell^{M_2},
\] where $\mathbf{\mathcal{W}}'_\ell$ and $\mathbf{\mathcal{W}}^{M_2}_\ell$are weight matrices, $b'_\ell$ and $b_\ell^{M_2}$ are bias vectors.
We will utilize the activation matching method in the communication phase of our DIMAT framework for decentralized model merging. This approach represents a more sophisticated strategy compared to the basic averaging technique employed in prior methods, such as consensus-based decentralized SGD (CDSGD) or its momentum variant~\cite{jiang2017collaborative, yu2019linear}. Throughout the paper, we still use model merging instead of activation matching to better fit the DIMAT name. More detailed information about this method has been provided in the  Supplementary Materials.



\subsection{Problem Formulation}
Consider a network involving $N$ agents and denote by $\mathcal{G}=(\mathcal{V}, \mathcal{E})$ the connected topology, where $\mathcal{V} = \{1,2,...,N\}$ and $\mathcal{E}\subseteq \mathcal{V}\times \mathcal{V}$. If $(i, j)\in\mathcal{E}$, then agent $i$ is able to communicate with agent $j$. We also define the neighborhood of agent $i$ as follows: $Nb(i):=\{j\in\mathcal{V}:(i,j)\in\mathcal{E}\;or\;j=i\}$. Without loss of generality, we assume the graph $\mathcal{G}$ is {connected} and {undirected}. The $N$ agents jointly solve the following consensus optimization problem:
\begin{equation}\label{opt_problem}
    \text{min}_{\mathbf{x}\in\mathbb{R}^d} f(\mathbf{x}) = \frac{1}{N}\sum_{i=1}^N\mathbb{E}_{\xi_i\sim\mathcal{D}_i}[F^i(\mathbf{x};\xi_i)],
\end{equation}
where $f^i(\mathbf{x}):=\mathbb{E}_{\xi_i\sim\mathcal{D}_i}[F^i(\mathbf{x};\xi_i)]$ are smooth non-convex functions with different data distributions $\mathcal{D}_i$. We denote by $\mathbf{g}^i$ the mutually independent unbiased stochastic gradients sampled at points $\mathbf{x}^i\in\mathbb{R}^d$ such that $\mathbb{E}[\mathbf{g}^i]=\nabla f^i(\mathbf{x}^i)$.

In decentralized learning, consensus averaging is a critical key to maintain {closeness} among agents that learn to achieve the shared model parameter $\mathbf{x}^*$. Such a mechanism works well when the sampled datasets for individual agents are independent and identically distributed (IID). However, in reality, IID data is rare, and data heterogeneity needs to be considered, particularly when agents implement diverse tasks. To address this issue, numerous works have attempted to develop decentralized learning algorithms that are equipped with more complex communication protocols \cite{assran2019stochastic, jiang2017collaborative, esfandiari2021cross}. Nevertheless, they essentially fall into variants of consensus averaging and require quantification techniques to alleviate the significant communication overhead.

In this study, we turn our direction to one recently developed model merging technique that has empirically been studied for multi-task learning~\cite{ainsworth2022git}. As introduced above, the core is to apply a suitable {permutation} to the weight matrices such that the new model can be adapted to unseen tasks without additional training from scratch. 
Thus, we extend model merging to the decentralized learning setting for the first time and mathematically show the convergence rate. 
We next introduce one operator that fits in the parameter space upon which vectors are defined to represent models. 

Define the \textit{stochastic model merging} operator as $\mathcal{T}(\cdot|\mathbf{\Pi}):\mathbb{R}^d\times\mathbb{R}^d\times...\times\mathbb{R}^d\to\mathbb{R}^d$ and the permutation matrix as $\mathbf{P}^{ij}\in\mathbb{R}^{d\times d}$ respectively, where $\mathbf{\Pi}\in\mathbb{R}^{N\times N}$ is the mixing matrix. Then we have the following:
\begin{equation}
    \mathcal{T}(\mathbf{x}^1,...,\mathbf{x}^q|\mathbf{\Pi}) = \sum_{j\in Nb(i)}\pi_{ij}\mathbf{P}^{ij}\mathbf{x}^j,
\end{equation}
where $q:=|Nb(i)|\leq N$, $\pi_{ij}$ is the element in $\mathbf{\Pi}$ at $i$-th row and $j$-th column.
It can be observed that vanilla averaging is the simplest model merging, resulting in $\mathbf{P}^{ij}=\mathbf{I}$. $\mathbf{P}^{ij}$ will be time-varying along with the update of $\mathbf{x}$ and this intuitively makes sense, as for each iteration, the model merging may apply to different features on the layers. Further details on the mixing matrix definition are provided in the Supplementary Materials.

\subsection{Algorithmic Framework}
Decentralized learning typically comprises two crucial steps, communication and computation. The computation step corresponds to local model update that can possibly be accomplished by first-order methods. In this context, we combine the proposed DIMAT with SGD, momentum SGD (MSGD), and Adam~\cite{zhang2018improved} to develop \textsc{DIMAT-SGD}, \textsc{DIMAT-MSGD} and \textsc{DIMAT-Adam}.
\begin{algorithm}
  \caption{\textsc{DIMAT-SGD}}
  \label{alg:dmm_sgd}
  \SetKwInOut{Input}{Input}
  \SetKwInOut{Output}{Output}
  \Input{mixing matrix $\mathbf{\Pi}$, the \# of iterations $K$, initialization $\mathbf{x}_1^i, \forall i\in\mathcal{V}$, step size $\alpha$, merging frequency $n$}
  \Output{$\bar{\mathbf{x}}_K=\frac{1}{N}\sum_{i=1}^N\mathbf{x}_K^i$}  
  \BlankLine
  \For{ $k$ in $1:K$ }
  { 
    \For{each agent $i\in\mathcal{V}$}
    { 
    Calculate the stochastic gradient $\mathbf{g}^i_k$\;
    \eIf{$k$ mod $n$=0}
    { $\mathbf{x}_{k+1/2}^i=\sum_{j\in Nb(i)}\pi_{ij}\mathbf{P}_k^{ij}\mathbf{x}^j_{k+1/2}$\;
    }{$\mathbf{x}^i_{k+1/2}=\mathbf{x}^i_{k}$\;}$\mathbf{x}_{k+1}^i=\mathbf{x}_{k+1/2}^i-\alpha \mathbf{g}^i_k$\;
    }
  }
\end{algorithm}
In Algorithm~\ref{alg:dmm_sgd}, Line 5 implies that the frequency of merging step can be implemented periodically, which reduces the number of communication rounds. The term $k + 1/2$ denotes the update for consensus. The permutation matrix $\mathbf{P}_k^{ij}$ in model merging between two different models is essentially obtained through the activation matching~\cite{ainsworth2022git}. Regardless of the specific detail of how permutation is completed among models, $\mathbf{P}_k^{ij}$ always remains a \textit{doubly stochastic} matrix. Thus, to analyze the convergence property of \textsc{DIMAT-SGD}, we are able to couple the mixing matrix $\mathbf{\Pi}$ and permutation matrix $\mathbf{P}_k^{ij}$ in a higher dimension involving multiple agents. Analogously, when the computation step employs MSGD and Adam, the Algorithms~\ref{alg:dmm_msgd} and~\ref{alg:dmm_adam} are attained accordingly. Please see the Supplementary Materials for these two algorithms and their associated analysis in more detail. 

\section{Main Results}
\label{sec:main_results}

\subsection{Assumptions}
Before presenting the main results, we first state several assumptions to serve the analysis. Throughout the rest of the analysis, the following standard assumptions hold true. We also defer all proof to the Supplementary Materials.
\begin{assumption}\label{assumption_1}
Problem~\ref{opt_problem} satisfies the following:
\begin{itemize}
    \item \textbf{Smoothness.} Each function $f^i(\mathbf{x})$ is smooth with modulus $L$.
    \item \textbf{Bounded variances.} There exist $\sigma, \kappa > 0$ such that
    \begin{equation}
        \mathbb{E}_{\xi\sim\mathcal{D}_i}[\|\nabla F^i(\mathbf{x};\xi)-\nabla f^i(\mathbf{x})\|^2]\leq \sigma^2, \forall i,\forall \mathbf{x}.
    \end{equation}
    \begin{equation}
        \frac{1}{N}\sum_{i=1}^N\|\nabla f^i(\mathbf{x})-\nabla f(\mathbf{x})\|^2\leq \kappa^2, \forall \mathbf{x}.
    \end{equation}
\end{itemize}
\end{assumption}
The smoothness in Assumption~\ref{assumption_1} is quite generic in decentralized learning algorithms~\cite{yu2019linear,jiang2017collaborative, esfandiari2021cross} as it provides the guarantee of loss descent for the analysis. $\sigma^2$ signifies the upper bound of variances of stochastic gradients for local agents, while $\kappa^2$ quantifies the gradient diversity between each agent's local objective loss $f^i(\mathbf{x})$, due to the different data distributions. One can also use the bounded second moment of stochastic gradients assumption~\cite{stich2018local}, which is stronger and results in a looser error bound.
\begin{assumption}\label{assumption_2}
   The mixing matrix $\mathbf{\Pi}\in\mathbb{R}^{N\times N}$ is a symmetric doubly stochastic matrix satisfying $\lambda_1(\mathbf{\Pi})=1$ and
   \begin{equation}
       \textnormal{max}\{|\lambda_2(\mathbf{\Pi})|, |\lambda_N(\mathbf{\Pi})|\}\leq \sqrt{\rho}<1,
   \end{equation}
where $0<\rho<1$, $\lambda_l(\cdot)$ is the $l$-th largest eigenvalue of the matrix.
\end{assumption}
The assumption for the mixing matrix $\mathbf{\Pi}$ has been utilized frequently in existing works~\cite{esfandiari2021cross,yu2019linear}. In our analysis, to consider multiple agents, we will have the expanded mixing matrix of the Kronecker product between $\mathbf{\Pi}$ and $\mathbf{I}_d$, $\mathbf{W}=\mathbf{\Pi}\otimes \mathbf{I}_d$, but the magnitudes of eigenvalues of $\mathbf{W}$ remain the same through a known result presented in the sequel and the fact that eigenvalue of $\mathbf{I}_d$ is 1.
\begin{theorem}\cite{schacke2004kronecker}\label{kronecker_prod}
    Let $\mathbf{C}\in\mathbb{R}^{N\times N}$ and $\mathbf{D}\in\mathbb{R}^{d\times d}$, with eigenvalue $\lambda\in s(\mathbf{C})$ with corresponding eigenvector $x\in\mathbb{C}^{N}$, and $\mu\in s(\mathbf{D})$ with corresponding eigenvector $y\in\mathbb{C}^{d}$, where $s(\cdot)$ signifies the spectrum of a matrix. Then $\lambda\mu$ is an eigenvalue of $\mathbf{C}\otimes \mathbf{D}$ with corresponding eigenvector $x\otimes y\in\mathbb{C}^{dN}$. Any eigenvalue of $\mathbf{C}\otimes \mathbf{D}$ arises as such a product of eigenvalues of $\mathbf{C}$ and $\mathbf{D}$.
\end{theorem}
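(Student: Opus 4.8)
The plan is to split the claim into two parts: first, that each product $\lambda\mu$ is an eigenvalue of $\mathbf{C}\otimes\mathbf{D}$ with the advertised eigenvector, and second, that no other eigenvalues can occur. The first part follows directly from the mixed-product property of the Kronecker product, namely $(\mathbf{A}\otimes\mathbf{B})(\mathbf{E}\otimes\mathbf{F}) = (\mathbf{A}\mathbf{E})\otimes(\mathbf{B}\mathbf{F})$ whenever the ordinary products are defined. Applying this with the single-column factors $x$ and $y$ gives
\[
(\mathbf{C}\otimes\mathbf{D})(x\otimes y) = (\mathbf{C}x)\otimes(\mathbf{D}y) = (\lambda x)\otimes(\mu y) = \lambda\mu\,(x\otimes y),
\]
so $x\otimes y$ is an eigenvector of $\mathbf{C}\otimes\mathbf{D}$ for the eigenvalue $\lambda\mu$, provided $x\otimes y\neq\mathbf{0}$, which holds since $x\neq\mathbf{0}$ and $y\neq\mathbf{0}$.

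For the converse, that every eigenvalue of $\mathbf{C}\otimes\mathbf{D}$ is of the form $\lambda\mu$, I would invoke the Schur triangularization theorem. Write $\mathbf{C}=\mathbf{U}_C\mathbf{T}_C\mathbf{U}_C^{*}$ and $\mathbf{D}=\mathbf{U}_D\mathbf{T}_D\mathbf{U}_D^{*}$ with $\mathbf{U}_C,\mathbf{U}_D$ unitary and $\mathbf{T}_C,\mathbf{T}_D$ upper triangular, carrying the eigenvalues of $\mathbf{C}$ and $\mathbf{D}$ on their diagonals. Using the mixed-product property together with $(\mathbf{A}\otimes\mathbf{B})^{*}=\mathbf{A}^{*}\otimes\mathbf{B}^{*}$, one obtains
\[
\mathbf{C}\otimes\mathbf{D} = (\mathbf{U}_C\otimes\mathbf{U}_D)(\mathbf{T}_C\otimes\mathbf{T}_D)(\mathbf{U}_C\otimes\mathbf{U}_D)^{*}.
\]
The key structural facts I would then verify are that $\mathbf{U}_C\otimes\mathbf{U}_D$ is unitary (the Kronecker product of unitary matrices is unitary) and that $\mathbf{T}_C\otimes\mathbf{T}_D$ is upper triangular (the Kronecker product of upper triangular matrices is upper triangular). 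Consequently this is itself a Schur decomposition of $\mathbf{C}\otimes\mathbf{D}$, and the spectrum of $\mathbf{C}\otimes\mathbf{D}$ is read off the diagonal of $\mathbf{T}_C\otimes\mathbf{T}_D$. Because the diagonal of a Kronecker product of triangular matrices consists exactly of the pairwise products of the two diagonals, these entries are precisely the $Nd$ numbers $\lambda_i\mu_j$ with $\lambda_i\in s(\mathbf{C})$ and $\mu_j\in s(\mathbf{D})$, counted with the appropriate multiplicity. Hence no eigenvalue other than such products can appear.

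I expect the main obstacle to be the completeness (second) direction rather than the first. The forward inclusion is a one-line computation, but merely exhibiting the $Nd$ products as eigenvalues does not by itself rule out additional eigenvalues or settle multiplicities; the Schur argument is what closes this gap, since it simultaneously triangularizes the factors and shows the full $Nd\times Nd$ spectrum is exhausted by the products. The only care needed is to confirm the two auxiliary closure properties, that unitarity and upper-triangularity are each preserved under $\otimes$, and to track multiplicities so that the count matches the dimension $Nd$; both are routine once the decomposition above is in place.
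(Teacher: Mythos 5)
Your proof is correct. The paper itself gives no proof of this statement---it is quoted verbatim as a known result from the cited reference on the Kronecker product---so there is no internal argument to compare against; your two-step argument (mixed-product identity $(\mathbf{C}\otimes\mathbf{D})(x\otimes y)=(\mathbf{C}x)\otimes(\mathbf{D}y)$ for the forward inclusion, and the simultaneous Schur triangularization $\mathbf{C}\otimes\mathbf{D}=(\mathbf{U}_C\otimes\mathbf{U}_D)(\mathbf{T}_C\otimes\mathbf{T}_D)(\mathbf{U}_C\otimes\mathbf{U}_D)^{*}$ for completeness and multiplicities) is precisely the standard proof given in that reference and in Horn--Johnson, and you correctly identify that the Schur step is what is actually needed, since exhibiting the $Nd$ products as eigenvalues does not by itself exhaust the spectrum.
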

One immediate outcome from Theorem~\ref{kronecker_prod} is that $\textnormal{max}\{|\lambda_2(\mathbf{W})|, |\lambda_{dN}(\mathbf{W})|\}\leq \sqrt{\rho}<1$.
Define the permutation matrix for all agents at the $k$-th time step as $\mathbf{P}_k\in\mathbb{R}^{dN\times dN}$. Based on the algorithmic frameworks, it is known that in the analysis we will have to deal with the consecutive matrix product in a form of $\prod_{\tau=1}^k\mathbf{W}\mathbf{P}_\tau$, which is the non-trivial part and distinguished from the existing analysis. 
However, $\mathbf{P}_\tau$ is a symmetric doubly stochastic matrix such that $\mathbf{W}\mathbf{P}_\tau$ is a doubly stochastic matrix. Following similarly Assumption~\ref{assumption_2}, we set another assumption for $\mathbf{W}\mathbf{P}_k, \forall k\geq 1$ in the sequel.
\begin{assumption}\label{assumption_3}
   The matrix $\mathbf{W}\mathbf{P}_k\in\mathbb{R}^{dN\times dN}, \forall k\geq 1$ is a symmetric doubly stochastic matrix satisfying $\lambda_1(\mathbf{W}\mathbf{P}_k)=1$ and
   \begin{equation}
       \textnormal{max}\{|\lambda_2(\mathbf{W}\mathbf{P}_k)|, |\lambda_{dN}(\mathbf{W}\mathbf{P}_k)|\}\leq \sqrt{\rho'}<1,
   \end{equation}
where $0<\rho'<1$, $\lambda_l(\cdot)$ is the $l$-th largest eigenvalue of the matrix.
\end{assumption}
The above assumption is mild as it only extends the similar conclusion in Assumption~\ref{assumption_2} to the matrix product. Assumption~\ref{assumption_3} not only characterizes the convergence for \textsc{DIMAT}, but also providing a justification of $\rho'\leq\rho$, which implies the tighter error bounds presented in the sequel. This holds due to the relationship between singular value and eigenvalue, symmetric properties, the fact that eigenvalues of a permutation matrix lie on the unit circle, and Courant–Fischer–Weyl Min-Max Principle~\cite{il2022min}. The detailed analysis is deferred to the Supplementary Materials.
\subsection{Convergence Analysis}
Throughout the analysis, we define $f^*:=f(\mathbf{x}^*)>-\infty$, where $\mathbf{x}^*=\textnormal{argmin}_{\mathbf{x}\in\mathbb{R}^d}f(\mathbf{x})$. We also set $n=1$ in the analysis.
\begin{theorem}\label{dmm-sgd-theo}
    Let Assumptions~\ref{assumption_1} and~\ref{assumption_3} hold. If the step size $\alpha\leq \textnormal{min }\{\frac{1-\sqrt{\rho'}}{4\sqrt{2}L}, \frac{\sqrt{(1-\sqrt{\rho'})^4+64(1-\sqrt{\rho'})^2}-(1-\sqrt{\rho'})^2}{32L}\}$ in Algorithm~\ref{alg:dmm_sgd}, then for all $K\geq 1$, the following relationship holds true:
    \begin{equation}
    \begin{split}
        \frac{1}{K}\sum_{k=1}^K\mathbb{E}[\|\nabla f(\bar{\mathbf{x}}_k)\|^2]&\leq \frac{2(f(\bar{\mathbf{x}}_0)-f^*)}{\alpha K}+\frac{4\alpha^2\sigma^2L^2}{1-\rho'}\\&+\frac{16\alpha^2\kappa^2L^2}{(1-\sqrt{\rho'})^2}+\frac{L\alpha\sigma^2}{N},
    \end{split}
    \end{equation}
where $\bar{\mathbf{x}}_k=\frac{1}{N}\sum_{i=1}^N\mathbf{x}^i_k$.
\end{theorem}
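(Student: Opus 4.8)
The plan is to follow the canonical ``descent plus consensus-error'' template for decentralized nonconvex SGD, but carrying the time-varying doubly stochastic factors $\mathbf{W}\mathbf{P}_k$ in place of a fixed gossip matrix. I would stack the local iterates into $\mathbf{X}_k=[\mathbf{x}^1_k;\dots;\mathbf{x}^N_k]\in\mathbb{R}^{dN}$ and the stochastic gradients into $\mathbf{G}_k$, so that with $n=1$ the update reads $\mathbf{X}_{k+1}=\mathbf{W}\mathbf{P}_k\mathbf{X}_k-\alpha\mathbf{G}_k$. Writing $\mathbf{J}=\frac{1}{N}\mathbf{1}\mathbf{1}^\top\otimes\mathbf{I}_d$ for the consensus projector, the first step exploits that each $\mathbf{W}\mathbf{P}_k$ is doubly stochastic (Assumption~\ref{assumption_3}), so $\mathbf{J}\,\mathbf{W}\mathbf{P}_k=\mathbf{J}$ and the network average obeys the clean recursion $\bar{\mathbf{x}}_{k+1}=\bar{\mathbf{x}}_k-\alpha\bar{\mathbf{g}}_k$ with $\bar{\mathbf{g}}_k=\frac{1}{N}\sum_i\mathbf{g}^i_k$. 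This reduces the analysis of the merging step to the behavior of the projector $\mathbf{I}-\mathbf{J}$.

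Next I would derive the per-step descent by applying $L$-smoothness of $f$ to $\bar{\mathbf{x}}_{k+1}$ and taking conditional expectation, using $\mathbb{E}[\bar{\mathbf{g}}_k]=\frac{1}{N}\sum_i\nabla f^i(\mathbf{x}^i_k)$. Since this is the average of gradients at the dispersed points $\mathbf{x}^i_k$ rather than at $\bar{\mathbf{x}}_k$, I would add and subtract $\nabla f^i(\bar{\mathbf{x}}_k)$ and invoke smoothness to trade the resulting cross term for the \emph{consensus error} $\frac{1}{N}\sum_i\|\mathbf{x}^i_k-\bar{\mathbf{x}}_k\|^2$. The second moment $\mathbb{E}[\|\bar{\mathbf{g}}_k\|^2]$ is bounded via mutual independence of the $\mathbf{g}^i_k$: the noise averages to $\sigma^2/N$, which is the origin of the $\frac{L\alpha\sigma^2}{N}$ term, plus the squared mean gradient.

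The crux is the consensus error. Unrolling $(\mathbf{I}-\mathbf{J})\mathbf{X}_k$ and using $(\mathbf{W}\mathbf{P}_t)\mathbf{J}=\mathbf{J}$, the projector commutes past each factor and the error becomes $-\alpha\sum_{\tau}\big(\prod_{t}\mathbf{W}\mathbf{P}_t\big)(\mathbf{I}-\mathbf{J})\mathbf{G}_\tau$. The main obstacle, and the part genuinely distinct from prior analyses, is controlling the consecutive product $\prod_\tau\mathbf{W}\mathbf{P}_\tau$ with a \emph{different} permutation at each step. I would resolve this through Assumption~\ref{assumption_3}: each $\mathbf{W}\mathbf{P}_t$ is symmetric with leading eigenvector $\mathbf{1}$ and all remaining eigenvalues bounded in magnitude by $\sqrt{\rho'}$, so every factor maps the consensus complement into itself with operator norm at most $\sqrt{\rho'}$; submultiplicativity then gives $\|\prod_{t=\tau+1}^{k-1}(\mathbf{W}\mathbf{P}_t)(\mathbf{I}-\mathbf{J})\|\le(\sqrt{\rho'})^{\,k-1-\tau}$ irrespective of how the individual permutations vary. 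Splitting $\mathbf{G}_\tau$ into a noise part, a gradient-diversity part, and a consensus part (the last annihilated by $\mathbf{I}-\mathbf{J}$), the independent noise yields a single geometric sum $\sum(\sqrt{\rho'})^{2m}\sim\frac{1}{1-\rho'}$, whereas the deterministic diversity part, bounded by Cauchy–Schwarz on a norm-of-sum, yields a squared geometric sum $\big(\sum(\sqrt{\rho'})^{m}\big)^2\sim\frac{1}{(1-\sqrt{\rho'})^2}$. These two geometric sums are precisely what produce the $\frac{1}{1-\rho'}$ and $\frac{1}{(1-\sqrt{\rho'})^2}$ denominators appearing in the statement, with $\sigma^2$ and $\kappa^2$ as their respective numerators.

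Finally I would substitute the consensus-error bound into the descent inequality, sum over $k=1,\dots,K$, and telescope the $f(\bar{\mathbf{x}}_k)$ terms to obtain the leading $\frac{2(f(\bar{\mathbf{x}}_0)-f^*)}{\alpha K}$ term. The two step-size constraints on $\alpha$ are exactly those needed to keep the coefficient of $\frac{1}{K}\sum_k\|\nabla f(\bar{\mathbf{x}}_k)\|^2$ positive and to absorb both the squared-mean-gradient term from $\mathbb{E}[\|\bar{\mathbf{g}}_k\|^2]$ and the feedback between consensus error and gradient magnitude. I expect the delicate bookkeeping to be this last coupling: bounding the consensus error introduces $\|\nabla f(\bar{\mathbf{x}}_k)\|^2$ back into the estimate, so the upper bound on $\alpha$ is chosen specifically to decouple the two and leave a clean recursion that rearranges, after dividing by $\alpha K/2$, into the claimed inequality.
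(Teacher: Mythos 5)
Your proposal is correct and follows essentially the same route as the paper: stack the iterates, unroll $(\mathbf{I}-\mathbf{Q})\mathbf{X}_k$ through the time-varying products $\prod_t\mathbf{W}\mathbf{P}_t$, use the symmetry and spectral bound of Assumption~\ref{assumption_3} to get the per-factor contraction $(\sqrt{\rho'})$ on the consensus complement (the paper's Lemma~\ref{lemma_2}), split the gradient stack into noise and mean-gradient parts to obtain the $\frac{1}{1-\rho'}$ and $\frac{1}{(1-\sqrt{\rho'})^2}$ terms, and absorb the resulting gradient-feedback term via the step-size condition before telescoping. The decomposition, the key spectral lemma, and the bookkeeping of the two geometric sums all match the paper's argument.
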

\begin{remark}\label{remark_1}
    $\mathbb{E}[\|\nabla f(\bar{\mathbf{x}}_k)\|^2]$ has generically been applied as a metric to evaluate the convergence behavior in decentralized learning algorithms. In this study, the usage of stochastic model merging should have induced another metric different from $\mathbb{E}[\|\nabla f(\bar{\mathbf{x}}_k)\|^2]$. However, we still adopt it to assess the convergence rate since it only acts at the end of each iteration for all models. Within each iteration, the vanilla weight averaging has been replaced with the model merging.
\end{remark}
\begin{corollary}\label{dmm-sgd-coro}
        Let Assumptions~\ref{assumption_1} and~\ref{assumption_3} hold. If step size $\alpha=\mathcal{O}(\sqrt{\frac{N}{K}})$ in Algorithm~\ref{alg:dmm_sgd}, then for all $K\geq\textnormal{max}\{\frac{32NL^2}{\upzeta},\frac{512NL^2}{\upzeta^2+32\upzeta-\upzeta\sqrt{\upzeta^2+64\upzeta}}\}$, where $\upzeta=(1-\sqrt{\rho'})^2$, we have
        \begin{equation}
        \begin{split}
            \frac{1}{K}\sum_{k=1}^K\mathbb{E}[\|\nabla f(\bar{\mathbf{x}}_k)\|^2]&\leq \mathcal{O}(\sqrt{\frac{1}{NK}}+\frac{N}{(1-\rho')K}\\&+\frac{N}{(1-\sqrt{\rho'})^2K}).
        \end{split}
        \end{equation}
\end{corollary}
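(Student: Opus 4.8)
The plan is to obtain Corollary~\ref{dmm-sgd-coro} directly from Theorem~\ref{dmm-sgd-theo} by inserting the prescribed step size $\alpha=\mathcal{O}(\sqrt{N/K})$ into the four-term bound and then estimating each term asymptotically. Writing $\upzeta:=(1-\sqrt{\rho'})^2$ so that $\sqrt{\upzeta}=1-\sqrt{\rho'}>0$ by Assumption~\ref{assumption_3}, and taking the concrete choice $\alpha=\sqrt{N/K}$ (the $\mathcal{O}$ suppresses only the problem constants $f(\bar{\mathbf{x}}_0)-f^*$, $\sigma$, $\kappa$, $L$), the optimization-error term becomes $\frac{2(f(\bar{\mathbf{x}}_0)-f^*)}{\alpha K}=\mathcal{O}(\sqrt{1/(NK)})$ and the sampling term $\frac{L\alpha\sigma^2}{N}=\mathcal{O}(\sqrt{1/(NK)})$, so both fold into the leading $\sqrt{1/(NK)}$ contribution. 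Since the remaining two terms scale with $\alpha^2\sim N/K$, they yield $\frac{4\alpha^2\sigma^2L^2}{1-\rho'}=\mathcal{O}(\frac{N}{(1-\rho')K})$ and $\frac{16\alpha^2\kappa^2L^2}{(1-\sqrt{\rho'})^2}=\mathcal{O}(\frac{N}{(1-\sqrt{\rho'})^2K})$, and summing reproduces the claimed rate.

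The substantive part of the argument is to confirm that $\alpha=\sqrt{N/K}$ actually respects the step-size ceiling imposed by Theorem~\ref{dmm-sgd-theo}, i.e. $\alpha\leq\min\{\frac{1-\sqrt{\rho'}}{4\sqrt{2}L},\frac{\sqrt{\upzeta^2+64\upzeta}-\upzeta}{32L}\}$; each of the two ceilings will be shown equivalent to one of the two lower bounds on $K$ in the corollary. For the first ceiling, inserting $\alpha=\sqrt{N/K}$ and squaring gives $N/K\leq\upzeta/(32L^2)$, which rearranges exactly to $K\geq 32NL^2/\upzeta$, the first threshold.

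For the second ceiling, the key manipulation is to square $\sqrt{N/K}\leq(\sqrt{\upzeta^2+64\upzeta}-\upzeta)/(32L)$ to get $N/K\leq(\sqrt{\upzeta^2+64\upzeta}-\upzeta)^2/(1024L^2)$, then expand $(\sqrt{\upzeta^2+64\upzeta}-\upzeta)^2=2\upzeta^2+64\upzeta-2\upzeta\sqrt{\upzeta^2+64\upzeta}$ and cancel the factor of two against $1024L^2$, producing $N/K\leq(\upzeta^2+32\upzeta-\upzeta\sqrt{\upzeta^2+64\upzeta})/(512L^2)$; inverting this gives precisely $K\geq 512NL^2/(\upzeta^2+32\upzeta-\upzeta\sqrt{\upzeta^2+64\upzeta})$, the second threshold. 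A rationalization $\sqrt{\upzeta^2+64\upzeta}-\upzeta=64\upzeta/(\sqrt{\upzeta^2+64\upzeta}+\upzeta)$ confirms this numerator is positive, so the bound is well defined. Taking the maximum of the two thresholds then guarantees admissibility of the step size for every $K$ in the stated range, which closes the proof. I expect the main obstacle to be the bookkeeping in this final expansion: the constants must be tracked exactly—rather than merely up to an $\mathcal{O}(1)$ factor—so that the denominator emerges as $\upzeta^2+32\upzeta-\upzeta\sqrt{\upzeta^2+64\upzeta}$ and not some rescaled version of it.
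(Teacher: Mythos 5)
Your proposal is correct and follows the same route as the paper, which simply states that the corollary is obtained by substituting $\alpha=\mathcal{O}(\sqrt{N/K})$ into the bound of Theorem~\ref{dmm-sgd-theo}. Your explicit verification that the two lower bounds on $K$ are exactly the squared-and-rearranged forms of the two step-size ceilings (including the expansion $(\sqrt{\upzeta^2+64\upzeta}-\upzeta)^2=2\upzeta^2+64\upzeta-2\upzeta\sqrt{\upzeta^2+64\upzeta}$ and the positivity check) is detail the paper omits but is consistent with, and completes, its argument.
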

\begin{remark}\label{remark_2}
    Corollary~\ref{dmm-sgd-coro} implies that \textsc{DIMAT-SGD} achieves the best available convergence rate but yields a tighter error bound due to $\rho'\leq\rho$, which suggests the impact of model merging on the topology. Before the optimization enters into the regime where $\mathcal{O}(\sqrt{\frac{1}{NK}
    })$ has dominated, \textsc{DIMAT-SGD} reduces the optimization error, as $1-\rho'$ is larger. This intuitively tells us that the \textsc{DIMAT} framework leads to the faster initial performance gain with less communication and computation cost.
    On the other hand, when $K$ is sufficiently large such that $\mathcal{O}(\sqrt{\frac{1}{NK}
    })$ dominates the convergence, the linear speed up due to $N$ is achieved accordingly. In this regime, the impact of topology is much smaller, which indicates that \textsc{DIMAT-SGD} and \textsc{CDSGD}~\cite{jiang2017collaborative,yu2019linear} will have ultimately similar performance. Our empirical results will evidently support the theoretical findings. The similar theoretical implications also apply to \textsc{DIMAT-MSGD} and \textsc{DIMAT-Adam}, given additional analysis presented in the Supplementary Materials.
\end{remark}


\begin{figure*}[ht!]
    \centering

    \begin{subfigure}[b]{0.32\textwidth}
        \centering
        \includegraphics[width=\linewidth]{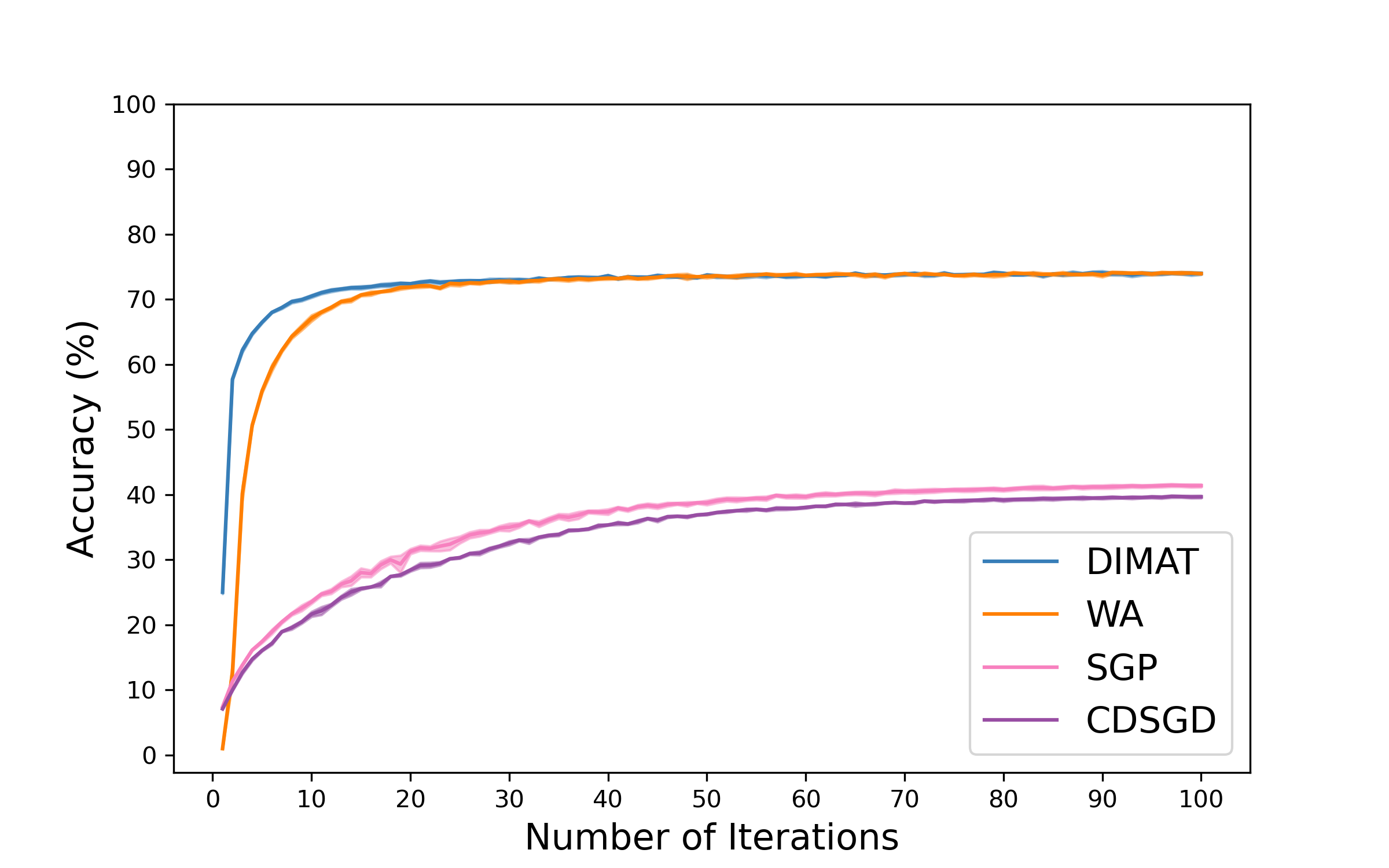}
        \caption{FC Topology for 5 Agents}
        \label{fig:fig:RN25IIDFC}
    \end{subfigure}
    \begin{subfigure}[b]{0.32\textwidth}
        \centering
        \includegraphics[width=\linewidth]{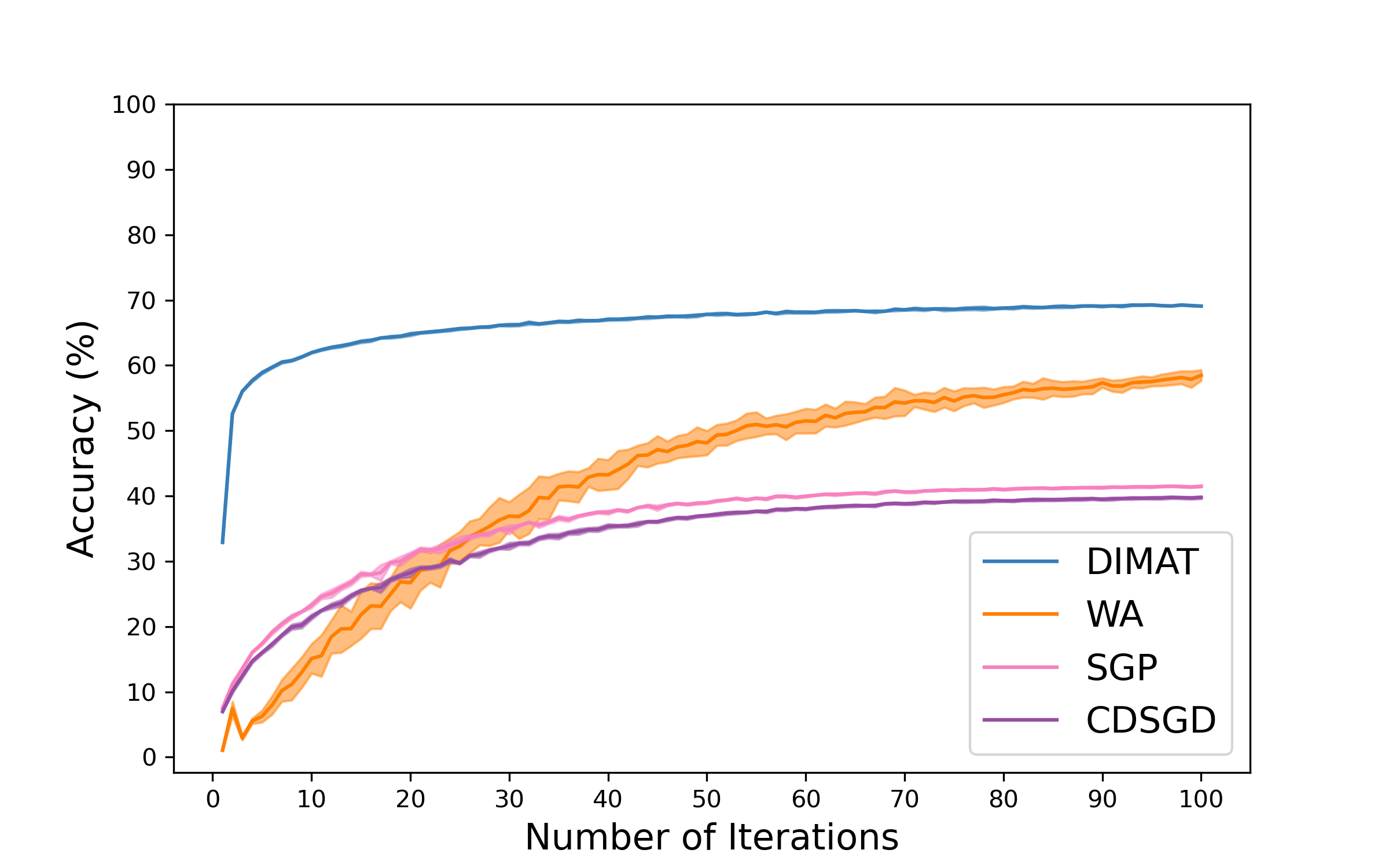}
        \caption{Ring Topology for 5 Agents}
        \label{fig:RN25IIDR}
    \end{subfigure}
    \begin{subfigure}[b]{0.32\textwidth}
        \centering
        \includegraphics[width=\linewidth]{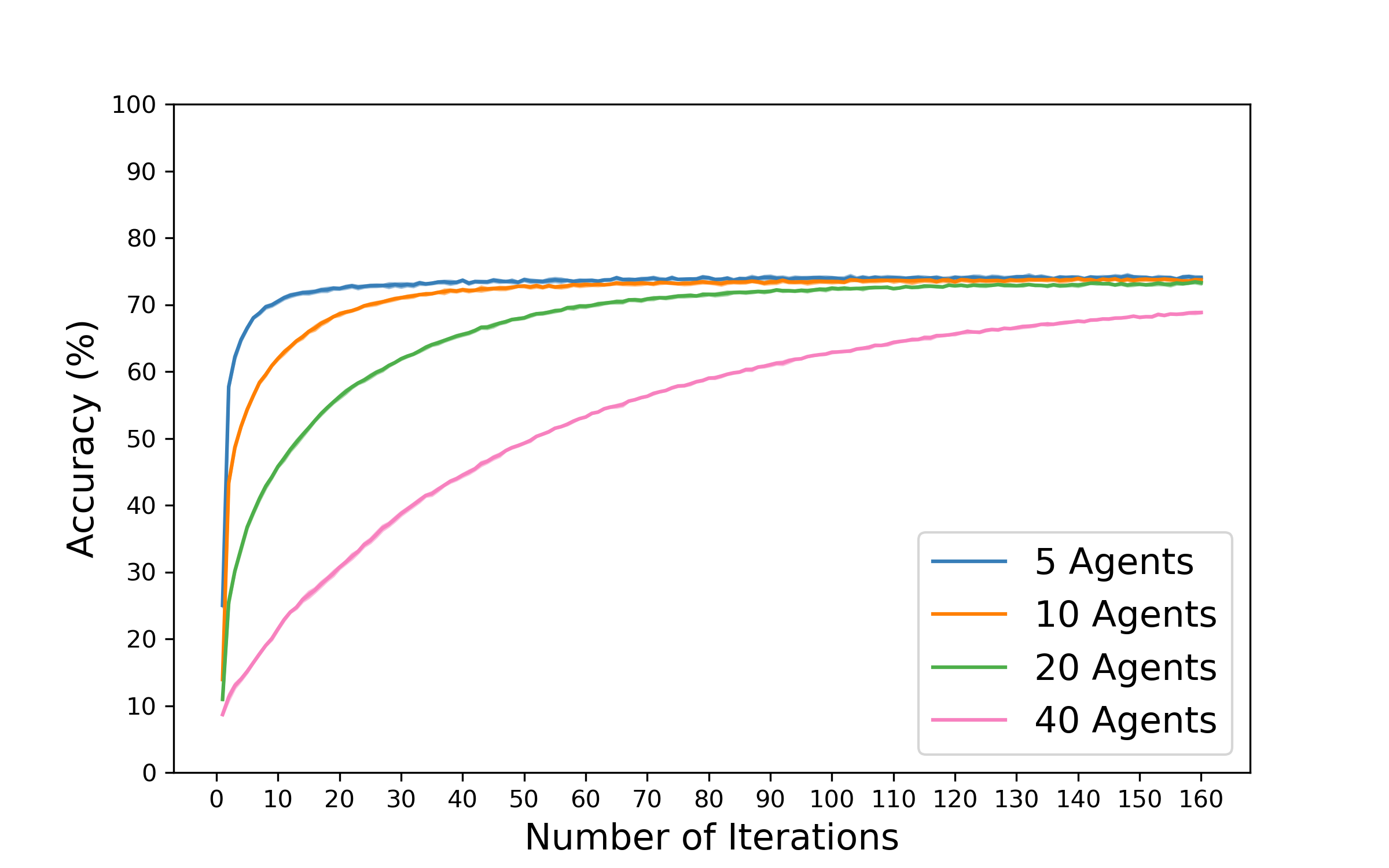}
        \caption{Scalability Analysis for FC Topology}
        \label{fig:RNIIDsc}
    \end{subfigure}
    
    \caption{Comparing algorithmic accuracy (mean$\pm$std) in fully connected (FC) (a) and ring (b) topologies with ResNet-20 architecture on CIFAR-100 IID data. The scalability with ResNet-20 architecture on CIFAR-100 IID data and fully connected topology is shown in (c).}
    \label{fig:RNIIDC}
\end{figure*}
\begin{figure*}[ht!]
    \centering

    \begin{subfigure}[b]{0.33\textwidth}
        \centering
        \includegraphics[width=\linewidth]{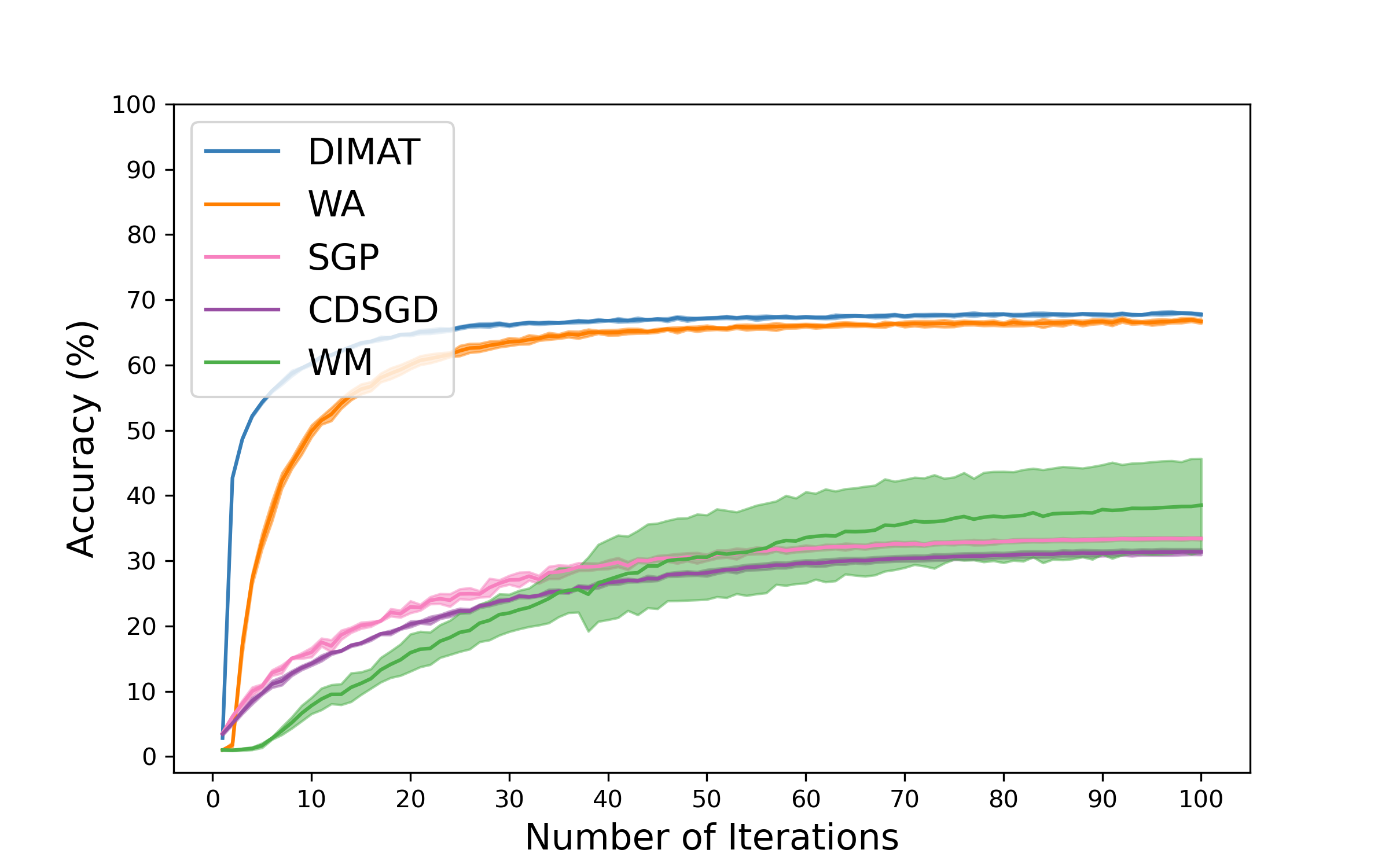}
        \caption{FC Topology for 5 Agents}
        \label{fig:fig:VV25IIDFC}
    \end{subfigure}
    \begin{subfigure}[b]{0.33\textwidth}
        \centering
        \includegraphics[width=\linewidth]{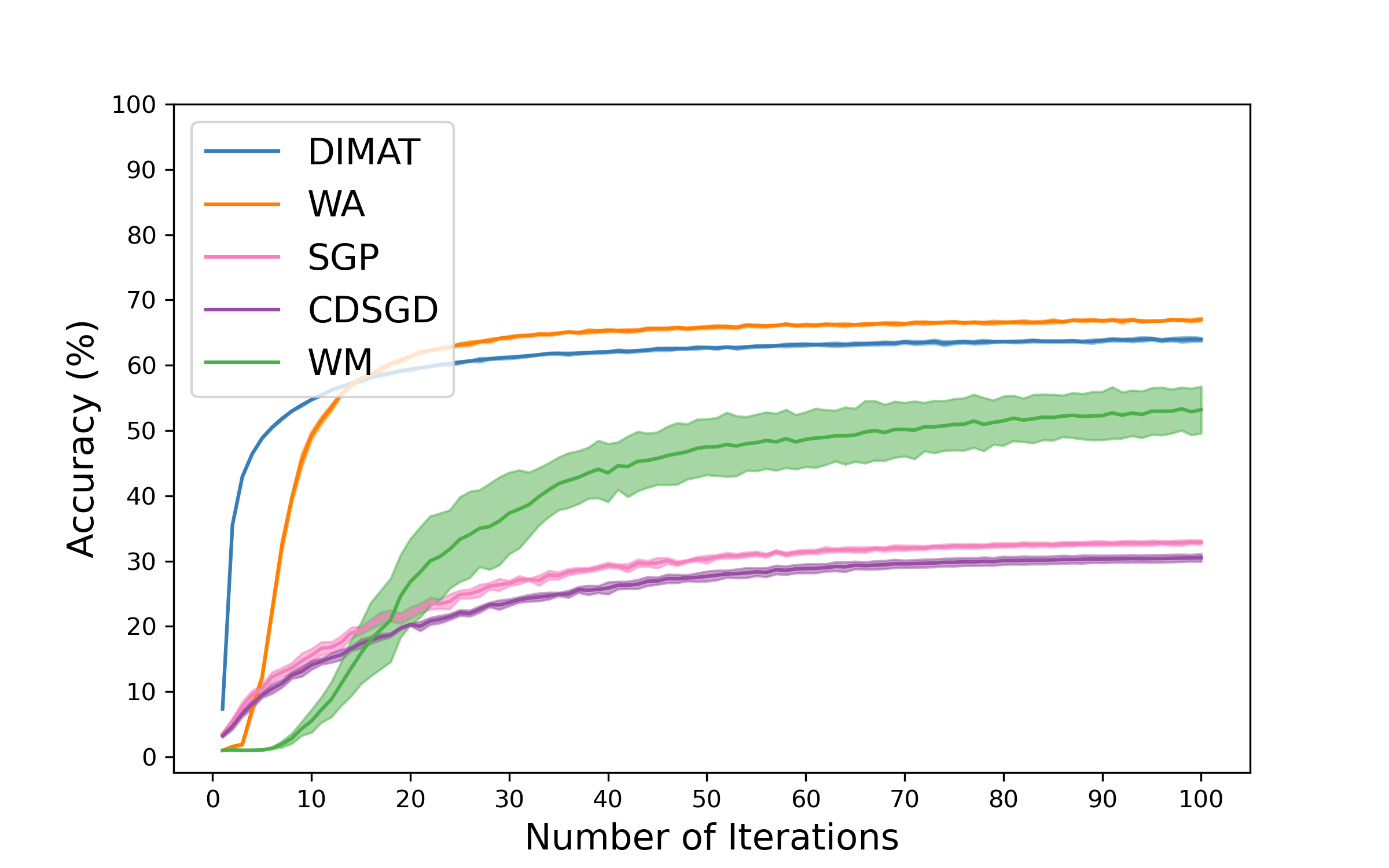}
        \caption{Ring Topology for 5 Agents}
        \label{fig:VV25IIDR}
    \end{subfigure}
    \begin{subfigure}[b]{0.33\textwidth}
        \centering
        \includegraphics[width=\linewidth]{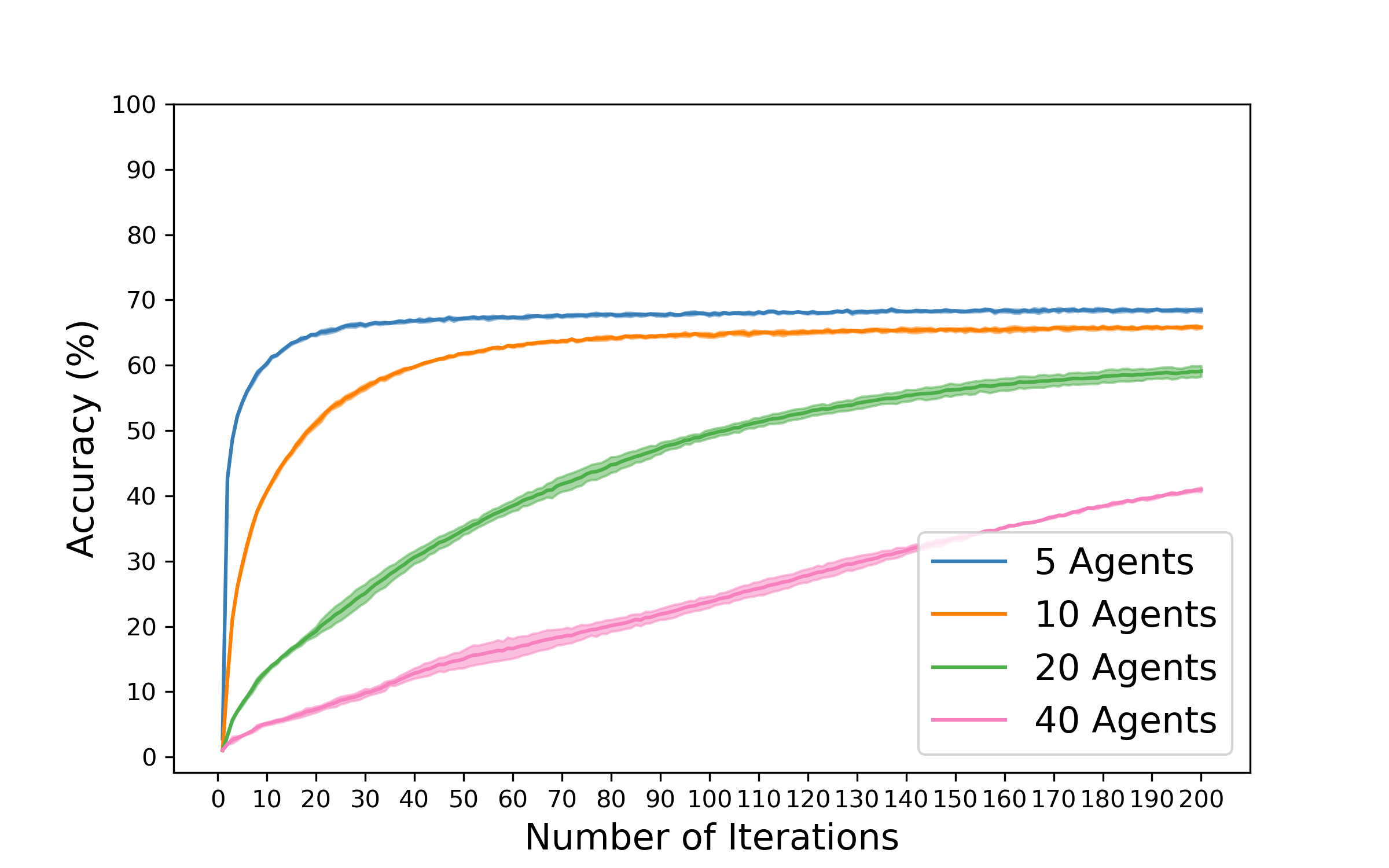}
        \caption{Scalability Analysis for FC Topology}
        \label{fig:VVIIDsc}
    \end{subfigure}
    
    \caption{Comparing algorithmic accuracy (mean$\pm$std) in fully connected (FC) (a) and ring (b) topologies with VGG16 architecture on CIFAR-100 IID data. The scalability with VGG16 architecture on CIFAR-100 IID data and fully connected topology is shown in (c).}
    \label{fig:VVIIDC}
\end{figure*}
\section{Experimental Results}
In this section, we empirically analyze the performance of DIMAT. Our code is available \href{https://github.com/nsaadati/DIMAT}{\textcolor{blue}{here on GitHub}}. We compare the effectiveness of our algorithms with other baseline decentralized algorithms such as Stochastic Gradient Push (SGP)~\cite{assran2019stochastic}, consensus-based distributed SGD (CDSGD)~\cite{jiang2017collaborative} and Cross-Gradient Aggregation (CGA)~\cite{esfandiari2021cross}. However, CGA did not converge in the explored setting and has been omitted. Further exploration is warranted to conduct a meaningful comparison involving CGA. Additionally, we introduce other baselines, Weight Matching (WM), established by Ainsworth et al.~\cite{ainsworth2022git}, and Weight Averaging (WA), inspired by consensus-based distributed Adam (CDAdam)~\cite{nazari2022dadam}. These baselines share the same setup as DIMAT but, instead of relying on activations, focus on inspecting the weights of the agents and averaging the weights of the agents, respectively. The goal in WM is to maximize the dot product between the vectorized weights of agents, formulated as a sum of bilinear assignments problem. We include these additional baselines to have a fair and direct comparison of the merging method used within the DIMAT framework. It is worth noting that WM has not been implemented for ResNet in the existing literature, and such an implementation is nontrivial. Consequently, we do not provide comparison results for WM in ResNet.  

\textbf{Experimental Setup:} Our experimental scope covered datasets such as CIFAR-100, CIFAR-10, and Tiny ImageNet. CIFAR10 and Tiny ImageNet results can be found in the Supplementary Materials. We assessed system scalability by varying the number of agents—examining scenarios with 5 to 10, 20, and 40 agents. To evaluate the algorithm's robustness across diverse network architectures, we employed three distinct model architectures: VGG16~\cite{zhang2015accelerating}, ResNet-20~\cite{he2015deep}, and ResNet50~\cite{he2015deep}. ResNet50 results can be found in the Supplementary Materials. Throughout the experiments, we initiated with pre-trained agents, each agent having undergone individual pre-training on their respective datasets for 100 epochs. All experiments were conducted five times for an average of 100 iterations to ensure result reliability. Each iteration includes 2 training epochs, where the agents are trained on their respective datasets, and a merge operation.

\textbf{Communication and Computational Overhead:} 

DIMAT notably requires less communication and computation than traditional decentralized learning methods. 
DIMAT has a merge operation once every 2 training epochs, and as such, has $0.5 * (n-1)$ communication rounds per epoch for fully connected topology and $0.5 * 2$ communication rounds per epoch for ring topology, where $n$ is the number of agents. On the other hand, methods such as CDAdam and SGP have communication steps every mini-batch, which results in significantly more communication rounds per epoch than DIMAT. 
Further visual comparisons of communication are available in the Supplementary Materials, as well as the computational results.


To evaluate DIMAT's performance, we conducted experiments comparing results under both IID and non-IID data distributions. This assessment provides insights into the algorithm's adaptability to varying data characteristics. In the IID setting, data is uniformly distributed among agents, with each agent responsible for a subset of data samples, ensuring balanced class representation. For example, with 5 agents, each handles 1/5 of the data for every class. Conversely, in the non-IID setup, significant class imbalances occur, with each agent possessing data from a smaller subset of classes. In a scenario with 100 classes and 5 agents, each agent is allocated data from 20 distinct classes, potentially creating a more challenging learning environment due to the presence of unseen data and classes.
\begin{table}[ht!]
    \caption{Comparison of test accuracy (mean$\pm$std) on CIFAR-100 with ResNet-20 architecture for 5 agents under both IID and non-IID data distribution, considering fully connected (FC) and ring topologies.}
    \label{tab:accuracy-comparison-RN20}
\resizebox{\columnwidth}{!}{%
\begin{tabular}{lllll}
\hline
\multirow{2}{*}{\textbf{Algorithm}} & \multicolumn{2}{c}{\textbf{IID}}                  & \multicolumn{2}{c}{\textbf{non-IID}}              \\ \cline{2-5} 
                                    & FC                      & Ring                    & FC                      & Ring                    \\ \hline
SGP                                 & 41.39$\pm$0.24          & 41.48$\pm$0.14          & 12.85$\pm$0.14          & 11.95$\pm$0.33          \\
CDSGD                                 & 39.69$\pm$0.20          & 39.77$\pm$0.14          & 12.31$\pm$0.19          & 10.51$\pm$0.38          \\
WA                              & 73.97$\pm$0.12          & 58.49$\pm$0.81          & \textbf{62.7$\pm$0.44}          & 8.00$\pm$0.51          \\
DIMAT (ours)                        & \textbf{73.99$\pm$0.18} & \textbf{69.07$\pm$0.01} & 
62.05$\pm$0.64& \textbf{20.45$\pm$0.65} \\ \hline
\end{tabular}%
}
\end{table}
\begin{table}[ht!]
    \caption{Comparison of test accuracy (mean$\pm$std) on CIFAR-100 with VGG16 architecture for 5 agents under both IID and non-IID data distribution, considering fully connected (FC) and ring topologies.}
    \label{tab:accuracy-comparison-VGG16}
\resizebox{\columnwidth}{!}{%
\begin{tabular}{lllll}
\hline
\multirow{2}{*}{\textbf{Algorithm}} & \multicolumn{2}{c}{\textbf{IID}}                  & \multicolumn{2}{c}{\textbf{non-IID}}              \\ \cline{2-5} 
                                    & FC                      & Ring                    & FC                      & Ring                    \\ \hline
SGP                                 & 33.42$\pm$0.25          & 33.42$\pm$0.25          & 10.00$\pm$0.17          & 10.00$\pm$0.16          \\
CDSGD                                 & 31.37$\pm$0.47          & 30.53$\pm$0.58          & 9.81$\pm$0.15          & 9.53$\pm$0.10           \\
WM                                  & 38.13$\pm$6.67          & 52.58$\pm$2.34          & 9.37$\pm$0.54           & 13.3$\pm$0.54           \\
WA                              & 66.70$\pm$0.34          & \textbf{67.02$\pm$0.24} & 49.48$\pm$0.68          & 25.15$\pm$0.27          \\
DIMAT (ours)                        & \textbf{68.08$\pm$0.24} & 63.76$\pm$0.30          & \textbf{52.99$\pm$0.24} & \textbf{25.28$\pm$0.13} \\ \hline
\end{tabular}%
}
\end{table}

\begin{figure*}[ht!]    
    \begin{subfigure}[b]{0.49\textwidth}
        \centering
        \includegraphics[width=\linewidth]{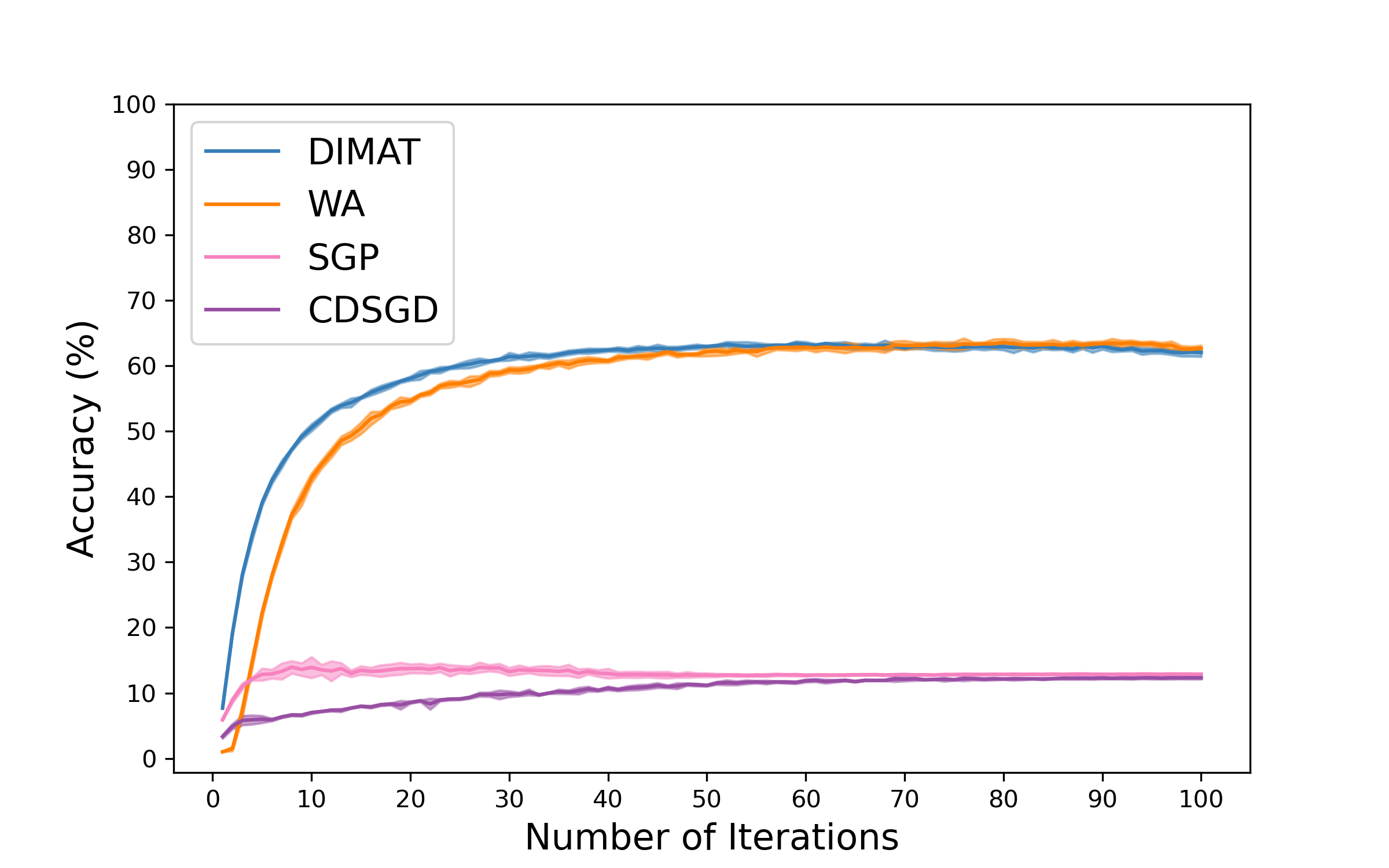}
        \caption{ResNet-20}
        \label{fig:RN25NI}
    \end{subfigure}
    \begin{subfigure}[b]{0.49\textwidth}
        \centering
        \includegraphics[width=\linewidth]{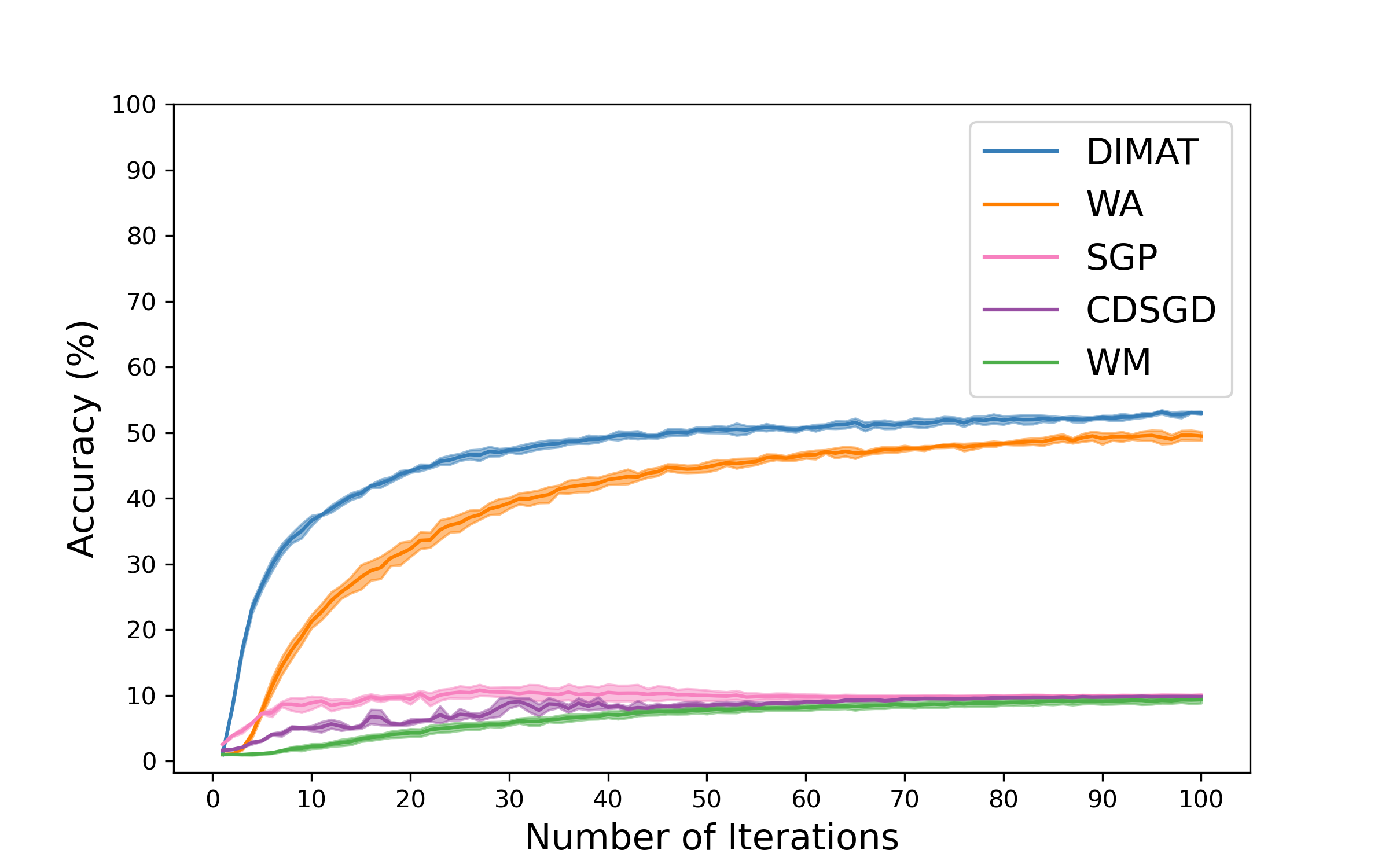}
        \caption{VGG16}
        \label{fig:V1625NI}
    \end{subfigure}
    \caption{Comparing algorithmic accuracy (mean$\pm$std) in fully connected topology with ResNet-20 (a) and VGG16 (b) architecture on CIFAR-100 non-IID data for 5 agents.}
    \label{fig:noniidacc}
\end{figure*}

\begin{figure*}[t!]
    \begin{subfigure}[b]{0.49\textwidth}
        \centering
        \includegraphics[width=\linewidth]{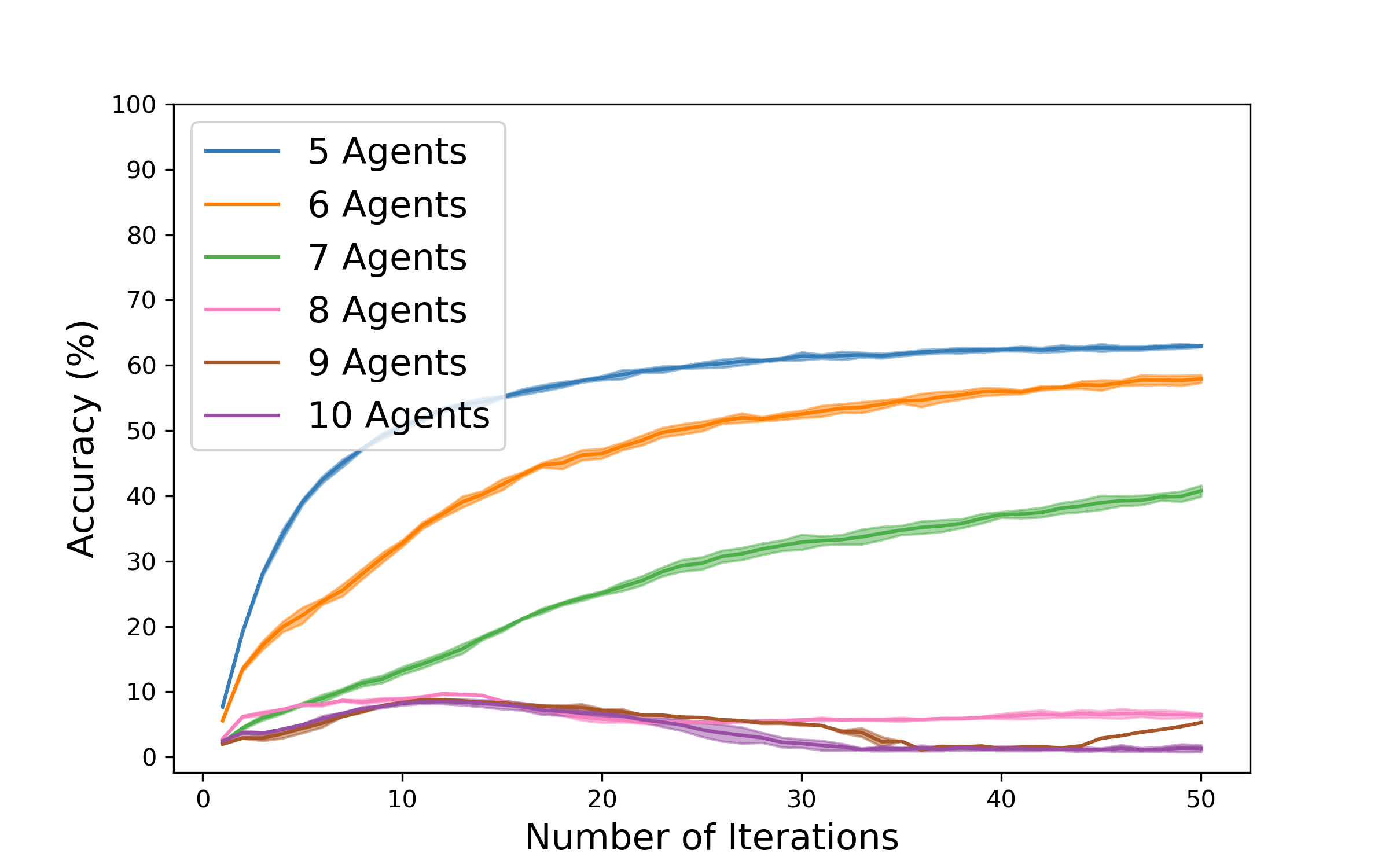}
        \caption{ResNet-20}
        \label{fig:RN2NI}
    \end{subfigure}
    \begin{subfigure}[b]{0.49\textwidth}
        \centering
        \includegraphics[width=\linewidth]{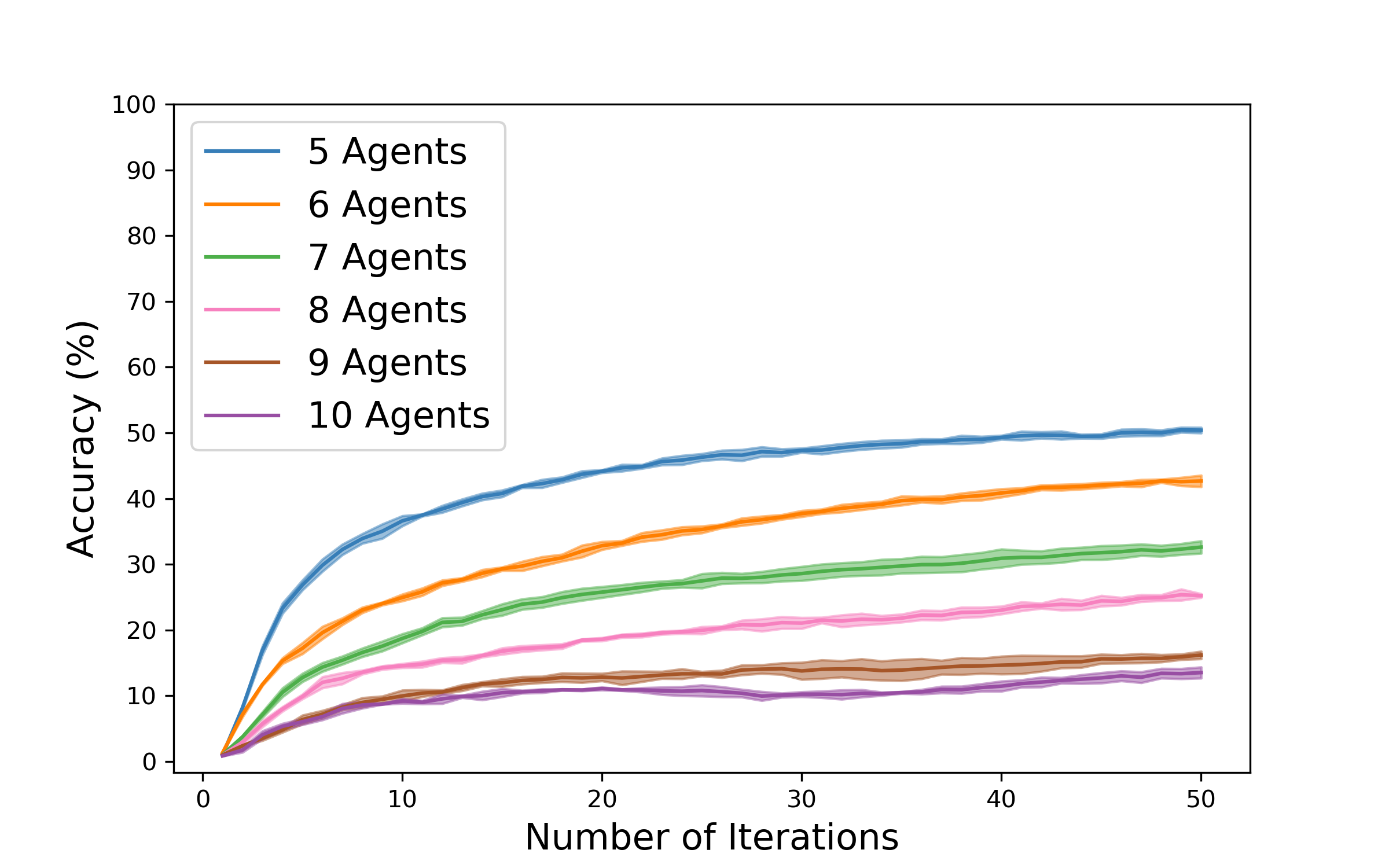}
        \caption{VGG16}
    \label{fig:V162NI}
    \end{subfigure}
    \caption{Scalability Analysis for DIMAT: Evaluating accuracy (mean$\pm$std) in fully connected topology with ResNet-20 (a) and VGG16 (b) on CIFAR-100 non-IID data across 5 to 10 agents.}
    \label{fig:noniidsca}
\end{figure*}
The results of our experiments are presented in Table \ref{tab:accuracy-comparison-RN20} for ResNet-20 and Table \ref{tab:accuracy-comparison-VGG16} for VGG16. These tables provide a comprehensive overview of testing accuracy comparisons across various network topologies under both IID and non-IID data distributions. Subsequent subsections will delve into detailed discussions regarding IID and non-IID scenarios. 

\subsection{Comparison of Algorithms in IID Setting}
In the context of the IID setting, we conducted a thorough analysis of the results presented in Tables \ref{tab:accuracy-comparison-RN20} (ResNet-20) and  \ref{tab:accuracy-comparison-VGG16} (VGG16). Our analysis reveals important insights into the performance of the different algorithms.
Visual analysis in Fig. \ref{fig:RNIIDC} and Fig. \ref{fig:VVIIDC} highlights DIMAT's rapid convergence across ResNet-20 and VGG16 in various topologies. This suggests DIMAT swiftly achieves accuracy stability during training, supporting our theoretical claims.
DIMAT consistently outperforms or matches alternative algorithms in accuracy metrics, with Weight Averaging (WA) eventually converging similarly. Scalability analysis, depicted in Fig. \ref{fig:RNIIDsc} and Fig. \ref{fig:VVIIDsc}, indicates DIMAT maintains high accuracy and convergence as agent count increases, showcasing its robustness across architectures and topologies.

\subsection{Comparison of Algorithms in Non-IID Setting}
In the non-IID setting, our analysis highlights DIMAT's consistent and swift convergence compared to alternative algorithms, particularly with fully connected topology (see Fig. \ref{fig:noniidacc}). Although DIMAT converges faster, Weight Averaging (WA) eventually achieves similar performance levels, indicating its potential effectiveness given sufficient training time. Scalability is a critical factor in evaluating the practicality of algorithms. We analyze scalability breakpoints, as shown in Fig. \ref{fig:noniidsca}, essential for informed decision-making in real-world deployments. Understanding how algorithms perform with increasing agents offers valuable insights, shedding light on the balance between individual learning and potential biases. Additionally, our Supplementary Materials delves deeper into this issue, addressing how agents' learning diminishes with more agents due to reduced dataset exposure and how leveraging larger, diverse datasets can mitigate biases.

\section{Conclusions}
\label{sec:conclusions}
This paper presents a novel decentralized deep learning framework DIMAT by leveraging the recently developed model merging techniques. DIMAT has provably shown the sublinear convergence rate and led to the tighter error bounds, compared to popular existing approaches. The comprehensive empirical studies show the superiority of DIMAT over baselines and faster model performance gain in different settings with lower communication overhead, under both IID and non-IID data distributions, which supports well the theoretical claims. Therefore, DIMAT offers a more practical option for quickly updating (potentially large) pre-trained models with (local) new data in a collaborative manner. 
Beyond the current work, a few future research directions include: a) exploring different permutation algorithms for model merging in decentralized learning; b) combining quantization techniques with model merging to further reduce communication bits; c) resolving scalability issues of model merging, especially with non-IID data distributions. 
\section{Acknowledgements}
This work was partly supported by the National Science Foundation, USA under grants CAREER CNS-1845969, CPS Frontier CNS-1954556, CAREER CIF-2005804, SATC 2154119, and NSF/USDA-NIFA award number, 2021-67021-35329.

{
    \small
    \bibliographystyle{ieeenat_fullname}
    \bibliography{main}
}

\clearpage
\setcounter{page}{1}
\setcounter{section}{0}
\maketitlesupplementary

\section{Additional Analysis}
\label{sec:additional_analysis}
In this section, we present additional analysis for completeness, primarily including the proof of all theorems presented in Section~\ref{sec:main_results}. Please note that the proof techniques for the proposed algorithms are different, while they share some similarities.
For the analysis, we set the merging frequency $n$ as 1 for a generic purpose.
\subsection{Algorithmic Frameworks}
\textsc{DIMAT-Adam} is slightly different from its centralized counterpart due to an auxiliary variable $\hat{\mathbf{u}}^i_k$. Based on a recent work~\cite{chen2023convergence}, the direct extension of Adam presented in~\cite{nazari2022dadam} may not necessarily converge to a stationary point.
$r_k$ in Line 4 of Algorithm~\ref{alg:dmm_adam} can take different forms, leading to different variants such as AMSGrad. In this work, we will primarily investigate the convergence rate of \textsc{DIMAT-AMSGrad}. $\oslash$ represents the division between two vectors.
\begin{algorithm}
  \caption{\textsc{DIMAT-MSGD}}
  \label{alg:dmm_msgd}
  \SetKwInOut{Input}{Input}
  \SetKwInOut{Output}{Output}
  \Input{mixing matrix $\mathbf{\Pi}$, the \# of epochs $K$, initialization $\mathbf{x}_1^i$, $\mathbf{v}_1^i$, step size $\alpha$, $0\leq\beta<1$, merging frequency $n$}
  \Output{$\bar{\mathbf{x}}_K=\frac{1}{N}\sum_{i=1}^N\mathbf{x}_K^i$}  
  \BlankLine
  \For{ $k$ in $1:K$ }
  { 
    \For{each agent $i\in\mathcal{V}$}
    {Calculate the stochastic gradient $\mathbf{g}^i_k$\;
    \eIf{$k$ mod $n$=0}
    { $\mathbf{x}_{k+1/2}^i=\sum_{j\in Nb(i)}\pi_{ij}\mathbf{P}_k^{ij}\mathbf{x}^j_{k}$\;
    }{$\mathbf{x}^i_{k+1/2}=\mathbf{x}^i_{k}$\;}
$\mathbf{v}_{k+1}^i=\beta\mathbf{v}_k^i-\alpha \mathbf{g}^i_k$\;
    $\mathbf{x}_{k+1}^i=\mathbf{x}_{k+1/2}^i+ \mathbf{v}^i_{k+1}$\;
    }
  }
\end{algorithm}

\begin{algorithm}
  \caption{\textsc{DIMAT-Adam}}
  \label{alg:dmm_adam}
  \SetKwInOut{Input}{Input}
  \SetKwInOut{Output}{Output}
  \Input{mixing matrix $\mathbf{\Pi}$, the \# of epochs $K$, initialization $\mathbf{x}_1$, $\mathbf{m}_0^i=\mathbf{v}_0^i=0$, $\hat{\mathbf{u}}^i_1=\mathbf{v}^i_1$, step size $\alpha$, merging frequency $n$, small positive constant $\epsilon$, $\beta_1\in[0,1)$}
  \Output{$\bar{\mathbf{x}}_K=\frac{1}{N}\sum_{i=1}^N\mathbf{x}_K^i$}  
  \BlankLine
  \For{ $k$ in $1:K$ }
  { 
    \For{each agent $i\in\mathcal{V}$}
    {Calculate the stochastic gradient $\mathbf{g}^i_k$\;
$\mathbf{m}_{k}^i=\beta_1\mathbf{m}_{k-1}^i+(1-\beta_1) \mathbf{g}^i_k$\;
    $\mathbf{v}^i_k=r_k(\mathbf{g}^i_1,...,\mathbf{g}^i_k)$\;
    \eIf{$k$ mod $n$=0}
    { $\mathbf{x}_{k+1/2}^i=\sum_{j\in Nb(i)}\pi_{ij}\mathbf{P}_k^{ij}\mathbf{x}^j_{k}$\; $\hat{\mathbf{u}}_{k+1/2}^i=\sum_{j\in Nb(i)}\pi_{ij}\mathbf{P}_k^{ij}\hat{\mathbf{u}}^j_{k}$\;
    }{$\mathbf{x}^i_{k+1/2}=\mathbf{x}^i_{k}$\; $\hat{\mathbf{u}}^i_{k+1/2}=\hat{\mathbf{u}}^i_{k}$\;}
    $\mathbf{u}^i_k=\text{max}(\hat{\mathbf{u}}^i_k,\epsilon)$\;
    $\mathbf{x}^i_{k+1}=\mathbf{x}^i_{k+1/2}-\alpha\mathbf{m}_{k}^i\oslash (\mathbf{u}^i_k)^{1/2}$\;
    $\hat{\mathbf{u}}^i_{k+1}=\hat{\mathbf{u}}^i_{k+1/2}-\mathbf{v}^i_{k-1}+\mathbf{v}^i_k$\;
    }
  }
\end{algorithm}
\subsection{Additional Theoretical Results}
\begin{theorem}\label{dmm-msgd-theo}
    Let Assumptions~\ref{assumption_1} and~\ref{assumption_3} hold. If the step size $\alpha\leq \textnormal{min}\{\frac{(1-\sqrt{\rho'})(1-\beta)}{4L},\frac{(1-\sqrt{\rho'})^2(1-\beta)^2}{6L}\}$ in Algorithm~\ref{alg:dmm_msgd}, then for all $K\geq 1$, the following relationship holds true:
    \begin{equation}
    \begin{split}
        &\frac{1}{K}\sum_{k=1}^K\mathbb{E}[\|\nabla f(\bar{\mathbf{x}}_k)\|^2]\leq \frac{2(1-\beta)(f(\bar{\mathbf{x}}_0)-f^*)}{\alpha K}+\frac{L\alpha\sigma^2}{(1-\beta)^2N}\\&+\frac{4\alpha^2\sigma^2L^2}{(1-\beta)^2(1-\rho')}+\frac{16\alpha^2\kappa^2L^2}{(1-\beta)^2(1-\sqrt{\rho'})^2},
    \end{split}
    \end{equation}
where $\bar{\mathbf{x}}_k=\frac{1}{N}\sum_{i=1}^N\mathbf{x}^i_k$.
\end{theorem}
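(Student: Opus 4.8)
The plan is to prove the \textsc{DIMAT-MSGD} convergence bound (Theorem~\ref{dmm-msgd-theo}) by the same consensus-plus-descent strategy that underlies Theorem~\ref{dmm-sgd-theo}, but carried out in the lifted $dN$-dimensional space and adapted to the momentum recursion. First I would stack all agents' iterates into a single vector $\mathbf{X}_k=((\mathbf{x}_k^1)^\top,\dots,(\mathbf{x}_k^N)^\top)^\top\in\mathbb{R}^{dN}$ and likewise stack the momentum variables $\mathbf{V}_k$ and stochastic gradients $\mathbf{G}_k$. Then the two update lines of Algorithm~\ref{alg:dmm_msgd} become the compact recursions $\mathbf{V}_{k+1}=\beta\mathbf{V}_k-\alpha\mathbf{G}_k$ and $\mathbf{X}_{k+1}=\mathbf{W}\mathbf{P}_k\mathbf{X}_k+\mathbf{V}_{k+1}$, where the coupled mixing-and-permutation matrix $\mathbf{W}\mathbf{P}_k$ satisfies Assumption~\ref{assumption_3}. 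Averaging with $\tfrac1N(\mathbf{1}\otimes\mathbf{I}_d)$ and using double stochasticity ($\mathbf{W}\mathbf{P}_k$ preserves the average) yields the mean recursion $\bar{\mathbf{x}}_{k+1}=\bar{\mathbf{x}}_k+\bar{\mathbf{v}}_{k+1}$ with $\bar{\mathbf{v}}_{k+1}=\beta\bar{\mathbf{v}}_k-\tfrac{\alpha}{N}\sum_i\mathbf{g}_k^i$, which is exactly a centralized heavy-ball step on $f$ up to the consensus error.

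The core of the argument splits into two coupled estimates. The first is a descent lemma: applying $L$-smoothness to the averaged iterate and introducing the auxiliary sequence $\mathbf{z}_k=\bar{\mathbf{x}}_k+\tfrac{\beta}{1-\beta}\bar{\mathbf{v}}_k$ (the standard momentum potential that linearizes the heavy-ball update into a plain gradient step) gives a one-step decrease of $f(\mathbf{z}_k)$ of the form $\mathbb{E}[f(\mathbf{z}_{k+1})]\le\mathbb{E}[f(\mathbf{z}_k)]-c_1\tfrac{\alpha}{1-\beta}\mathbb{E}\|\nabla f(\bar{\mathbf{x}}_k)\|^2+(\text{variance term }\tfrac{L\alpha^2\sigma^2}{(1-\beta)^2N})+(\text{consensus-error term})$. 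The second is a consensus-contraction bound: letting $\mathbf{\Omega}_k=(\mathbf{I}-\tfrac1N\mathbf{J})\mathbf{X}_k$ (deviation from the mean, with $\mathbf{J}=\mathbf{1}\mathbf{1}^\top\otimes\mathbf{I}_d$) be the disagreement, I would unroll the stacked recursion and use Assumption~\ref{assumption_3}, namely that the nontrivial spectrum of $\mathbf{W}\mathbf{P}_k$ lies in $[-\sqrt{\rho'},\sqrt{\rho'}]$, to obtain a geometric contraction with factor $\sqrt{\rho'}$ per merge step. Summing the resulting geometric series and bounding the driving terms by $\sigma^2$ and $\kappa^2$ (via Assumption~\ref{assumption_1}) produces $\tfrac1K\sum_k\mathbb{E}\|\mathbf{\Omega}_k\|^2\lesssim\tfrac{\alpha^2\sigma^2}{(1-\beta)^2(1-\rho')}+\tfrac{\alpha^2\kappa^2}{(1-\beta)^2(1-\sqrt{\rho'})^2}$, which explains the $(1-\rho')$ and $(1-\sqrt{\rho'})^2$ denominators in the final bound.

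Finally I would substitute the consensus estimate into the descent inequality, sum over $k=1,\dots,K$, telescope $f(\mathbf{z}_{k+1})-f(\mathbf{z}_k)$ to $f(\mathbf{z}_1)-f^*$ (identifying $\mathbf{z}_1=\bar{\mathbf{x}}_0$ since $\mathbf{v}$ is initialized appropriately), and divide by $K$. The step-size ceiling $\alpha\le\min\{\tfrac{(1-\sqrt{\rho'})(1-\beta)}{4L},\tfrac{(1-\sqrt{\rho'})^2(1-\beta)^2}{6L}\}$ is precisely what is needed to guarantee $c_1>0$ after the consensus term is absorbed into the left-hand side, and to keep the contraction factor strictly below one after accounting for the extra gradient-difference terms that momentum injects.

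The main obstacle I anticipate is the momentum coupling in the consensus-contraction bound: unlike the plain SGD case, the disagreement $\mathbf{\Omega}_{k+1}$ now depends on the momentum deviation $(\mathbf{I}-\tfrac1N\mathbf{J})\mathbf{V}_{k+1}$, so the single geometric recursion of Theorem~\ref{dmm-sgd-theo} becomes a \emph{coupled} two-sequence recursion in $(\|\mathbf{\Omega}_k\|,\|(\mathbf{I}-\tfrac1N\mathbf{J})\mathbf{V}_k\|)$. Closing this coupled system — either by forming a joint Lyapunov quantity $\|\mathbf{\Omega}_k\|^2+\eta\|(\mathbf{I}-\tfrac1N\mathbf{J})\mathbf{V}_k\|^2$ for a suitable weight $\eta$, or by bounding the $2\times2$ transition matrix's spectral radius by $\sqrt{\rho'}+O(\alpha L/(1-\beta))$ and invoking the step-size restriction to keep it below one — is the delicate calculation, and it is where the factor $(1-\beta)^2$ in every error term originates. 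Once that coupled contraction is established, the remaining telescoping and term-collection is routine.
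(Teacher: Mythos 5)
Your proposal matches the paper's proof in all essentials: your momentum potential $\mathbf{z}_k=\bar{\mathbf{x}}_k+\tfrac{\beta}{1-\beta}\bar{\mathbf{v}}_k$ is exactly the paper's auxiliary sequence $\bar{\mathbf{p}}_k=\tfrac{1}{1-\beta}\bar{\mathbf{x}}_k-\tfrac{\beta}{1-\beta}\bar{\mathbf{x}}_{k-1}$ (since $\bar{\mathbf{v}}_k=\bar{\mathbf{x}}_k-\bar{\mathbf{x}}_{k-1}$), and the descent-lemma-plus-consensus-bound architecture, the role of Assumption~\ref{assumption_3}, the step-size condition's purpose, and the origin of the $(1-\beta)^2$ factors are all as in the paper. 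The one place where you anticipate delicate extra work --- closing a coupled two-sequence contraction in $(\|\mathbf{\Omega}_k\|,\|(\mathbf{I}-\tfrac{1}{N}\mathbf{J})\mathbf{V}_k\|)$ --- is sidestepped in the paper by explicitly unrolling $\mathbf{V}_k=-\alpha\sum_{\tau}\beta^{k-1-\tau}\mathbf{G}_\tau$ into the $\mathbf{X}_k$ recursion and swapping the order of summation, which yields $(\mathbf{I}-\mathbf{Q})\mathbf{X}_k=-\alpha\sum_{\tau=0}^{k-1}\tfrac{1-\beta^{k-\tau}}{1-\beta}(\mathbf{I}-\mathbf{Q})\prod_{t=\tau+1}^{k-1}\mathbf{W}\mathbf{P}_t\mathbf{G}_\tau$ and reduces the consensus estimate to the same single-series bound as the SGD case, with the $\tfrac{1}{1-\beta}$ factor entering directly through the coefficients rather than through a joint Lyapunov argument.
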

\begin{corollary}\label{dmm-msgd-coro}
        Let Assumptions~\ref{assumption_1} and~\ref{assumption_3} hold. If step size $\alpha=\mathcal{O}(\sqrt{\frac{N}{K}})$ in Algorithm~\ref{alg:dmm_msgd}, then for all $K\geq\textnormal{max}\{\frac{32NL^2}{(1-\beta)^2(1-\sqrt{\rho'})^2},\frac{36NL^2}{(1-\beta)^4(1-\sqrt{\rho'})^4}\}$, we have
        \begin{equation}
        \begin{split}
            \frac{1}{K}\sum_{k=1}^K\mathbb{E}[\|\nabla f(\bar{\mathbf{x}}_k)\|^2]&\leq \mathcal{O}(\sqrt{\frac{1}{NK}}+\frac{N}{(1-\rho')K}\\&+\frac{N}{(1-\sqrt{\rho'})^2K}).
        \end{split}
        \end{equation}
\end{corollary}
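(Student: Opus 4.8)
The plan is to obtain Corollary~\ref{dmm-msgd-coro} as a direct consequence of Theorem~\ref{dmm-msgd-theo} by substituting the prescribed step size into the non-asymptotic bound and separately verifying that this step size is admissible. Writing $\alpha = c\sqrt{N/K}$ for a suitable absolute constant $c$, I would plug this choice into the four terms of Theorem~\ref{dmm-msgd-theo} and track only their orders in $N$, $K$, and $\rho'$, since the conclusion is stated in $\mathcal{O}(\cdot)$ form.

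First I would simplify the two leading terms. Because $\alpha K = c\sqrt{NK}$, the initialization term $\frac{2(1-\beta)(f(\bar{\mathbf{x}}_0)-f^*)}{\alpha K}$ is $\mathcal{O}(1/\sqrt{NK})$, and because $\alpha/N = c/\sqrt{NK}$, the stochastic-variance term $\frac{L\alpha\sigma^2}{(1-\beta)^2 N}$ is likewise $\mathcal{O}(1/\sqrt{NK})$; together these recover the linear-speedup contribution $\sqrt{1/(NK)}$. The remaining two consensus-error terms both carry a factor $\alpha^2 = c^2 N/K$, so $\frac{4\alpha^2\sigma^2 L^2}{(1-\beta)^2(1-\rho')}$ becomes $\mathcal{O}\!\left(\frac{N}{(1-\rho')K}\right)$ and $\frac{16\alpha^2\kappa^2 L^2}{(1-\beta)^2(1-\sqrt{\rho'})^2}$ becomes $\mathcal{O}\!\left(\frac{N}{(1-\sqrt{\rho'})^2 K}\right)$. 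Summing the four contributions yields exactly the advertised rate.

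Second, I would confirm that $\alpha = c\sqrt{N/K}$ satisfies the step-size constraint of Theorem~\ref{dmm-msgd-theo}, namely $\alpha \le \min\{\frac{(1-\sqrt{\rho'})(1-\beta)}{4L}, \frac{(1-\sqrt{\rho'})^2(1-\beta)^2}{6L}\}$. Squaring each of the two candidate upper bounds and solving for $K$ converts the admissibility requirement into a pair of lower bounds on $K$: the first candidate yields a threshold of order $\frac{NL^2}{(1-\beta)^2(1-\sqrt{\rho'})^2}$ and the second one of order $\frac{NL^2}{(1-\beta)^4(1-\sqrt{\rho'})^4}$. With $c$ absorbing the numerical factors, these match the two entries of the $\max$ in the hypothesis (the $32$ and $36$ constants), so for every $K$ at least this large the step size is admissible and the theorem's bound is in force.

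The only delicate point is bookkeeping rather than a genuine obstacle: reproducing the precise thresholds $\frac{32NL^2}{(1-\beta)^2(1-\sqrt{\rho'})^2}$ and $\frac{36NL^2}{(1-\beta)^4(1-\sqrt{\rho'})^4}$ requires fixing the constant $c$ in $\alpha = c\sqrt{N/K}$ and squaring the two constraints carefully, since the two candidates in the $\min$ map to the two candidates in the $\max$. Because the final statement keeps only orders, however, the argument reduces to the substitution above together with the admissibility check, and no estimate beyond Theorem~\ref{dmm-msgd-theo} is needed.
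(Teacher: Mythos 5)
Your proposal is correct and follows essentially the same route as the paper, which proves Corollary~\ref{dmm-msgd-coro} by directly substituting $\alpha=\mathcal{O}(\sqrt{N/K})$ into the bound of Theorem~\ref{dmm-msgd-theo} and reading off the orders of the four terms. Your additional check that the step-size condition translates into the stated lower bounds on $K$ is exactly the (implicit) admissibility verification behind the paper's threshold, and the residual ambiguity you flag about the precise constants $32$ and $36$ is immaterial since the conclusion is stated in $\mathcal{O}(\cdot)$ form.
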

Before presenting the main result for \textsc{DMM-Adam}, we define specifically $r_k$ in Algorithm~\ref{alg:dmm_adam} as $\hat{\mathbf{v}}^i_k=\beta_2\hat{\mathbf{v}}^i_{k-1}+(1-\beta_2)\mathbf{g}^i_k\odot\mathbf{g}^i_k$ and $\mathbf{v}^i_k = \textnormal{max}\{\hat{\mathbf{v}}^i_k, \mathbf{v}^i_{k-1}\}$, where $0\leq\beta_2<1$ and $\hat{\mathbf{v}}^i_0=0$, leading to \textsc{DIMAT-AMSGrad}. To show the convergence rate, we need another assumption specifically for the adaptive gradient descent type of algorithms, which bounds the infinity norms of $\mathbf{g}^i_k$ and $\nabla f_i(\mathbf{x}^i_k)$ by a positive constant $G_{\infty}<\infty$. This assumption has actually been relaxed in many first-order methods such as SGD and MSGD types~\cite{yu2019linear}. However, the relaxation of the assumption in adaptive gradient methods is out of our scope and we will still proceed with this assumption.
\begin{theorem}\label{dmm-adam-theo}
   Let Assumptions~\ref{assumption_1} and~\ref{assumption_3} hold. Also suppose that $\|\mathbf{g}_k^i\|_\infty\leq G_\infty$ and that  $\|\nabla f^i(\mathbf{x}^i_k)\|_\infty\leq G_\infty$ for all $i\in\mathcal{V}$ and $k\geq 1$. If step size $\alpha=\mathcal{O}(\frac{1}{\sqrt{Kd}})$ in Algorithm~\ref{alg:dmm_adam}, then for all $K\geq \frac{256L^2}{d\epsilon}$, we have,
   \begin{equation}
   \begin{split}
       \frac{1}{K}\sum_{k=1}^K\mathbb{E}[\|\nabla f(\bar{\mathbf{x}}_k)\|^2]&\leq\mathcal{O}(\frac{d^{1.5}}{\sqrt{NK}}+\frac{dN}{(1-\sqrt{\rho'})^2K}\\&+\frac{N^{1.5}d^{0.5}}{K^{1.5}}+\frac{\sqrt{N}d^2}{(1-\sqrt{\rho'})K}\\&+\frac{Nd^{1.5}}{(1-\sqrt{\rho'})K^{1.5}})
   \end{split}
   \end{equation}
where $\bar{\mathbf{x}}_k=\frac{1}{N}\sum_{i=1}^N\mathbf{x}^i_k$.
\end{theorem}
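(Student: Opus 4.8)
The plan is to follow the standard template of a decentralized nonconvex descent analysis, adapted to the AMSGrad preconditioner, the momentum buffer, and the model-merging operator. I would track the averaged iterate $\bar{\mathbf{x}}_k=\frac{1}{N}\sum_i\mathbf{x}^i_k$ and first argue that the merge step leaves this average essentially unaffected: writing the merge as the stacked product $(\mathbf{W}\mathbf{P}_k)\mathbf{x}_k$ and using that $\mathbf{W}\mathbf{P}_k$ is doubly stochastic (Assumption~\ref{assumption_3}), the consensus operation only redistributes mass, so that $\bar{\mathbf{x}}_{k+1}=\bar{\mathbf{x}}_k-\frac{\alpha}{N}\sum_i\mathbf{m}^i_k\oslash(\mathbf{u}^i_k)^{1/2}$. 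Applying the $L$-smoothness of $f$ (Assumption~\ref{assumption_1}) along this recursion gives a one-step descent inequality whose nontrivial pieces are the inner product $\langle\nabla f(\bar{\mathbf{x}}_k),-\frac{\alpha}{N}\sum_i\mathbf{m}^i_k\oslash(\mathbf{u}^i_k)^{1/2}\rangle$ and a quadratic remainder; all remaining work is to control these two terms in expectation.

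Next I would establish two-sided control of the adaptive preconditioner. Since $\hat{\mathbf{v}}^i_k$ is an exponential moving average of $\mathbf{g}^i_k\odot\mathbf{g}^i_k$ and $\mathbf{v}^i_k=\max\{\hat{\mathbf{v}}^i_k,\mathbf{v}^i_{k-1}\}$ is coordinatewise nondecreasing and bounded by $G_\infty^2$ (from $\|\mathbf{g}^i_k\|_\infty\leq G_\infty$), the telescoping update $\hat{\mathbf{u}}^i_{k+1}=\hat{\mathbf{u}}^i_{k+1/2}-\mathbf{v}^i_{k-1}+\mathbf{v}^i_k$ composed with the doubly stochastic merge keeps $\hat{\mathbf{u}}^i_k$ in a controllable range, so that $\mathbf{u}^i_k=\max(\hat{\mathbf{u}}^i_k,\epsilon)$ satisfies $\sqrt{\epsilon}\leq((\mathbf{u}^i_k)^{1/2})_\ell\leq c\,G_\infty$ coordinatewise. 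The $\epsilon$-clamp supplies the lower bound that makes the requirement $K\geq 256L^2/(d\epsilon)$ appear. The purpose of the merged auxiliary variable $\hat{\mathbf{u}}$, following \cite{chen2023convergence}, is precisely to let me bound the cross-agent disagreement $\|(\mathbf{u}^i_k)^{1/2}-(\mathbf{u}^j_k)^{1/2}\|$ via the same consensus contraction used for the iterates, which is what keeps the averaged update close to a genuine preconditioned gradient step.

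I would then bound the consensus error $\frac{1}{N}\sum_i\|\mathbf{x}^i_k-\bar{\mathbf{x}}_k\|^2$ by iterating the stacked recursion through $\prod_\tau\mathbf{W}\mathbf{P}_\tau$. Because each factor is symmetric, fixes the averaging direction, and has operator norm at most $\sqrt{\rho'}$ on its orthogonal complement (Assumption~\ref{assumption_3}, with the Kronecker spectrum of Theorem~\ref{kronecker_prod}), submultiplicativity yields geometric contraction $(\sqrt{\rho'})^{k-\tau}$ of the deviation injected at step $\tau$. Summing the injected perturbations — the preconditioned momentum steps bounded through $G_\infty$, the variance $\sigma^2$, and the heterogeneity $\kappa^2$ — produces accumulated consensus bounds scaling like $1/(1-\sqrt{\rho'})$ and $1/(1-\sqrt{\rho'})^2$. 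The momentum bias is absorbed by the usual device of writing $\mathbf{m}^i_k$ as a $\beta_1$-weighted combination of past gradients and bounding the lag, which converts the inner-product term into $-c\,\alpha\|\nabla f(\bar{\mathbf{x}}_k)\|^2$ plus error terms proportional to the consensus error and the gradient noise.

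Finally I would telescope the one-step inequality over $k=1,\dots,K$, divide by $\alpha K$, and substitute the preconditioner and consensus bounds so the gradient-norm average is isolated; choosing $\alpha=\mathcal{O}(1/\sqrt{Kd})$ balances the $1/(\alpha K)$ term against the accumulated quadratic terms, and propagating each $\ell_\infty$-to-$\ell_2$ conversion (each costing a factor $\sqrt{d}$) reproduces the stated mixture of $d^{1.5}/\sqrt{NK}$, $dN/((1-\sqrt{\rho'})^2K)$, and the lower-order $K^{-3/2}$ terms. I expect the main obstacle to be the non-separable coupling between the per-agent, per-coordinate preconditioners $(\mathbf{u}^i_k)^{1/2}$ and both the momentum and the consensus error: because these preconditioners differ across agents and coordinates, the averaged update is not a clean preconditioned gradient descent step, and controlling the residual bias term $\langle\nabla f(\bar{\mathbf{x}}_k),\frac{1}{N}\sum_i\mathbf{m}^i_k\oslash(\mathbf{u}^i_k)^{1/2}-(\text{preconditioned true gradient})\rangle$ without sacrificing the contraction is the crux. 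This is exactly where the dimension bookkeeping that yields the various powers of $d$ enters, and where the tighter $\rho'$ (versus $\rho$) must be threaded through so the bound reflects the merging-induced improvement highlighted in Remark~\ref{remark_2}.
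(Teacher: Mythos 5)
Your overall architecture matches the paper's: preservation of the agent average under the doubly stochastic $\mathbf{W}\mathbf{P}_k$, an $L$-smoothness descent step, a two-sided bound $\epsilon \leq [\mathbf{u}^i_k]_j \leq G_\infty^2$ on the clamped preconditioner, consensus contraction of $(\mathbf{I}-\mathbf{Q})\prod_\tau \mathbf{W}\mathbf{P}_\tau$ at rate $\sqrt{\rho'}$ (giving $\mathbb{E}[\|(\mathbf{I}-\mathbf{Q})\mathbf{X}_k\|^2] \lesssim \alpha^2 N d G_\infty^2/((1-\rho')^2\epsilon)$ directly from the bounded-gradient assumption), and a final telescoping with $\alpha = \mathcal{O}(1/\sqrt{Kd})$. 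One genuine difference of route: the paper does not unroll $\mathbf{m}^i_k$ as a $\beta_1$-weighted sum of past gradients. Instead it applies smoothness to the auxiliary sequence $\bar{\mathbf{p}}_k = \frac{1}{1-\beta_1}\bar{\mathbf{x}}_k - \frac{\beta_1}{1-\beta_1}\bar{\mathbf{x}}_{k-1}$, whose increment is a clean preconditioned gradient step plus the single residual $\frac{\beta_1}{1-\beta_1}\frac{1}{N}\sum_i \mathbf{m}^i_k \oslash ((\mathbf{u}^i_{k-1})^{1/2}-(\mathbf{u}^i_k)^{1/2})$; this confines the entire momentum--preconditioner interaction to one term ($D_1$) rather than spreading it over a lag argument. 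Your unrolling approach can be made to work, but it is strictly harder here because the preconditioner is time-varying, so the "lag" terms do not reduce to a geometric series in $\beta_1$ alone.

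The genuine gap is that you correctly identify the crux --- controlling the residual bias from the per-agent, per-coordinate, time-varying preconditioners --- but do not supply the mechanism that closes it. A two-sided range bound on $(\mathbf{u}^i_k)^{1/2}$ is not enough: bounding $\sum_{k=1}^K \langle \nabla f, \mathbf{m}^i_k \oslash((\mathbf{u}^i_{k-1})^{1/2}-(\mathbf{u}^i_k)^{1/2})\rangle$ term-by-term with that range gives a total of order $K$, i.e.\ an $\mathcal{O}(1)$ non-vanishing error after dividing by $K$. The paper's resolution is the entrywise-$L_1$ telescoping: because AMSGrad's $\max$ makes $\mathbf{v}^i_k$ coordinatewise nondecreasing and bounded by $G_\infty^2$, one has $\mathbb{E}[\sum_{k}\|\mathbf{V}_{k-1}-\mathbf{V}_{k-2}\|_{abs}] = \mathbb{E}[\sum_{i,j}([\mathbf{v}^i_{K-1}]_j-[\mathbf{v}^i_0]_j)] \leq N d G_\infty^2$, a \emph{total} (not per-step) bound that makes both $D_1$ and the cross-agent preconditioner-disagreement term $D_2$ scale as $N d G_\infty^2/K$. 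Transferring this from $\mathbf{v}$ to the merged-and-clamped $\mathbf{u}^i_k = \max(\hat{\mathbf{u}}^i_k,\epsilon)$ is exactly what the paper's auxiliary lemma on $h_i(r)=\max\{c_i,r\}$ is for; your alternative suggestion of bounding the preconditioner disagreement "via the same consensus contraction used for the iterates" would not obviously succeed, since the increments $\mathbf{v}^i_k-\mathbf{v}^i_{k-1}$ injected into $\hat{\mathbf{u}}$ are neither zero-mean nor individually small --- only their sum telescopes. Without this telescoping device your plan does not recover the stated $\frac{dN}{(1-\sqrt{\rho'})^2 K}$-type terms.
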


Theorem~\ref{dmm-adam-theo} shows that Algorithm~\ref{alg:dmm_adam} converges with a rate of $\mathcal{O}(\frac{d^{1.5}}{\sqrt{NK}})$ when $K$ is sufficiently large. Also, \textsc{DIMAT-AMSGrad} enjoys the linear speed up as \textsc{DIMAT-SGD} and \textsc{DIMAT-MSGD}. The dependence on the dimension of $\mathbf{x}$ is attributed to the bounded assumption of the infinity norms of gradients. The interaction between topology and model merging from Remark~\ref{remark_2} can comparably apply to \textsc{DIMAT-AMSGrad} in this context. The transition between the transient and stable regimes depending on when $\mathcal{O}(\frac{d^{1.5}}{\sqrt{NK}})$ dominates also motivates the further future investigation of convergence dynamics. 

\subsection{Additional Analysis for \textsc{DIMAT-SGD}}
With abuse of notation, we use some upper bold characters to represent vectors after they are expanded. 
Define
\[\mathbf{X}_k=[\mathbf{x}^1_k;\mathbf{x}^2_k;...;\mathbf{x}^N_k]^\top\in\mathbb{R}^{dN},\]\[\mathbf{G}_k=[\mathbf{g}^1_k;\mathbf{g}^2_k;...;\mathbf{g}^N_k]^\top\in\mathbb{R}^{dN},\]\[\mathbf{H}_k=[\nabla f^1(\mathbf{x}^1_k);\nabla f^2(\mathbf{x}^2_k);...;\nabla f^N(\mathbf{x}^N_k))]^\top\in\mathbb{R}^{dN},\]\[\mathbf{Q}=\frac{1}{dN}\mathbf{1}\mathbf{1}^\top_{dN}\in\mathbb{R}^{dN\times dN}\]
Without loss of generality, suppose that the initialization $\mathbf{X}_0=\mathbf{0}$ throughout the rest of analysis. For \textsc{DIMAT-SGD}, we have
\begin{equation}
    \mathbf{X}_k = -\alpha\sum_{\tau=1}^{k-1}\prod_{t=\tau+1}^{k-1}\mathbf{W}\mathbf{P}_t\mathbf{G}_\tau
\end{equation}
For ease of exposition, we define $\prod_{\tau=k+1}^k\mathbf{W}\mathbf{P}_\tau=\mathbf{I}$ in our analysis. Left multiplying by $\mathbf{I-Q}$ yields the following relationship
\begin{equation}\label{eq_14}
    (\mathbf{I-Q})\mathbf{X}_k = -\alpha\sum_{\tau=1}^{k-1}(\mathbf{I-Q})\prod_{t=\tau+1}^{k-1}\mathbf{W}\mathbf{P}_t\mathbf{G}_\tau,
\end{equation}
which will serve to characterize the optimal error bound. By taking the squared norm and expectation on both sides, we have
\begin{equation}\label{consensus}
    \mathbb{E}[\|(\mathbf{I-Q})\mathbf{X}_k\|^2]=\alpha^2\mathbb{E}[\|\sum_{\tau=1}^{k-1}(\mathbf{I-Q})\prod_{t=\tau+1}^{k-1}\mathbf{W}\mathbf{P}_t\mathbf{G}_\tau\|^2].
\end{equation}
The left side of above equation is equivalent to $\mathbb{E}[\frac{1}{N}\sum_{i=1}^N\|\mathbf{x}^i_k-\mathbf{\bar{x}}_k\|^2]$. To further analyze the Eq.~\ref{consensus}, we investigate its right side in the following.
\begin{equation}\label{eq_16}
\begin{split}
    &\alpha^2\mathbb{E}[\|\sum_{\tau=1}^{k-1}(\mathbf{I-Q})\prod_{t=\tau+1}^{k-1}\mathbf{W}\mathbf{P}_t\mathbf{G}_\tau\|^2]\leq \\&2\alpha^2\underbrace{\mathbb{E}[\|\sum_{\tau=1}^{k-1}(\mathbf{I-Q})\prod_{t=\tau+1}^{k-1}\mathbf{W}\mathbf{P}_t(\mathbf{G}_\tau-\mathbf{H}_\tau)\|^2]}_{T_1}\\&
    +2\alpha^2\underbrace{\mathbb{E}[\|\sum_{\tau=1}^{k-1}(\mathbf{I-Q})\prod_{t=\tau+1}^{k-1}\mathbf{W}\mathbf{P}_t\mathbf{H}_\tau\|^2]}_{T_2},
\end{split}
\end{equation}
which follows by using the basic inequality $\|\mathbf{a}+\mathbf{b}\|^2\leq 2\|\mathbf{a}\|^2+2\|\mathbf{b}\|^2$. We will next study the upper bounds for $T_1$ and $T_2$, respectively. Before that, we present some technical detail for how to derive $\rho'\leq \rho$ and then state two key lemmas to manipulate $(\mathbf{I-Q})\prod_{\tau=1}^{k}\mathbf{W}\mathbf{P}_\tau$ and $\mathbf{G}_k - \mathbf{H}_k$.

\textbf{Analysis of $\rho'\leq \rho$.} As $\mathbf{W}\mathbf{P}_k$ is symmetric, the immediate outcome is that the singular values of $\mathbf{W}\mathbf{P}_k$ are equal to the absolute values of eigenvalues of $\mathbf{W}\mathbf{P}_k$, which results in $\zeta_l(\mathbf{W}\mathbf{P}_k)=|\lambda_l(\mathbf{W}\mathbf{P}_k)|$, where $\zeta_l$ is the $l$-th singular value of $\mathbf{W}\mathbf{P}_k$. This result is well-known and we skip the proof in this context. According to the Courant–Fischer–Weyl Min-Max Principle~\cite{il2022min}, the following relationship can be obtained:
\begin{equation}
\begin{split}
&\zeta_l(\mathbf{W}\mathbf{P}_k)=\textnormal{max}_{S:dim(S)=l}\textnormal{min}_{x\in S,\|x\|=1}\|\mathbf{W}\mathbf{P}_kx\|\\&\leq \textnormal{max}_{S:dim(S)=l}\textnormal{min}_{x\in S,\|x\|=1}\|\mathbf{W}\|\|\mathbf{P}_kx\|\\&=\zeta_1(\mathbf{W})\cdot\textnormal{max}_{S:dim(S)=l}\textnormal{min}_{x\in S,\|x\|=1}\|\mathbf{P}_kx\|\\&\leq\zeta_1(\mathbf{W})\zeta_i(\mathbf{P}_k),
\end{split}
\end{equation}
where $S:dim(S)=l$ is a subspace of $\mathbb{R}^{dN}$ of dimension $l$.
Then,
\begin{equation}
   \begin{split}&\zeta_l(\mathbf{W}\mathbf{P}_k)= \zeta_l([\mathbf{W}\mathbf{P}_k]^\top)\\&=\zeta_l(\mathbf{P}^\top_k\mathbf{W}^\top)\leq \zeta_1(\mathbf{P}^\top_k)\zeta_l(\mathbf{W}^\top)=\zeta_l(\mathbf{W})\zeta_1(\mathbf{P}_k).
   \end{split}
\end{equation}
We have known that $\mathbf{W}$, $\mathbf{P}_k$ and $\mathbf{W}\mathbf{P}_k$ are symmetric such that
\begin{equation}
    |\lambda_l(\mathbf{W}\mathbf{P}_k)|\leq|\lambda_l(\mathbf{W})||\lambda_1(\mathbf{P}_k)|
\end{equation}
Since all eigenvalues of $\mathbf{P}_k$ are contained in the roots of unity, the modulus of any eigenvalue of $\mathbf{P}_k$ is 1, i.e., $|\lambda_1(\mathbf{P}_k)|=1$. With this in hand, we have
\begin{equation}
    |\lambda_l(\mathbf{W}\mathbf{P}_k)|\leq|\lambda_l(\mathbf{W})|
\end{equation}
The above inequality implies that 
\begin{equation}
\begin{split}
    &\textnormal{max}\{|\lambda_2(\mathbf{W}\mathbf{P}_k)|,|\lambda_{dN}(\mathbf{W}\mathbf{P}_k)|\}\leq\\& \textnormal{max}\{|\lambda_2(\mathbf{W})|,|\lambda_{dN}(\mathbf{W})|\},
\end{split}
\end{equation}
which ensures the fact that $\sqrt{\rho'}\leq \sqrt{\rho}$.

\begin{lemma}\label{lemma_1}
    Let Assumption~\ref{assumption_1} hold. Suppose that $\mathbf{E}[\mathbf{g}^i] = \nabla f^i(\mathbf{x}^i), \forall i\in\mathcal{V}$. Then, we have the following relationship
    \begin{equation}
        \mathbb{E}[\|\frac{1}{N}\sum_{i=1}^N\mathbf{g}^i\|^2]\leq\frac{1}{N}\sigma^2+\mathbb{E}[\|\frac{1}{N}\sum_{i=1}^N\nabla f^i(\mathbf{x}^i)\|^2].
    \end{equation}
\end{lemma}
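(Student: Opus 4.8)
The plan is to apply the standard bias--variance decomposition of the stochastic gradients. First I would split each $\mathbf{g}^i$ into its conditional mean and a zero-mean noise term, writing $\mathbf{g}^i = \nabla f^i(\mathbf{x}^i) + \mathbf{n}^i$ with $\mathbf{n}^i := \mathbf{g}^i - \nabla f^i(\mathbf{x}^i)$. The unbiasedness hypothesis gives $\mathbb{E}[\mathbf{n}^i]=0$, while the bounded-variance part of Assumption~\ref{assumption_1} gives $\mathbb{E}[\|\mathbf{n}^i\|^2]\leq\sigma^2$. Averaging over the $N$ agents yields $\frac{1}{N}\sum_{i=1}^N\mathbf{g}^i = \frac{1}{N}\sum_{i=1}^N\nabla f^i(\mathbf{x}^i) + \frac{1}{N}\sum_{i=1}^N\mathbf{n}^i$, which is the identity I would square.

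Next I would expand $\mathbb{E}[\|\frac{1}{N}\sum_i\mathbf{g}^i\|^2]$ into the squared norm of the averaged true gradient, a cross term, and the squared norm of the averaged noise. The cross term $\mathbb{E}[\langle\frac{1}{N}\sum_i\nabla f^i(\mathbf{x}^i),\frac{1}{N}\sum_j\mathbf{n}^j\rangle]$ is handled by conditioning on the $\sigma$-field that renders each $\nabla f^i(\mathbf{x}^i)$ measurable and invoking $\mathbb{E}[\mathbf{n}^j\mid\cdot]=0$ via the tower property, so it vanishes. For the noise term I would use that the stochastic gradients are mutually independent across agents (as stated in the problem formulation), hence $\mathbb{E}[\langle\mathbf{n}^i,\mathbf{n}^j\rangle]=0$ for $i\neq j$; only the diagonal survives, giving
\[
\mathbb{E}\Big[\big\|\tfrac{1}{N}\textstyle\sum_{i=1}^N\mathbf{n}^i\big\|^2\Big] = \frac{1}{N^2}\sum_{i=1}^N\mathbb{E}[\|\mathbf{n}^i\|^2] \leq \frac{\sigma^2}{N}.
\]
Combining the three pieces delivers exactly the claimed bound.

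The routine algebra is immediate; the only genuine care is in justifying the two cancellations. The gradient--noise cross term requires the correct conditioning (so that $\nabla f^i(\mathbf{x}^i)$ is treated as measurable while the noise averages to zero), and the noise--noise cross terms require the mutual independence of the $\mathbf{g}^i$ across agents rather than mere unbiasedness. I expect this bookkeeping of the underlying $\sigma$-fields and the independence structure to be the main point to get right, since the bounded-variance and unbiasedness assumptions alone would only control the diagonal noise terms without the independence needed to kill the off-diagonal ones.
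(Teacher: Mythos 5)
Your proof is correct and is exactly the standard bias--variance argument that the paper itself invokes: the paper gives no explicit derivation for this lemma and instead defers to Lemma 1 of Yu et al.~\cite{yu2019linear}, whose proof is the same decomposition $\mathbf{g}^i=\nabla f^i(\mathbf{x}^i)+\mathbf{n}^i$ with the cross term killed by conditional unbiasedness and the off-diagonal noise terms killed by the mutual independence stated in the problem formulation. Your attention to the conditioning and to the role of independence is precisely the right bookkeeping, so there is nothing to add.
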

The proof for the Lemma~\ref{lemma_1} follows similarly Lemma 1 in~\cite{yu2019linear} and we skip it in this context.
\begin{lemma}\label{lemma_2}
    Let Assumption~\ref{assumption_3} hold. Then, for any integer $k\geq 1$, we have $\|(\mathbf{I-Q})\prod_{\tau=1}^{k}\mathbf{W}\mathbf{P}_\tau\|\leq(\sqrt{\rho'})^k<1$, where $\|\cdot\|$ denotes the spectral norm in this context.
\end{lemma}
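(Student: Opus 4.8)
The plan is to exploit the fact that every factor $\mathbf{W}\mathbf{P}_\tau$ in the product shares the all-ones vector $\mathbf{1}_{dN}$ as its top eigenvector, which lets me commute the averaging projector $\mathbf{Q}$ through the entire product even though the factors are time-varying and do not commute with one another. First I would record the elementary consequences of double stochasticity together with Assumption~\ref{assumption_3}: since $\mathbf{W}\mathbf{P}_\tau\mathbf{1}_{dN}=\mathbf{1}_{dN}$ and $\mathbf{1}_{dN}^\top\mathbf{W}\mathbf{P}_\tau=\mathbf{1}_{dN}^\top$, and $\mathbf{Q}=\frac{1}{dN}\mathbf{1}_{dN}\mathbf{1}_{dN}^\top$ is the orthogonal projector onto $\mathrm{span}(\mathbf{1}_{dN})$, it follows directly that $\mathbf{Q}\mathbf{W}\mathbf{P}_\tau=\mathbf{W}\mathbf{P}_\tau\mathbf{Q}=\mathbf{Q}$ and $\mathbf{Q}^2=\mathbf{Q}$ for every $\tau$.

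The central step is a deflation identity, which I would prove by induction on $k$:
\[
(\mathbf{I}-\mathbf{Q})\prod_{\tau=1}^k\mathbf{W}\mathbf{P}_\tau=\prod_{\tau=1}^k(\mathbf{W}\mathbf{P}_\tau-\mathbf{Q}).
\]
The base case $k=1$ is immediate from $\mathbf{Q}\mathbf{W}\mathbf{P}_1=\mathbf{Q}$. For the inductive step, after applying the hypothesis to the leading $k-1$ factors, the only thing to verify is that appending $\mathbf{W}\mathbf{P}_k$ to $\prod_{\tau=1}^{k-1}(\mathbf{W}\mathbf{P}_\tau-\mathbf{Q})$ coincides with appending $\mathbf{W}\mathbf{P}_k-\mathbf{Q}$. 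Their difference is $\prod_{\tau=1}^{k-1}(\mathbf{W}\mathbf{P}_\tau-\mathbf{Q})\,\mathbf{Q}$, and this vanishes because the trailing factor satisfies $(\mathbf{W}\mathbf{P}_{k-1}-\mathbf{Q})\mathbf{Q}=\mathbf{W}\mathbf{P}_{k-1}\mathbf{Q}-\mathbf{Q}^2=\mathbf{Q}-\mathbf{Q}=\mathbf{0}$, annihilating the whole product on the right.

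With the identity in hand, I would bound each deflated factor separately. Because $\mathbf{W}\mathbf{P}_\tau$ is symmetric by Assumption~\ref{assumption_3}, it admits an orthonormal eigenbasis containing $\mathbf{1}_{dN}$ with eigenvalue $\lambda_1=1$; subtracting $\mathbf{Q}$ zeroes out exactly this top eigenvalue while leaving $\lambda_2,\dots,\lambda_{dN}$ intact. Hence $\mathbf{W}\mathbf{P}_\tau-\mathbf{Q}$ is symmetric with spectral norm $\max\{|\lambda_2(\mathbf{W}\mathbf{P}_\tau)|,\dots,|\lambda_{dN}(\mathbf{W}\mathbf{P}_\tau)|\}=\max\{|\lambda_2|,|\lambda_{dN}|\}\leq\sqrt{\rho'}$, where the second equality uses the ordering of the eigenvalues. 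Submultiplicativity of the spectral norm then gives $\|(\mathbf{I}-\mathbf{Q})\prod_{\tau=1}^k\mathbf{W}\mathbf{P}_\tau\|\leq\prod_{\tau=1}^k\|\mathbf{W}\mathbf{P}_\tau-\mathbf{Q}\|\leq(\sqrt{\rho'})^k$, and $(\sqrt{\rho'})^k<1$ since $\rho'<1$ and $k\geq1$.

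I expect the deflation identity to be the only nontrivial obstacle. The naive route of simultaneously diagonalizing the $\mathbf{W}\mathbf{P}_\tau$ fails because they need not commute, so the contraction of the consensus error must be tracked through a product of \emph{distinct} symmetric matrices. The key realization that makes this tractable is that all these matrices agree on their single non-contracting direction $\mathbf{1}_{dN}$; projecting that direction away once through $\mathbf{I}-\mathbf{Q}$ is therefore equivalent to deflating it inside every factor, after which each factor is a genuine contraction with norm at most $\sqrt{\rho'}$ on the consensus-error subspace.
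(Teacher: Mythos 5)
Your proof is correct and is the standard deflation argument — using $\mathbf{Q}\mathbf{W}\mathbf{P}_\tau=\mathbf{W}\mathbf{P}_\tau\mathbf{Q}=\mathbf{Q}$ to rewrite $(\mathbf{I}-\mathbf{Q})\prod_\tau\mathbf{W}\mathbf{P}_\tau$ as $\prod_\tau(\mathbf{W}\mathbf{P}_\tau-\mathbf{Q})$ and then bounding each symmetric factor's spectral norm by $\sqrt{\rho'}$ — which is exactly the technique behind Lemma IV.2 of Berahas et al.\ that the paper cites in lieu of a written proof. You have simply supplied in full the details the paper omits.
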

\begin{proof}
The proof can be easily obtained by using Assumption~\ref{assumption_3} and the similar analysis techniques in Lemma IV.2 in~\cite{berahas2018balancing}.
\end{proof}
We are now able to bound $T_1$ and $T_2$. By following the similar proof techniques and adapting the analysis in~\cite{yu2019linear}, the following bounds are obtained accordingly.
\begin{equation}\label{eq_21}
    T_1\leq \frac{N\sigma^2}{1-\rho'}
\end{equation}
\begin{equation}\label{eq_22}
\begin{split}
    &T_2\leq \frac{1}{1-\sqrt{\rho'}}[(8L^2\sum^k_{\tau=1}(\rho')^{(k-\tau)/2}\mathbb{E}[\|\mathbf{(I-Q)\mathbf{X}_\tau}\|^2]\\&
    +4N\sum_{\tau=1}^k(\rho')^{(k-\tau)/2}\mathbb{E}[\|\frac{1}{N}\sum_{i=1}^N\nabla f^i(\mathbf{x}^i_\tau)\|^2]]\\&+\frac{4N\kappa^2}{1-\sqrt{\rho'}}
\end{split}
\end{equation}
Based on the upper bounds for $T_1$ and $T_2$, we can obtain the upper bound for $\frac{1}{N}\sum_{i=1}^N\mathbb{E}[\|\mathbf{x}^i_k-\mathbf{\bar{x}}_k\|^2]$.
\begin{lemma}\label{lemma_3}
    Let Assumptions~\ref{assumption_1} and~\ref{assumption_3} hold. For $\mathbf{x}_k^i$ defined by Algorithm~\ref{alg:dmm_sgd},
    if step size $\alpha\leq \frac{1-\sqrt{\rho'}}{4\sqrt{2}L}$, then $\forall K\geq 1$, the following relationship holds true
    \begin{equation}
    \begin{split}
        &\sum_{k=1}^K\frac{1}{N}\sum_{i=1}^N\mathbb{E}[\|\mathbf{x}^i_k-\mathbf{\bar{x}}_k\|^2]\leq\frac{4K\alpha^2\sigma^2}{1-\rho'}\\&+\frac{16\alpha^2}{(1-\sqrt{\rho'})^2}\sum^K_{k=1}\mathbb{E}[\|\frac{1}{N}\sum_{i=1}^N\nabla f^i(\mathbf{x}^i_k)\|^2]\\&+\frac{16K\alpha^2\kappa^2}{(1-\sqrt{\rho'})^2},
    \end{split}
    \end{equation}
where $\bar{\mathbf{x}}_k=\frac{1}{N}\sum_{i=1}^N\mathbf{x}^i_k$.
\end{lemma}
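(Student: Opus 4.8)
The plan is to convert the one-step consensus estimate already assembled in Eqs.~\eqref{consensus}--\eqref{eq_22} into a summable recursion and then absorb the self-referential term using the stated step-size bound. Concretely, I would start from the decomposition $\mathbb{E}[\|(\mathbf{I-Q})\mathbf{X}_k\|^2]\leq 2\alpha^2 T_1 + 2\alpha^2 T_2$ in Eq.~\eqref{eq_16}, substitute the per-step bounds Eq.~\eqref{eq_21} and Eq.~\eqref{eq_22}, and abbreviate $A_k:=\frac{1}{N}\sum_{i=1}^N\mathbb{E}[\|\mathbf{x}^i_k-\bar{\mathbf{x}}_k\|^2]=\mathbb{E}[\|(\mathbf{I-Q})\mathbf{X}_k\|^2]$ and $B_k:=\mathbb{E}[\|\frac1N\sum_i\nabla f^i(\mathbf{x}^i_k)\|^2]$. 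This yields a bound of the form $A_k \leq c_1\alpha^2 + \frac{16\alpha^2L^2}{1-\sqrt{\rho'}}\sum_{\tau=1}^k(\rho')^{(k-\tau)/2}A_\tau + \frac{c_3\alpha^2}{1-\sqrt{\rho'}}\sum_{\tau=1}^k(\rho')^{(k-\tau)/2}B_\tau + c_4\alpha^2$, where the noise and heterogeneity constants come from $T_1$ and the $\kappa^2$ contribution of $T_2$.

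The central manoeuvre is to sum this inequality over $k=1,\dots,K$ and exchange the order of summation in each double sum. Writing $\sum_{k=1}^K\sum_{\tau=1}^k(\rho')^{(k-\tau)/2}A_\tau = \sum_{\tau=1}^K A_\tau \sum_{k=\tau}^K(\rho')^{(k-\tau)/2}$ and bounding the inner geometric tail by $\sum_{m\geq 0}(\rho')^{m/2}=\frac{1}{1-\sqrt{\rho'}}$ upgrades each prefactor $\frac{1}{1-\sqrt{\rho'}}$ to $\frac{1}{(1-\sqrt{\rho'})^2}$ and replaces the per-step quantities by their cumulative sums $\sum_{k=1}^K A_k$ and $\sum_{k=1}^K B_k$. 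After this step the inequality reads $\sum_{k=1}^K A_k \leq \frac{16\alpha^2L^2}{(1-\sqrt{\rho'})^2}\sum_{k=1}^K A_k + (\text{terms in }\sum_k B_k,\ K\sigma^2,\ K\kappa^2)$, i.e. the consensus error appears on both sides.

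The final step is the absorption. The coefficient of the self-referential term is exactly $\frac{16\alpha^2L^2}{(1-\sqrt{\rho'})^2}$, and the hypothesis $\alpha\leq\frac{1-\sqrt{\rho'}}{4\sqrt{2}L}$ is precisely equivalent to $32\alpha^2L^2\leq(1-\sqrt{\rho'})^2$, hence to this coefficient being at most $\tfrac12$. Moving it to the left-hand side and dividing by $1-\tfrac12=\tfrac12$ multiplies the remaining constants by $2$, which is what turns the intermediate $\frac{8\alpha^2}{(1-\sqrt{\rho'})^2}\sum_k B_k$, $\frac{2K\alpha^2\sigma^2}{1-\rho'}$, and the heterogeneity term into the claimed $\frac{16\alpha^2}{(1-\sqrt{\rho'})^2}\sum_k B_k$, $\frac{4K\alpha^2\sigma^2}{1-\rho'}$, and $\frac{16K\alpha^2\kappa^2}{(1-\sqrt{\rho'})^2}$.

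I expect the main obstacle to be bookkeeping rather than conceptual: correctly tracking constants through the summation exchange (especially ensuring the half-power geometric series is bounded by $\frac{1}{1-\sqrt{\rho'}}$ and not $\frac{1}{1-\rho'}$), and verifying that the absorption coefficient is governed solely by the $L^2$-term carried over from $T_2$ so that the step-size threshold controls it cleanly. A secondary point worth checking is the legitimacy of the expectation split in Eq.~\eqref{eq_16}: $T_1$ is bounded via the zero-mean, conditionally independent property $\mathbb{E}[\mathbf{g}^i]=\nabla f^i(\mathbf{x}^i)$ through Lemma~\ref{lemma_1}, while $T_2$ is controlled by the contraction $\|(\mathbf{I-Q})\prod_\tau \mathbf{W}\mathbf{P}_\tau\|\leq(\sqrt{\rho'})^k$ of Lemma~\ref{lemma_2}; both are already granted, so no new estimates are needed beyond the algebra above.
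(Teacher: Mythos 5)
Your proposal is correct and follows essentially the same route as the paper, which proves Lemma~\ref{lemma_3} by combining Eqs.~\eqref{eq_16}, \eqref{eq_21}, and \eqref{eq_22}, summing over $k$, and invoking the summation-exchange/absorption technique of Yu et al.; your explicit verification that $\alpha\leq\frac{1-\sqrt{\rho'}}{4\sqrt{2}L}$ makes the self-referential coefficient $\frac{16\alpha^2L^2}{(1-\sqrt{\rho'})^2}\leq\tfrac12$ is exactly the step the paper leaves implicit. The only minor caveat is that your per-step $\kappa^2$ contribution carries a single factor of $\frac{1}{1-\sqrt{\rho'}}$ after absorption, which is dominated by the stated $\frac{16K\alpha^2\kappa^2}{(1-\sqrt{\rho'})^2}$ since $0<1-\sqrt{\rho'}<1$, so the claimed bound still holds.
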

Combining Eqs~\ref{eq_16},~\ref{eq_21}, and~\ref{eq_22} and summing $k$ over $\{1,2,...,K\}$, and following the proof techniques from~\cite{yu2019linear} can complete the proof for Lemma~\ref{lemma_3}.
With Lemma~\ref{lemma_3} in hand, we are ready to give the proof of Theorem~\ref{dmm-sgd-theo} in the following.
\begin{proof}
    We start with the descent inequality given by the smoothness of $f$ such that
    \begin{equation}\label{eq_24}
    \begin{split}
        &\mathbb{E}[f(\bar{\mathbf{x}}_{k+1})]\leq \mathbb{E}[f(\bar{\mathbf{x}}_k)] + \mathbb{E}[\langle\nabla f(\bar{\mathbf{x}}_k),\bar{\mathbf{x}}_{k+1}-\bar{\mathbf{x}}_k\rangle]\\&+\frac{L}{2}\mathbb{E}[\|\bar{\mathbf{x}}_{k+1}-\bar{\mathbf{x}}_k\|^2]
    \end{split}
    \end{equation}
We first process the second term on the right side of above inequality. Replacing $\bar{\mathbf{x}}_{k+1}-\bar{\mathbf{x}}_k$ with $-\alpha\frac{1}{N}\sum_{i=1}^N\mathbf{g}^i(\mathbf{x}^i_k)$ allows us to study $-\alpha\mathbb{E}[\langle\nabla f(\bar{\mathbf{x}}_k),\frac{1}{N}\sum_{i=1}^N\mathbf{g}^i(\mathbf{x}^i_k)\rangle]$.
Thus, we have
\begin{equation}
\begin{split}
&\langle\nabla f(\bar{\mathbf{x}}_k),\frac{1}{N}\sum_{i=1}^N\mathbf{g}^i(\mathbf{x}^i_k)\rangle=\frac{1}{2}(\|\nabla f(\bar{\mathbf{x}}_k)\|^2+\\&\|\frac{1}{N}\sum_{i=1}^N\nabla f^i(\mathbf{x}^i_k)\|^2-\|\nabla f(\bar{\mathbf{x}}_k)-\frac{1}{N}\sum_{i=1}^N\nabla f^i(\mathbf{x}_k^i)\|^2)\\&\geq \frac{1}{2}(\|\nabla f(\bar{\mathbf{x}}_k)\|^2+\|\frac{1}{N}\sum_{i=1}^N\nabla f^i(\mathbf{x}^i_k)\|^2\\&-L^2\frac{1}{N}\sum_{i=1}^N\|\bar{\mathbf{x}}_k-\mathbf{x}^i_k\|^2).
\end{split}
\end{equation}
The last inequality follows from the smoothness assumption. Therefore, we have
\begin{equation}
\begin{split}
&\mathbb{E}[\langle\nabla f(\bar{\mathbf{x}}_k),\bar{\mathbf{x}}_{k+1}-\bar{\mathbf{x}}_k\rangle]\leq-\frac{\alpha}{2}\mathbb{E}[\|\nabla f(\bar{\mathbf{x}}_k)\|^2\\&+\|\frac{1}{N}\sum_{i=1}^N\nabla f^i(\mathbf{x}^i_k)\|^2]+\frac{\alpha L^2}{2}\frac{1}{N}\sum_{i=1}^N\mathbb{E}[\|\bar{\mathbf{x}}_k-\mathbf{x}^i_k\|^2].
\end{split}
\end{equation}
With Eq.~\ref{eq_24}, the following relationship can be obtained.
\begin{equation}
\begin{split}
&\mathbb{E}[f(\bar{\mathbf{x}}_{k+1})]\leq \mathbb{E}[f(\bar{\mathbf{x}}_k)]-\frac{\alpha}{2}\mathbb{E}[\|\nabla f(\bar{\mathbf{x}}_k)\|^2\\&+\|\frac{1}{N}\sum_{i=1}^N\nabla f^i(\mathbf{x}^i_k)\|^2]+\frac{\alpha L^2}{2}\frac{1}{N}\sum_{i=1}^N\mathbb{E}[\|\bar{\mathbf{x}}_k-\mathbf{x}^i_k\|^2]\\&+\frac{L}{2}\mathbb{E}[\|\frac{1}{N}\sum_{i=1}^N\mathbf{g}^i_k\|^2].
\end{split}
\end{equation}
The last inequality holds due to 
$\bar{\mathbf{x}}_{k+1}-\bar{\mathbf{x}}_k=\frac{1}{N}\sum_{i=1}^N\mathbf{g}^i_k$.
With Lemma~\ref{lemma_1} and some mathematical manipulations, we have 
\begin{equation}
    \begin{split}
&\mathbb{E}[f(\bar{\mathbf{x}}_{k+1})]\leq\mathbb{E}[f(\bar{\mathbf{x}}_k)]-\frac{\alpha}{2}\mathbb{E}[\|\nabla f(\bar{\mathbf{x}}_k)\|^2\\&+\|\frac{1}{N}\sum_{i=1}^N\nabla f^i(\mathbf{x}^i_k)\|^2]+\frac{\alpha L^2}{2}\frac{1}{N}\sum_{i=1}^N\mathbb{E}[\|\bar{\mathbf{x}}_k-\mathbf{x}^i_k\|^2]\\&+\frac{L\alpha^2}{2}(\frac{\sigma^2}{N}+\mathbb{E}[\frac{1}{N}\sum_{i=1}^N\|\nabla f^i(\mathbf{x}^i_k)\|^2])\\&=\mathbb{E}[f(\bar{\mathbf{x}}_k)]-\frac{\alpha}{2}\mathbb{E}[\|\nabla f(\bar{\mathbf{x}}_k)\|^2]-\\&(\frac{\alpha}{2}-\frac{L\alpha^2}{2})\mathbb{E}[\|\frac{1}{N}\sum_{i=1}^N\nabla f^i(\mathbf{x}^i_k)\|^2]\\& + \frac{\alpha L^2}{2N}\sum_{i=1}^N\mathbb{E}[\|\bar{\mathbf{x}}_k-\mathbf{x}^i_k\|^2]+\frac{L\alpha^2\sigma^2}{2N},
    \end{split}
\end{equation}
which implies the following inequality
\begin{equation}\label{eq_29}
\begin{split}
&\mathbb{E}[\|\nabla f(\bar{\mathbf{x}}_k)\|^2]\leq\frac{2}{\alpha}(\mathbb{E}[f(\bar{\mathbf{x}}_k)]-\mathbb{E}[f(\bar{\mathbf{x}}_{k+1})])\\&-(1-L\alpha)\mathbb{E}[\|\frac{1}{N}\sum_{i=1}^N\nabla f^i(\mathbf{x}^i_k)\|^2]+\frac{L^2}{N}\mathbb{E}[\|\bar{\mathbf{x}}_k-\mathbf{x}^i_k\|^2]\\&+\frac{L\alpha\sigma^2}{N}.
\end{split}
\end{equation}
The above relationship is obtained by dividing $\alpha/2$ on both sides.
Summing $k$ over $\{1,2,...,K\}$ yields
\begin{equation}
\begin{split}
    &\sum_{k=1}^K\mathbb{E}[\|\nabla f(\bar{\mathbf{x}}_k)\|^2]\leq \frac{2}{\alpha}(f(\bar{\mathbf{x}}_0)-\mathbb{E}[f(\bar{\mathbf{x}}_K)])\\&-(1-L\alpha)\sum_{k=1}^K\mathbb{E}[\|\frac{1}{N}\sum_{i=1}^N\nabla f^i(\mathbf{x}^i_k)\|^2]\\&+L^2(\frac{4K\alpha^2\sigma^2}{1-\rho'}+\frac{16\alpha^2}{(1-\sqrt{\rho'})^2}\sum_{k=1}^K\mathbb{E}[\|\frac{1}{N}\sum_{i=1}^N\nabla f^i(\mathbf{x}^i_k)\|^2]\\&+\frac{16K\alpha^2\kappa^2}{(1-\sqrt{\rho'})^2})+\frac{L\alpha\sigma^2K}{N}\\&\leq \frac{2}{\alpha}(f(\bar{\mathbf{x}}_0)-f^*)-\\&(1-L\alpha-\frac{16L^2\alpha^2}{(1-\sqrt{\rho'})^2})\sum_{k=1}^K\mathbb{E}[\|\frac{1}{N}\sum_{i=1}^N\nabla f^i(\mathbf{x}^i_k)\|^2]\\&
    +\frac{4KL^2\alpha^2\sigma^2}{1-\rho'}+\frac{16KL^2\alpha^2\kappa^2}{(1-\sqrt{\rho'})^2}+\frac{LK\alpha\sigma^2}{N}
\end{split}
\end{equation}
The last inequality is attained by substituting the conclusion from Lemma~\ref{lemma_3} into Eq.~\ref{eq_29}. 
Due to the condition for the step size $\alpha$, we know that $1-L\alpha-\frac{16L^2\alpha^2}{(1-\sqrt{\rho'})^2}\geq0$, which would simplify the right side in the last inequality. Hence,
\begin{equation}
\begin{split}
&\sum_{k=1}^K\mathbb{E}[\|\nabla f(\bar{\mathbf{x}}_k)\|^2]\leq \frac{2}{\alpha}(f(\bar{\mathbf{x}}_0)-f^*)+\frac{4KL^2\alpha^2\sigma^2}{1-\rho'}\\&+\frac{16KL^2\alpha^2\kappa^2}{(1-\sqrt{\rho'})^2}+\frac{LK\alpha\sigma^2}{N}
\end{split}
\end{equation}
The desirable result is obtained by dividing $K$ on both sides.
\end{proof}
The proof for Corollary~\ref{dmm-sgd-coro} is easily completed by substituting the step size $\alpha=\mathcal{O}(\sqrt{\frac{N}{K}})$ into the error bound in Theorem~\ref{dmm-sgd-theo}.
\subsection{Additional Analysis for \textsc{DIMAT-MSGD}}
To prove Theorem~\ref{dmm-msgd-theo}, we need another auxiliary variable to assist in establishing the relationship between two consecutive steps of $\bar{\mathbf{x}}$. By multiplying $\frac{1}{N}\mathbf{1}\mathbf{1}^\top$ on \[\mathbf{v}_{k+1}^i=\beta\mathbf{v}_k^i-\alpha\mathbf{g}^i_k,\mathbf{x}_{k+1}^i=\mathbf{x}^i_{k+1/2}+\mathbf{v}^i_{k+1},\]we obtain
\begin{equation}
\begin{split}
    &\bar{\mathbf{v}}_{k+1}=\beta\bar{\mathbf{v}}_{k}-\alpha\frac{1}{N}\sum_{i=1}^N\mathbf{g}^i_k\\&
    \bar{\mathbf{x}}_{k+1}=\bar{\mathbf{x}}_{k}+\bar{\mathbf{v}}_{k+1},
\end{split}
\end{equation}
which follows by the approximate equivalence between averaged permuted parameters and averaged parameters in Remark~\ref{remark_1}. To characterize the analysis, we define an auxiliary variable in the following
\begin{equation}\label{auxi_rel}
    \bar{\mathbf{p}}_k:=\frac{1}{1-\beta}\bar{\mathbf{x}}_k-\frac{\beta}{1-\beta}\bar{\mathbf{x}}_{k-1},
\end{equation}
where $k\geq 1$. If $k=0$, then $\bar{\mathbf{p}}_0=\bar{\mathbf{x}}_0=0$. The following lemma states the relationship between $\bar{\mathbf{p}}_{k+1}$ and $\bar{\mathbf{p}}_k$. 
\begin{lemma}\label{lemma_4}
    Let $\bar{\mathbf{p}}_k$ be defined in Eq.~\ref{auxi_rel}. Based on Algorithm~\ref{alg:dmm_msgd}, we have
    \begin{equation}
        \bar{\mathbf{p}}_{k+1}-\bar{\mathbf{p}}_{k}=-\frac{\alpha}{(1-\beta)N}\sum_{i=1}^N\mathbf{g}^i_k.
    \end{equation}
\end{lemma}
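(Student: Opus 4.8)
The plan is to establish the identity by direct algebraic manipulation starting from the definition of the auxiliary sequence $\bar{\mathbf{p}}_k$, relying only on the two averaged recursions $\bar{\mathbf{v}}_{k+1}=\beta\bar{\mathbf{v}}_{k}-\alpha\frac{1}{N}\sum_{i=1}^N\mathbf{g}^i_k$ and $\bar{\mathbf{x}}_{k+1}=\bar{\mathbf{x}}_{k}+\bar{\mathbf{v}}_{k+1}$ that immediately precede the lemma statement. No probabilistic or spectral machinery is needed here; the entire content is a telescoping cancellation, so I would treat the proof as a short deterministic computation.

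First I would subtract the definition of $\bar{\mathbf{p}}_k$ from that of $\bar{\mathbf{p}}_{k+1}$ and regroup the four resulting terms into two consecutive first differences of $\bar{\mathbf{x}}$, giving
\[
\bar{\mathbf{p}}_{k+1}-\bar{\mathbf{p}}_k=\frac{1}{1-\beta}(\bar{\mathbf{x}}_{k+1}-\bar{\mathbf{x}}_k)-\frac{\beta}{1-\beta}(\bar{\mathbf{x}}_k-\bar{\mathbf{x}}_{k-1}).
\]
Next I would invoke the averaged state update to replace each consecutive difference of $\bar{\mathbf{x}}$ by the averaged momentum, namely $\bar{\mathbf{x}}_{k+1}-\bar{\mathbf{x}}_k=\bar{\mathbf{v}}_{k+1}$ and $\bar{\mathbf{x}}_k-\bar{\mathbf{x}}_{k-1}=\bar{\mathbf{v}}_k$, which collapses the expression to $\frac{1}{1-\beta}(\bar{\mathbf{v}}_{k+1}-\beta\bar{\mathbf{v}}_k)$. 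Applying the averaged momentum recursion in the form $\bar{\mathbf{v}}_{k+1}-\beta\bar{\mathbf{v}}_k=-\alpha\frac{1}{N}\sum_{i=1}^N\mathbf{g}^i_k$ then yields the claimed right-hand side $-\frac{\alpha}{(1-\beta)N}\sum_{i=1}^N\mathbf{g}^i_k$.

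The only delicate point worth flagging—more a prerequisite than a genuine obstacle—is the justification of the averaged state recursion $\bar{\mathbf{x}}_{k+1}=\bar{\mathbf{x}}_k+\bar{\mathbf{v}}_{k+1}$, which requires that averaging the merged iterates $\mathbf{x}_{k+1/2}^i=\sum_{j\in Nb(i)}\pi_{ij}\mathbf{P}_k^{ij}\mathbf{x}_k^j$ returns $\bar{\mathbf{x}}_k$, so that the consensus half-step leaves the mean model invariant. This is precisely the approximate equivalence discussed in Remark~\ref{remark_1}, and it follows from $\mathbf{W}\mathbf{P}_k$ being doubly stochastic; once it is accepted, the remaining steps are a one-line telescoping identity. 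I would also check the boundary case $k=0$, where $\bar{\mathbf{p}}_0=\bar{\mathbf{x}}_0=\mathbf{0}$, to confirm consistency with the recursion.
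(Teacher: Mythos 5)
Your proposal is correct and is essentially the same argument the paper relies on: the paper simply defers to Lemma~3 of the cited CGA reference, whose content is exactly this telescoping computation from the definition of $\bar{\mathbf{p}}_k$ combined with the averaged recursions $\bar{\mathbf{x}}_{k+1}-\bar{\mathbf{x}}_k=\bar{\mathbf{v}}_{k+1}$ and $\bar{\mathbf{v}}_{k+1}-\beta\bar{\mathbf{v}}_k=-\frac{\alpha}{N}\sum_{i}\mathbf{g}^i_k$. You also correctly identify the only genuine prerequisite—that the doubly stochastic merging step $\mathbf{W}\mathbf{P}_k$ leaves the mean iterate invariant—which the paper handles via the equivalence noted in Remark~\ref{remark_1}.
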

The proof of Lemma~\ref{lemma_4} follows similarly from Lemma 3 in~\cite{esfandiari2021cross}. We next study the relationship between $\bar{\mathbf{p}}_k$ and $\bar{\mathbf{x}}_k$. 
\begin{lemma}\label{lemma_5}
Based on Algorithm~\ref{alg:dmm_msgd} and $\bar{\mathbf{p}}_k$ in Eq.~\ref{auxi_rel}, the following relationship holds true for all $K\geq 1$
\begin{equation}
\sum_{k=1}^K\|\bar{\mathbf{p}}_k-\bar{\mathbf{z}}_k\|\leq \frac{\alpha^2\beta^2}{(1-\beta)^4}\sum_{k=1}^K\|\frac{1}{N}\sum_{i=1}^N\mathbf{g}^i_k\|^2.
\end{equation}
\end{lemma}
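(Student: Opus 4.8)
The text immediately preceding the statement announces that it studies ``the relationship between $\bar{\mathbf{p}}_k$ and $\bar{\mathbf{x}}_k$,'' so I read $\bar{\mathbf{z}}_k=\bar{\mathbf{x}}_k$ (the $z$ being a typo for $x$) and the left-hand side as a \emph{squared} norm, so that both sides are dimensionally consistent. The plan is to show that the gap $\bar{\mathbf{p}}_k-\bar{\mathbf{x}}_k$ is, up to a scalar, exactly the averaged momentum buffer $\bar{\mathbf{v}}_k$, and then to bound the accumulated momentum by the accumulated averaged gradients. First I would substitute the definition in Eq.~\ref{auxi_rel} directly:
\begin{equation}
\bar{\mathbf{p}}_k-\bar{\mathbf{x}}_k=\frac{1}{1-\beta}\bar{\mathbf{x}}_k-\frac{\beta}{1-\beta}\bar{\mathbf{x}}_{k-1}-\bar{\mathbf{x}}_k=\frac{\beta}{1-\beta}(\bar{\mathbf{x}}_k-\bar{\mathbf{x}}_{k-1}).
\end{equation}
Using the averaged MSGD recursion $\bar{\mathbf{x}}_k=\bar{\mathbf{x}}_{k-1}+\bar{\mathbf{v}}_k$ (obtained by left-multiplying the per-agent updates by $\frac{1}{N}\mathbf{1}\mathbf{1}^\top$, as already carried out above, and invoking the approximate permutation-invariance of averaging noted in Remark~\ref{remark_1}), this collapses to $\bar{\mathbf{p}}_k-\bar{\mathbf{x}}_k=\frac{\beta}{1-\beta}\bar{\mathbf{v}}_k$, hence $\|\bar{\mathbf{p}}_k-\bar{\mathbf{x}}_k\|^2=\frac{\beta^2}{(1-\beta)^2}\|\bar{\mathbf{v}}_k\|^2$. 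This reduces the lemma to proving $\sum_k\|\bar{\mathbf{v}}_k\|^2\leq\frac{\alpha^2}{(1-\beta)^2}\sum_k\|\frac{1}{N}\sum_{i=1}^N\mathbf{g}^i_k\|^2$.

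Next I would unroll the averaged momentum recursion $\bar{\mathbf{v}}_k=\beta\bar{\mathbf{v}}_{k-1}-\alpha\bar{\mathbf{g}}_{k-1}$ (with $\bar{\mathbf{v}}_0=0$ and $\bar{\mathbf{g}}_t:=\frac{1}{N}\sum_{i=1}^N\mathbf{g}^i_t$) into the explicit geometric sum $\bar{\mathbf{v}}_k=-\alpha\sum_{t=0}^{k-1}\beta^{k-1-t}\bar{\mathbf{g}}_t$. Treating $\{\beta^{k-1-t}\}_t$ as nonnegative weights whose total is at most $\frac{1}{1-\beta}$, a weighted Cauchy--Schwarz (Jensen) step gives $\|\bar{\mathbf{v}}_k\|^2\leq\frac{\alpha^2}{1-\beta}\sum_{t=0}^{k-1}\beta^{k-1-t}\|\bar{\mathbf{g}}_t\|^2$. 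Summing over $k$, exchanging the order of the double summation, and bounding the inner geometric tail $\sum_{k>t}\beta^{k-1-t}$ by $\frac{1}{1-\beta}$ a \emph{second} time yields $\sum_k\|\bar{\mathbf{v}}_k\|^2\leq\frac{\alpha^2}{(1-\beta)^2}\sum_t\|\bar{\mathbf{g}}_t\|^2$. Combining this with the prefactor $\frac{\beta^2}{(1-\beta)^2}$ from the first step produces exactly the claimed $\frac{\alpha^2\beta^2}{(1-\beta)^4}$.

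The routine parts are the unrolling and the two geometric-series estimates; the one place demanding care is the \textbf{exchange of the order of summation} together with the second application of the $\frac{1}{1-\beta}$ bound, since it is precisely the stacking of these two factors on top of the $(1-\beta)^{-2}$ coming from the $\bar{\mathbf{v}}_k$-to-$\bar{\mathbf{p}}_k$ conversion that generates the fourth power in the denominator, and mismatching the summation limits there is the easiest way to lose the sharp constant. I also expect a minor bookkeeping offset in the gradient index (the right-hand side runs over $\bar{\mathbf{g}}_k$ whereas the unrolling naturally produces $\bar{\mathbf{g}}_t$ for $t\leq k-1$), which is absorbed harmlessly into the full sum over $\{1,\dots,K\}$.
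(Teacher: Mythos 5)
Your proposal is correct and follows exactly the route the paper intends: the paper's ``proof'' is only a pointer to Lemma~4 of Esfandiari et al.~\cite{esfandiari2021cross}, and the argument there is precisely your chain --- identify $\bar{\mathbf{p}}_k-\bar{\mathbf{x}}_k=\frac{\beta}{1-\beta}\bar{\mathbf{v}}_k$, unroll the averaged momentum recursion, apply weighted Cauchy--Schwarz, and swap the double sum to collect the two extra $\frac{1}{1-\beta}$ factors. Your reading of $\bar{\mathbf{z}}_k$ as $\bar{\mathbf{x}}_k$ and of the left-hand norm as squared is also the right correction of the statement's typos.
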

The proof of Lemma~\ref{lemma_5} can be adapted from that of Lemma 4 in~\cite{esfandiari2021cross}. Next, we define \[\mathbf{V}_k=[\mathbf{v}^1_k;\mathbf{v}^2_k;...;\mathbf{v}^N_k]^\top\in\mathbb{R}^{dN}.\]To prove Theorem~\ref{dmm-msgd-theo}, we need to get the upper bound for $\mathbb{E}[\|(\mathbf{I-Q})\mathbf{X}_k\|^2]$ first, as done for \textsc{DIMAT-SGD}. Hence, we proceed with expanding $\|(\mathbf{I-Q})\mathbf{X}_k\|^2$. 
Recursively applying $\mathbf{v}_{k+1}^i=\beta\mathbf{v}_k^i-\alpha\mathbf{g}^i_k$ and set $\mathbf{V}_0 = 0$ yields
\begin{equation}\label{eq_36}
    \mathbf{V}_k=-\alpha\sum_{\tau=0}^{k-1}\beta^{k-1-\tau}\mathbf{G}_\tau
\end{equation}
With $\mathbf{X}_k=\mathbf{W}\mathbf{P}_{k-1}\mathbf{X}_{k-1}+\mathbf{V}_k$, recursively applying it and using 0 initial condition attains 
\begin{equation}\label{eq_37}
\mathbf{X}_k=\sum_{\tau=1}^{k}\prod_{t=\tau+1}^{k}\mathbf{W}\mathbf{P}_t\mathbf{V}_\tau.
\end{equation}
Substituting Eq.~\ref{eq_36} into Eq.~\ref{eq_37} produces the following relationship:
\begin{equation}
    \begin{split}
        &\mathbf{X}_k=-\alpha\sum_{\tau=1}^{k}\prod_{t=\tau+1}^k\mathbf{W}\mathbf{P}_t\sum_{o=0}^{\tau-1}\beta^{\tau-1-o}\mathbf{G}_o\\&=-\alpha \sum_{\tau=1}^{k}\sum_{o=0}^{\tau-1}\beta^{\tau-1-o}\prod_{t=\tau+1}^k\mathbf{W}\mathbf{P}_t\mathbf{G}_o\\&=-\alpha\sum_{c=0}^{k-1}[\sum_{l=c+1}^k\beta^{l-1-c}]\prod_{t=c+1}^{k-1}\mathbf{W}\mathbf{P}_t\mathbf{G}_c\\&=-\alpha\sum_{\tau=0}^{k-1}\frac{1-\beta^{k-\tau}}{1-\beta}\prod_{t=\tau+1}^{k-1}\mathbf{W}\mathbf{P}_t\mathbf{G}_\tau.
    \end{split}
\end{equation}
Multiplying $\mathbf{I-Q}$ on both sides yields
\begin{equation}
    \mathbf{(I-Q})\mathbf{X}_k=-\alpha\sum_{\tau=0}^{k-1}\frac{1-\beta^{k-\tau}}{1-\beta}\mathbf{(I-Q})\prod_{t=\tau+1}^{k-1}\mathbf{W}\mathbf{P}_t\mathbf{G}_\tau
\end{equation}
It is observed that the above equality has the similar form of Eq.~\ref{eq_14} and we process it as done in Eq.~\ref{eq_16}. With Lemma~\ref{lemma_2} in hand, we present a key lemma for assisting in the proof of Theorem~\ref{dmm-msgd-theo}.
\begin{lemma}\label{lemma_6}
    Let Assumptions~\ref{assumption_1} and~\ref{assumption_3} hold. For $\{\bar{\mathbf{x}}_k\}$ defined in Algorithm~\ref{alg:dmm_msgd}, if $\alpha\leq \frac{(1-\beta)(1-\sqrt{\rho})}{4\sqrt{2}L}$, then for all $K\geq 1$, we have
    \begin{equation}
    \begin{split}
        &\sum_{k=1}^K\frac{1}{N}\sum_{i=1}^N\mathbb{E}[\|\bar{\mathbf{x}}_k-\mathbf{x}^i_k\|^2]\leq \frac{4\alpha^2\sigma^2K}{(1-\beta)^2(1-\rho')}\\&+\frac{16\alpha^2}{(1-\sqrt{\rho'})^2(1-\beta)^2}\sum_{k=1}^K\mathbb{E}[\|\frac{1}{N}\sum_{i=1}^N\nabla f^i(\mathbf{x}^i_k)\|^2]\\&+\frac{16K\alpha^2\kappa^2}{(1-\sqrt{\rho'})^2(1-\beta)^2},
    \end{split}
    \end{equation}
where $\bar{\mathbf{x}}_k=\frac{1}{N}\sum_{i=1}^N\mathbf{x}^i_k$.
\end{lemma}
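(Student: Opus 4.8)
The plan is to follow the blueprint already established for \textsc{DIMAT-SGD} in Lemma~\ref{lemma_3}, since the closed-form expression for $(\mathbf{I-Q})\mathbf{X}_k$ derived just above is structurally identical to Eq.~\ref{eq_14} up to the momentum-induced coefficient $\frac{1-\beta^{k-\tau}}{1-\beta}$. First I would take the squared spectral norm and the expectation of that expression; the left-hand side is exactly $\mathbb{E}[\frac{1}{N}\sum_{i=1}^N\|\mathbf{x}^i_k-\bar{\mathbf{x}}_k\|^2]$. Then, mirroring Eq.~\ref{eq_16}, I would split each stochastic gradient as $\mathbf{G}_\tau=(\mathbf{G}_\tau-\mathbf{H}_\tau)+\mathbf{H}_\tau$ and apply $\|\mathbf{a}+\mathbf{b}\|^2\leq 2\|\mathbf{a}\|^2+2\|\mathbf{b}\|^2$, producing a noise term $T_1$ and a gradient term $T_2$ analogous to Eqs.~\ref{eq_21} and~\ref{eq_22}, but each now carrying the extra weight $\frac{1-\beta^{k-\tau}}{1-\beta}$.

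For the noise term I would exploit the mutual independence and unbiasedness of the stochastic gradients across iterations so that all cross terms vanish and only the diagonal survives. Bounding the coefficient by $\frac{1-\beta^{k-\tau}}{1-\beta}\leq\frac{1}{1-\beta}$ (which enters squared, explaining the $(1-\beta)^2$ denominator) and invoking Lemma~\ref{lemma_2} to control $\|(\mathbf{I-Q})\prod_t\mathbf{W}\mathbf{P}_t\|^2\leq(\rho')^{k-1-\tau}$, the bounded-variance part of Assumption~\ref{assumption_1} together with the geometric sum $\sum_\tau(\rho')^{k-1-\tau}\leq\frac{1}{1-\rho'}$ yields the $\frac{\sigma^2}{(1-\beta)^2(1-\rho')}$ contribution. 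For the gradient term $T_2$, where independence is unavailable, I would instead use the triangle inequality with the spectral-norm decay $(\sqrt{\rho'})^{k-1-\tau}$ from Lemma~\ref{lemma_2} and the same $\frac{1}{1-\beta}$ bound, then decompose $\mathbf{H}_\tau$ through the smoothness of the $f^i$ and the gradient-diversity bound $\kappa^2$ in Assumption~\ref{assumption_1}. This splits $T_2$ into three pieces—the consensus error $\mathbb{E}[\|(\mathbf{I-Q})\mathbf{X}_\tau\|^2]$, the averaged gradient $\mathbb{E}[\|\frac{1}{N}\sum_i\nabla f^i(\mathbf{x}^i_\tau)\|^2]$, and $\kappa^2$—each scaled by $\frac{1}{(1-\sqrt{\rho'})^2(1-\beta)^2}$.

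Finally I would sum over $k\in\{1,\dots,K\}$, interchange the order of summation, and collapse the remaining geometric factor via $\sum_k(\sqrt{\rho'})^{k-\tau}\leq\frac{1}{1-\sqrt{\rho'}}$. The crucial step is that the $T_2$ decomposition leaves a self-referential term with coefficient $\frac{16L^2\alpha^2}{(1-\sqrt{\rho'})^2(1-\beta)^2}$ multiplying $\sum_k\mathbb{E}[\|(\mathbf{I-Q})\mathbf{X}_k\|^2]$ on the right-hand side; under the stated step-size condition $\alpha\leq\frac{(1-\beta)(1-\sqrt{\rho'})}{4\sqrt{2}L}$ this coefficient is at most $\frac{1}{2}$, so it can be absorbed into the left-hand side and rearranged into the claimed bound.

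The main obstacle I anticipate is the bookkeeping of the nested summation that momentum introduces: the inner geometric sum in $\beta$ must be combined with the matrix-product decay in $\sqrt{\rho'}$ so that the factor $\frac{1}{(1-\beta)^2}$ and the two distinct denominators $\frac{1}{1-\rho'}$ (noise) and $\frac{1}{(1-\sqrt{\rho'})^2}$ (gradient) land in exactly the right places, while simultaneously keeping the absorbed consensus-error coefficient below $\frac{1}{2}$ so that the step-size condition alone suffices. Notably, the auxiliary variable $\bar{\mathbf{p}}_k$ and Lemmas~\ref{lemma_4} and~\ref{lemma_5} are \emph{not} needed for this lemma—they serve the descent inequality in the subsequent proof of Theorem~\ref{dmm-msgd-theo}—so this argument is essentially the \textsc{DIMAT-SGD} derivation re-run with the additional $\frac{1}{1-\beta}$ weight tracked carefully throughout.
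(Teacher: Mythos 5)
Your proposal is correct and follows essentially the same route as the paper: the paper likewise derives the closed form $(\mathbf{I-Q})\mathbf{X}_k=-\alpha\sum_{\tau}\frac{1-\beta^{k-\tau}}{1-\beta}(\mathbf{I-Q})\prod_t\mathbf{W}\mathbf{P}_t\mathbf{G}_\tau$, notes it has the same structure as Eq.~\ref{eq_14}, and processes it via the $T_1$/$T_2$ splitting of Eq.~\ref{eq_16} with Lemma~\ref{lemma_2}, deferring the remaining bookkeeping to the corresponding lemmas in \cite{esfandiari2021cross,yu2019linear}. Your observation that Lemmas~\ref{lemma_4} and~\ref{lemma_5} are not needed here, and your absorption of the self-referential consensus term with coefficient at most $\tfrac12$ under the step-size condition, both match the paper's argument.
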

The proof of Lemma~\ref{lemma_6} follows from the proof of Lemma 1 in~\cite{esfandiari2021cross} and Lemma 11 in~\cite{yu2019linear}. With this in hand, we are now ready to prove Theorem~\ref{dmm-msgd-theo}. The proof techniques are quite similar as used in showing Theorem~\ref{dmm-sgd-theo}. Specifically, we apply the smoothness condition and conclusion from Lemma~\ref{lemma_6} to arrive at the conclusion. The proof also follows similarly from Theorem 1 in~\cite{esfandiari2021cross} and Theorem 3 in~\cite{yu2019linear}. The proof for Corollary~\ref{dmm-msgd-coro} is immediately shown by substituting the step size into the conclusion from Theorem~\ref{dmm-msgd-theo}.
\subsection{Additional Analysis for \textsc{DIMAT-AMSGrad}}
The proof for Theorem~\ref{dmm-adam-theo} is fairly non-trivial and technical. In the proof, we need to use an auxiliary sequence as $\bar{\mathbf{p}}_k$ defined before. Therefore, we utilize the same auxiliary variable in the proof. 
Similarly, we next establish the relationship between $\bar{\mathbf{p}}_k$ and $\bar{\mathbf{p}}_{k+1}$. Please note that in the analysis, we may use different notations specified for the convenience of analysis.
\begin{lemma}
For the sequence defined in Eq.~\ref{auxi_rel}, through Algorithm~\ref{alg:dmm_adam}, we have the following relationship
\begin{equation}
\begin{split}
   \bar{\mathbf{p}}_{k+1}- \bar{\mathbf{p}}_{k}&=\alpha\frac{\beta_1}{1-\beta_1}\frac{1}{N}\mathbf{m}^i_k\oslash((\mathbf{u}^i_{k-1})^{1/2}-(\mathbf{u}^i_k)^{1/2})\\&-\alpha\frac{1}{N}\sum^N_{i=1}\mathbf{g}^i_k\oslash(\mathbf{u}^i_k)^{1/2}
\end{split}
\end{equation}
\end{lemma}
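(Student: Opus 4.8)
The plan is to mirror the auxiliary-sequence argument already used for \textsc{DIMAT-MSGD} (Lemma~\ref{lemma_4}), but to carry the adaptive preconditioner $\oslash(\mathbf{u}^i_k)^{1/2}$ along explicitly, since this is exactly what prevents the clean momentum telescoping and forces a separate lemma. First I would pass from the per-agent updates of Algorithm~\ref{alg:dmm_adam} to the network-averaged dynamics. Averaging $\mathbf{x}^i_{k+1}=\mathbf{x}^i_{k+1/2}-\alpha\mathbf{m}^i_k\oslash(\mathbf{u}^i_k)^{1/2}$ over $i$ and using that $\mathbf{W}\mathbf{P}_k$ is doubly stochastic (Assumption~\ref{assumption_3}), so that the consensus step preserves the network mean up to the approximate equivalence of averaged permuted and averaged parameters noted in Remark~\ref{remark_1}, gives $\frac{1}{N}\sum_i\mathbf{x}^i_{k+1/2}=\bar{\mathbf{x}}_k$ and hence $\bar{\mathbf{x}}_{k+1}-\bar{\mathbf{x}}_k=-\alpha\frac{1}{N}\sum_{i=1}^N\mathbf{m}^i_k\oslash(\mathbf{u}^i_k)^{1/2}$. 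This is the Adam analogue of the averaging step carried out for \textsc{DIMAT-MSGD}.

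Next I would substitute this one-step difference, and its shift at $k-1$, into the definition $\bar{\mathbf{p}}_k=\frac{1}{1-\beta_1}\bar{\mathbf{x}}_k-\frac{\beta_1}{1-\beta_1}\bar{\mathbf{x}}_{k-1}$ of Eq.~\ref{auxi_rel}. Writing $\mathbf{a}_k:=\frac{1}{N}\sum_{i=1}^N\mathbf{m}^i_k\oslash(\mathbf{u}^i_k)^{1/2}$, the telescoping of the $\bar{\mathbf{x}}$ terms collapses to $\bar{\mathbf{p}}_{k+1}-\bar{\mathbf{p}}_k=-\frac{\alpha}{1-\beta_1}\big(\mathbf{a}_k-\beta_1\mathbf{a}_{k-1}\big)$. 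This is precisely where the MSGD proof terminates, because there the preconditioner is absent and $\mathbf{m}^i_k-\beta_1\mathbf{m}^i_{k-1}=(1-\beta_1)\mathbf{g}^i_k$ telescopes the momentum into a single gradient term.

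The core algebraic step is then to expand $\mathbf{a}_k-\beta_1\mathbf{a}_{k-1}$ using the momentum recursion $\mathbf{m}^i_k=\beta_1\mathbf{m}^i_{k-1}+(1-\beta_1)\mathbf{g}^i_k$. Since the preconditioners at steps $k$ and $k-1$ differ, the natural move is to add and subtract $\beta_1\mathbf{m}^i_{k-1}\oslash(\mathbf{u}^i_k)^{1/2}$ inside the sum, so that $(\mathbf{m}^i_k-\beta_1\mathbf{m}^i_{k-1})\oslash(\mathbf{u}^i_k)^{1/2}=(1-\beta_1)\mathbf{g}^i_k\oslash(\mathbf{u}^i_k)^{1/2}$ separates out cleanly, yielding the gradient term $-\alpha\frac{1}{N}\sum_i\mathbf{g}^i_k\oslash(\mathbf{u}^i_k)^{1/2}$, while the leftover, the momentum weighted by the difference of the element-wise divisions $(\mathbf{u}^i_{k-1})^{1/2}$ versus $(\mathbf{u}^i_k)^{1/2}$ and scaled by $\alpha\beta_1/(1-\beta_1)$, produces the first term of the claim. (Here I read the lone $\frac{1}{N}\mathbf{m}^i_k$ in the statement as the averaged sum $\frac{1}{N}\sum_{i=1}^N\mathbf{m}^i_{k-1}$, the momentum index lagging one step as forced by this telescoping.)

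The main obstacle, and the whole reason this lemma is isolated, is the time-varying adaptive preconditioner: unlike the MSGD case the momentum refuses to telescope into a single gradient, and one is forced to retain the residual $\beta_1\mathbf{m}^i_{k-1}\oslash\big((\mathbf{u}^i_{k-1})^{1/2}-(\mathbf{u}^i_k)^{1/2}\big)$. Keeping the signs and the element-wise $\oslash$ straight through the add–subtract manipulation is the delicate part; this residual is exactly the quantity that must later be dominated, using the floor $\mathbf{u}^i_k=\max(\hat{\mathbf{u}}^i_k,\epsilon)\ge\epsilon$ together with the bounded-gradient hypothesis $\|\mathbf{g}^i_k\|_\infty\le G_\infty$, in the convergence proof of Theorem~\ref{dmm-adam-theo}.
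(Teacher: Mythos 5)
Your derivation is correct and is essentially the same argument the paper relies on: it simply defers to Lemma A.1 of \cite{chen2023convergence}, which performs exactly your telescoping of the auxiliary sequence $\bar{\mathbf{p}}_k$ followed by the add-and-subtract of $\beta_1\mathbf{m}^i_{k-1}\oslash(\mathbf{u}^i_k)^{1/2}$ to split off the $(1-\beta_1)\mathbf{g}^i_k\oslash(\mathbf{u}^i_k)^{1/2}$ term. Your reading of the statement's first term as the averaged sum $\frac{\alpha\beta_1}{1-\beta_1}\frac{1}{N}\sum_{i=1}^N\bigl(\mathbf{m}^i_{k-1}\oslash(\mathbf{u}^i_{k-1})^{1/2}-\mathbf{m}^i_{k-1}\oslash(\mathbf{u}^i_k)^{1/2}\bigr)$ (momentum index $k-1$, with $\oslash$ distributing over the difference) is the correct interpretation of the paper's notation, consistent with the $D_1$ term later in the supplementary analysis.
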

The proof follows similarly from Lemma A.1 in~\cite{chen2023convergence}, 
Due to the $\textnormal{max}(\cdot;\cdot)$ function in the update law, handling such a function can impose difficulties in the proof. Therefore, we present a lemma to pave the way.
\begin{lemma}
    Define a set of numbers, $c_1, c_2, ..., c_n\in\mathbb{R}$ and denote their mean by $\bar{c}=\frac{1}{n}\sum_{i=1}^nc_i$. Define $h_i(r):=\textnormal{max}\{c_i,r\}$ and $\bar{h}(r)=\frac{1}{n}\sum_{i=1}^nh_i(r)$. For any $r$ and $r'$ with $r'\geq r$, we have
    \begin{equation}
        \sum_{i=1}^n|h_i(r)-\bar{h}_i(r)|\leq \sum_{i=1}^n|h_i(r')-\bar{h}_i(r')|,
    \end{equation}
and when $r\leq \textnormal{min}_{i\in[n]}c_i$, we have
\begin{equation}
\sum_{i=1}^n|h_i(r)-\bar{h}_i(r)|=\sum_{i=1}^n|c_i-\bar{c}|.
\end{equation}
\end{lemma}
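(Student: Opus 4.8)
The plan is to prove the stated comparison by analysing the monotonicity of the single-variable map
\[
D(r) := \sum_{i=1}^n \bigl|h_i(r) - \bar{h}(r)\bigr|,
\]
where I read $\bar h_i(r)$ as the mean $\bar h(r)=\frac1n\sum_i h_i(r)$, and then to obtain the boundary identity by direct substitution. The target is exactly the stated inequality: for $r'\ge r$ one wants $D(r)\le D(r')$. First I would record that each $h_i(r)=\max\{c_i,r\}$ is continuous and piecewise linear in $r$, equal to the constant $c_i$ for $r\le c_i$ and equal to $r$ for $r\ge c_i$. Consequently $\bar h(r)$ and $D(r)$ are continuous and piecewise linear, with all breakpoints contained in $\{c_1,\dots,c_n\}$. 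This reduces the claim to comparing $D$ at two points lying in a common linear piece and then chaining the comparisons across consecutive pieces using continuity; after sorting the thresholds $c_{(1)}\le\cdots\le c_{(n)}$, it suffices to work on an open interval $\bigl(c_{(k)},c_{(k+1)}\bigr)$ on which the partition of indices into \emph{clamped} ($c_i\le r$, so $h_i=r$) and \emph{free} ($c_i>r$, so $h_i=c_i$) is fixed.

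On such an interval I would compute the one-sided slope of $D$. The key algebraic device is the identity $\sum_i|h_i-\bar h| = 2\sum_{i\in B(r)}(\bar h-h_i)$, valid because $\sum_i(h_i-\bar h)=0$ forces the positive and negative deviations to have equal magnitude; here $B(r)$ denotes the set of indices whose $h_i$ lies below the running mean $\bar h(r)$. This representation removes the absolute values at the cost of tracking $B(r)$. Differentiating on the fixed piece, I would use that each clamped $h_i$ has slope $1$, each free $h_i$ has slope $0$, and $\bar h$ has slope $m/n$, where $m$ is the number of clamped indices; collecting the contributions over $B(r)$ then yields a closed expression for the slope of $D$ in terms of $m$, $n$, and $|B(r)|$.

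The main obstacle is precisely the sign determination in this slope computation, since it is what pins down the direction of monotonicity asserted in the statement. Two points have to be handled carefully: first, that every clamped index necessarily lies below $\bar h(r)$ (so $B(r)$ always contains all $m$ clamped indices), which I would verify from $\bar h(r)=\frac1n\bigl(mr+\sum_{c_i>r}c_i\bigr)>r$; and second, that membership of the \emph{free} indices in $B(r)$ is \emph{not} fixed by the clamping alone, so the count $|B(r)|$ relative to $n$ and $m$ must be tracked across the subintervals where $B(r)$ itself changes. This bookkeeping of $|B(r)|$ against $n$ is the delicate crux, and I would treat it as the decisive step that establishes the claimed comparison between $D(r)$ and $D(r')$. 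Finally, the equality statement is immediate: taking $r\le\min_i c_i$ makes every index free, so $h_i(r)=c_i$ and $\bar h(r)=\bar c$, whence $D(r)=\sum_i|c_i-\bar c|$.
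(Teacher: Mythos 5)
Your machinery is sound---piecewise linearity of $D(r):=\sum_{i=1}^n|h_i(r)-\bar h(r)|$ with breakpoints at the $c_i$ and at the mean-crossings of the free indices, the identity $D(r)=2\sum_{i\in B(r)}\bigl(\bar h(r)-h_i(r)\bigr)$ from the deviations summing to zero, and the fact that every clamped index lies below $\bar h(r)$---but you stop exactly at the step you yourself call decisive, and carrying it out refutes the direction you set out to prove. On a subinterval where the clamped set has size $m$ and $B(r)$ is fixed, your representation gives $D(r)=2\bigl(|B(r)|\,\bar h(r)-mr-\textstyle\sum_{i\;\mathrm{free},\,i\in B(r)}c_i\bigr)$, and since $\bar h'(r)=m/n$ the slope is
\[
D'(r)=\frac{2m}{n}\bigl(|B(r)|-n\bigr)=-\frac{2ma}{n}\;\leq\;0,
\]
where $a$ is the number of free indices above the mean. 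So the bookkeeping of $|B(r)|$ against $n$ that you flagged as the delicate crux is in fact trivial---$|B(r)|\leq n$ suffices---and $D$ is \emph{non-increasing}, not non-decreasing. A two-point check confirms this: with $c=(0,1)$, $r=1/2$, $r'=2$ one gets $D(r)=1/2>0=D(r')$, contradicting the inequality as printed. The statement in the paper has the direction reversed (a typo; note also $\bar h_i(r)$ should read $\bar h(r)$, as you correctly assumed): the true claim, and the one actually needed in the AMSGrad analysis---where clamping $\hat{\mathbf u}^i_k$ at $\epsilon$ must not \emph{increase} the disagreement across agents---is $\sum_i|h_i(r')-\bar h(r')|\leq\sum_i|h_i(r)-\bar h(r)|$ for $r'\geq r$, which combined with your (correct and immediate) equality case yields $D(\epsilon)\leq\sum_i|c_i-\bar c|$.

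Two gaps, then. First, as submitted this is a plan, not a proof: the sign determination that pins down the monotonicity is deferred rather than executed, and a proof whose ``decisive step'' is left as an intention is incomplete. Second, because you took the misprinted direction at face value, completing your own slope computation would have landed you in contradiction with your stated target; the repair is simply to prove the reversed inequality, which your framework delivers in the two lines above (chain the non-positive slopes across the finitely many subintervals using continuity of $D$). For comparison: the paper offers no proof of this lemma at all---it is imported from the cited reference \cite{chen2023convergence}, where the monotonicity is stated in the decreasing direction---so the substantive discrepancy is between your target and the corrected statement, not between your method and the paper's; once reoriented, your argument is a perfectly good self-contained proof.
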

With the above two lemmas, we are now ready to show the detailed proof for Theorem~\ref{dmm-adam-theo}. Please note that the proof techniques follow from the majority of proof of Theorems 2 and 3 in~\cite{chen2023convergence}. However, the significant difference is to incorporate the permutation matrix $\mathbf{P}$ into the update law such that it leads to the impact of the spectral gap on the error bounds. We will next arrive at this with the derivation. We first define two auxiliary variables:
\[\mathbf{M}_k=[\mathbf{m}^1_k;\mathbf{m}^2_k;...;\mathbf{m}^N_k]^\top\in\mathbb{R}^{dN},\] and \[\mathbf{U}_k=[\mathbf{u}^1_k;\mathbf{u}^2_k;...;\mathbf{u}^N_k]^\top\in\mathbb{R}^{dN}\]
Based on Algorithm~\ref{alg:dmm_adam}, we have
\begin{equation}
    \mathbf{X}_k = \mathbf{W}\mathbf{P}_{k-1}\mathbf{X}_{k-1}-\alpha\mathbf{M}_{k-1}\oslash\mathbf{U}_{k-1}^{1/2}.
\end{equation}
Recursively applying the above equation yields
\begin{equation}
    \mathbf{X}_k =\prod_{\tau=1}^{k-1}\mathbf{W}\mathbf{P}_\tau\mathbf{X}_1-\alpha \sum_{\tau=1}^{k-1}\prod_{t=\tau+1}^{k-1}\mathbf{W}\mathbf{P}_t\mathbf{M}_\tau\oslash \mathbf{U}_\tau^{1/2}
\end{equation}
Setting 0 initial condition and multiplying by $\mathbf{I-Q}$ on both sides attains the following relationship:
\begin{equation}
    (\mathbf{I-Q})\mathbf{X}_k =-\alpha \sum_{\tau=1}^{k-1}(\mathbf{I-Q})\prod_{t=\tau+1}^{k-1}\mathbf{W}\mathbf{P}_t\mathbf{M}_\tau\oslash \mathbf{U}_\tau^{1/2}
\end{equation}
We then calculate its squared norm and take the expectation to get the similar equation as Eq.~\ref{consensus}.
\begin{eqnarray*}
    \mathbb{E}[\|(\mathbf{I-Q})\mathbf{X}_k\|^2]
    =&\alpha^2\mathbb{E}[\|\sum_{\tau=1}^{k-1}(\mathbf{I-Q}) \\
    &\prod_{t=\tau+1}^{k-1}\mathbf{W}\mathbf{P}_t\mathbf{M}_\tau\oslash \mathbf{U}_\tau^{1/2}\|^2].
\end{eqnarray*}
For \textsc{DIMAT-SGD} and \textsc{DIMAT-MSGD}, to acquire the upper bound of $\mathbb{E}[\|(\mathbf{I-Q})\mathbf{X}_k\|^2]$, we used the trick $\mathbf{G}_\tau-\mathbf{H}_\tau+\mathbf{H}_\tau$ as there is no assumption for bounded (stochastic) gradients. However, For Adam type of algorithms, to the best of our knowledge, this assumption is still required to achieve the convergence. Regarding its relaxation we will leave in our future work. Hence, based on Lemma~\ref{lemma_2}, we have the following relationship
\begin{equation}
\mathbb{E}[\|(\mathbf{I-Q})\mathbf{X}_k\|^2]\leq \alpha^2\mathbb{E}[\|\sum_{\tau=1}^{k-1}(\rho')^{k-1-\tau}\mathbf{M}_\tau\oslash\mathbf{U}^{1/2}_\tau\|^2]
\end{equation}
Thus, based on the conditions in Theorem~\ref{dmm-adam-theo}, we can easily get the inequality as follows
\begin{equation}\label{eq_51}
   \mathbb{E}[\|(\mathbf{I-Q})\mathbf{X}_k\|^2]\leq\frac{\alpha^2 NdG^2_\infty}{(1-\rho')^2\epsilon}, 
\end{equation}

which holds due to $\|\mathbf{g}^i_k\|\leq G_\infty, [\mathbf{U}^i_k]_j\geq \epsilon$. Similarly, the upper bound of $\mathbb{E}[\|\bar{\mathbf{p}}_k-\bar{\mathbf{x}}_k\|^2]$ is as follows\begin{equation}\label{eq_52}\begin{split}
\mathbb{E}[\|\bar{\mathbf{p}}_k-\bar{\mathbf{x}}_k\|^2]&= \mathbb{E}[\|\frac{\beta_1}{1-\beta_1}(\bar{\mathbf{x}}_k-\bar{\mathbf{x}}_{k-1})\|^2]\\&=(\frac{\beta_1}{1-\beta_1})^2\alpha^2\mathbb{E}[\|\frac{1}{N}\sum_{i=1}^N\mathbf{m}^i_{k-1}\oslash(\mathbf{u}^i_k)^{1/2}\|^2]\\&\leq (\frac{\beta_1}{1-\beta_1})^2\frac{\alpha^2dG^2_\infty}{N\epsilon}\end{split}\end{equation}
Therefore, we can observe how the permutation matrix can be squeezed in the analysis such that the error bound is impacted with respect to the spectral gap $1-\rho'$. It also implies that existing analysis can be adapted to give the improved error bound shown in Theorem~\ref{dmm-adam-theo}. Thus, we are not going to repeat all proof steps that are similar to existing analysis in~\cite{chen2023convergence}, while, instead, giving the proof sketch, which assists in arriving at Theorem~\ref{dmm-adam-theo}.
\begin{proof}
We now present the proof sketch for Theorem~\ref{dmm-adam-theo} and will refer interested readers to related works for more detail.
\begin{itemize}
    \item \textit{Step 1: Bounding gradient.} With the assistance of auxiliary sequence $\{\bar{\mathbf{p}}_k\}$, we don't have to consider the complicated update dependence on $\mathbf{m}_k$ and thus perform convergence analysis for the upper bound on $\nabla f(\bar{\mathbf{p}}_k)$. With this in hand, based on the smoothness of $f$, we subsequently construct the bound for $\frac{1}{K}\sum_{k=1}^K\mathbb{E}[\|\frac{\nabla f(\bar{\mathbf{x}}_k)}{(\bar{\mathbf{u}}_k)^{1/4}}\|^2]$, where $\bar{\mathbf{u}}_k=\frac{1}{N}\sum_{i=1}^N\mathbf{u}^i_k$.
    \begin{equation}
    \begin{split}
        &\frac{1}{K}\sum_{k=1}^K\mathbb{E}[\|\frac{\nabla f(\bar{\mathbf{x}}_k)}{(\bar{\mathbf{u}}_k)^{1/4}}\|^2]\leq \frac{2\mathbb{E}[f(\bar{\mathbf{p}}_1)-f(\bar{\mathbf{p}}_{K+1})]}{K\alpha}\\&+\frac{2\beta_1D_1}{K(1-\beta_1)}+\frac{2D_2}{K}+\frac{3D_3}{K}+\frac{L\mathbb{E}[\|\bar{\mathbf{p}}_{k+1}-\bar{\mathbf{p}}_k\|^2]}{K\alpha},
    \end{split}
    \end{equation}
    where \[\begin{split}D_1 &= \sum_{k=1}^K\mathbb{E}[\langle\nabla f(\bar{\mathbf{p}}_k),\frac{1}{N}\sum_{i=1}^N\mathbf{m}^i_{k-1}\oslash ((\mathbf{u}^i_{k-1})^{1/2}\\&-(\mathbf{u}^i_k)^{1/2})\rangle],\end{split}\]\[\begin{split}D_2 &= \sum_{k=1}^K\mathbb{E}[\langle\nabla f(\bar{\mathbf{p}}_k),\frac{1}{N}\sum_{i=1}^N\nabla f^i(\mathbf{x}^i_k)\oslash ((\bar{\mathbf{u}}_k)^{1/2}\\&-(\mathbf{u}^i_k)^{1/2})\rangle],\end{split}\]\[\begin{split}D_3 &= \sum_{k=1}^K\mathbb{E}[\|\frac{\frac{1}{N}\sum_{i=1}^N\nabla f^i(\mathbf{x}^i_k)-\nabla f(\bar{\mathbf{x}}_k)}{(\bar{\mathbf{u}}_k)^{1/4}}\|^2\\&+\|\frac{\nabla f(\bar{\mathbf{p}}_k)-\nabla f(\bar{\mathbf{x}}_k)}{(\bar{\mathbf{u}}_k)^{1/4}}\|^2].\end{split}\]
    The smoothness condition and Eqs.~\ref{eq_51} and~\ref{eq_52} grant us the upper bound of $D_3$. Establishing the upper bounds for $D_1$ and $D_2$ give rise to the terms related to $\mathbb{E}[\sum_{k=1}^K\|\mathbf{V}_{k-1}-\mathbf{V}_{k-2}\|_{abs}]$, where $\|\mathbf{C}\|_{abs}=\sum_{i,j}|\mathbf{C}_{i,j}|$ denotes the entry-wise $L_1$ norm of a matrix. $\mathbf{V}_k$ is established as a non-decreasing function such that $\mathbb{E}[\sum_{k=1}^K\|\mathbf{V}_{k-1}-\mathbf{V}_{k-2}\|_{abs}]=\mathbb{E}[\sum_{i=1}^N\sum_{j=1}^d([\mathbf{v}^i_{K-1}]_j-[\mathbf{v}^i_0]_j)]$. Due to $\|\mathbf{g}^i_k\|_\infty\leq G_\infty$, it is proved that $[\mathbf{v}^i_k]_j\leq G^2_\infty$. With this, we can conclude that $\mathbb{E}[\sum_{k=1}^K\|\mathbf{V}_{k-1}-\mathbf{V}_{k-2}\|_{abs}]\leq NdG^2_\infty$.
    \item \textit{Step 2: Bounding the drift term variance.} One important term in the proof is the stochastic gradient variance multiplied by the adaptive learning rate, $\mathbb{E}[\|\frac{1}{N}\sum_{i=1}^N\mathbf{g}^i_k\oslash\mathbf{u}^i_k\|^2]\leq\mathbb{E}[\|\frac{1}{N}\sum_{i=1}^N\nabla f^i(\mathbf{x}^i_k)\oslash(\mathbf{u}^i_k)^{1/2}\|^2]+\frac{d\sigma^2}{N\epsilon}$. To process the first term on the right side of the above inequality, we can use $\bar{\mathbf{u}}_k$ and $\|\mathbf{a}+\mathbf{b}\|^2\leq 2\|\mathbf{a}\|^2+2\|\mathbf{b}\|^2$ to transform from $\mathbb{E}[\|\frac{1}{N}\sum_{i=1}^N\nabla f^i(\mathbf{x}^i_k)\oslash(\mathbf{u}^i_k)^{1/2}\|^2]$ to $\mathbb{E}[\|\frac{1}{N}\sum_{i=1}^N\nabla f^i(\mathbf{x}^i_k)\oslash(\bar{\mathbf{u}}_k)^{1/2}\|^2]$. We then can bound them as performed for $D_2$ and $D_3$. Hence, we will reach to the bound in the following for $\frac{1}{K}\sum_{k=1}^K\mathbb{E}[\|\frac{\nabla f(\bar{\mathbf{x}}_k)}{(\bar{\mathbf{u}}_k)^{1/4}}\|^2]$:
    \begin{equation}\label{eq_53}
    \begin{split}
        &\frac{1}{K}\sum_{k=1}^K\mathbb{E}[\|\frac{\nabla f(\bar{\mathbf{x}}_k)}{(\bar{\mathbf{u}}_k)^{1/4}}\|^2]\leq C_1(\frac{\mathbb{E}[f(\bar{\mathbf{p}}_1)-f^*]}{K\alpha}+\frac{\alpha d\sigma^2}{N})\\&+C_2\alpha^2d+C_3\alpha^3d+\frac{C_4+C_5\alpha}{K\sqrt{N}}NdG^2_\infty,
    \end{split}
    \end{equation}
    where $C_1=\textnormal{max}\{4, 4L/\epsilon\}$, $C_2=6(\frac{\beta^2_1}{(1-\beta_1)^2}+\frac{1}{(1-\rho')^2})\frac{LG^2_\infty}{\epsilon^{1.5}}$, $C_3=\frac{16L^2G^2_\infty}{\epsilon^2}$, $C_4=\frac{2(\lambda+\frac{\beta_1}{1-\beta_1})G^2_\infty}{\epsilon^{1.5}(1-\lambda)}$, $C_5=\frac{2LG^2_\infty(\lambda+\frac{\beta_1}{1-\beta_1}+2)}{\epsilon^2(1-\lambda)}$.
    \item \textit{Setting explicit step size.} Setting the step size as $\alpha=\mathcal{O}(\sqrt{\frac{1}{dK}})$, substituting it into Eq.~\ref{eq_53}, and using the fact that $\|\bar{\mathbf{u}}_k\|_\infty\leq G^2_\infty$ yields the desired result.
\end{itemize}
With the above three steps, the desired conclusion is obtained.
\end{proof}
\begin{figure*}[ht!]    
    \begin{subfigure}[b]{0.49\textwidth}
        \centering
        \includegraphics[width=\linewidth]{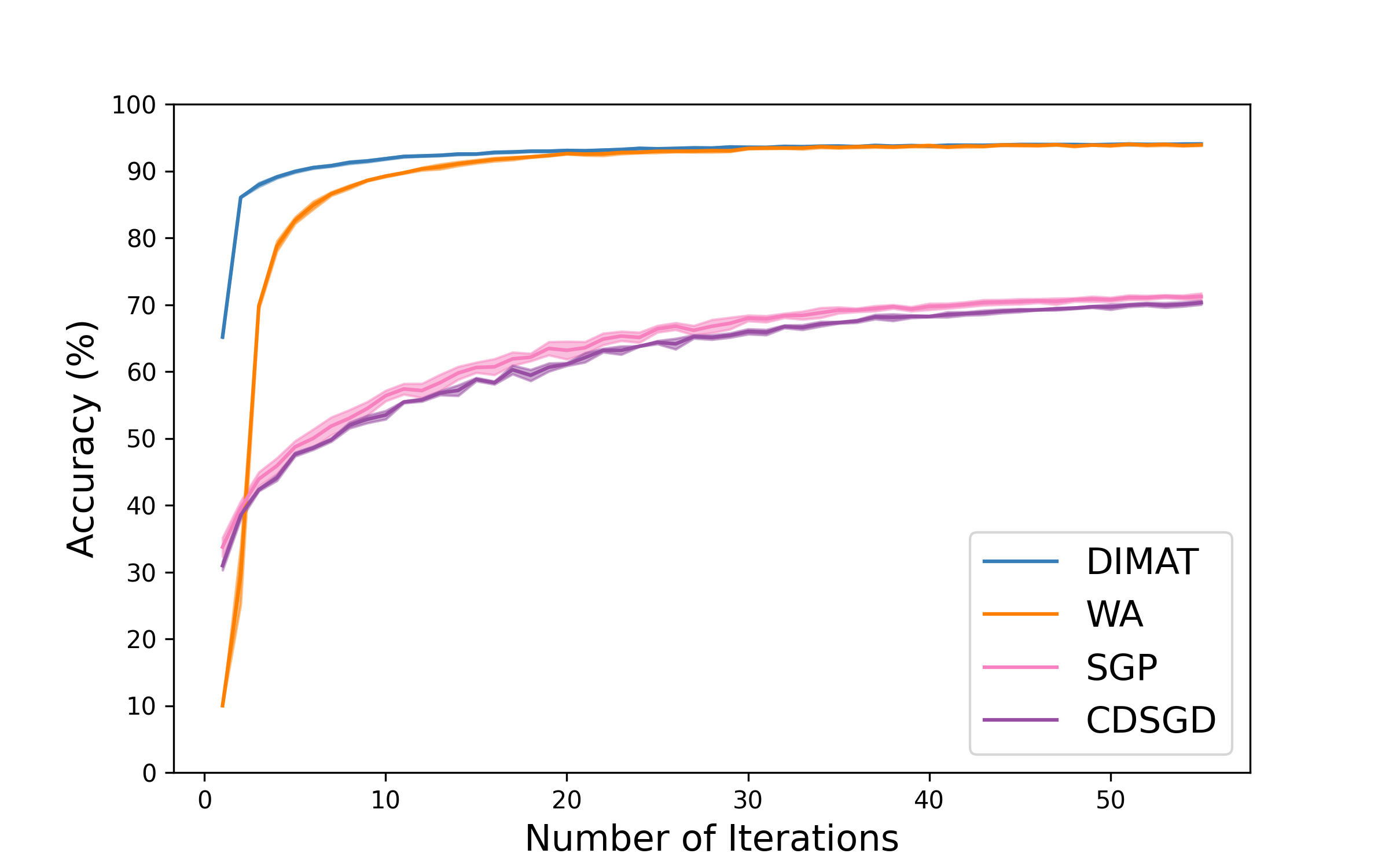}
        \caption{Fully Connected}
        \label{fig:cifar10FC}
    \end{subfigure}
    \begin{subfigure}[b]{0.49\textwidth}
        \centering
        \includegraphics[width=\linewidth]{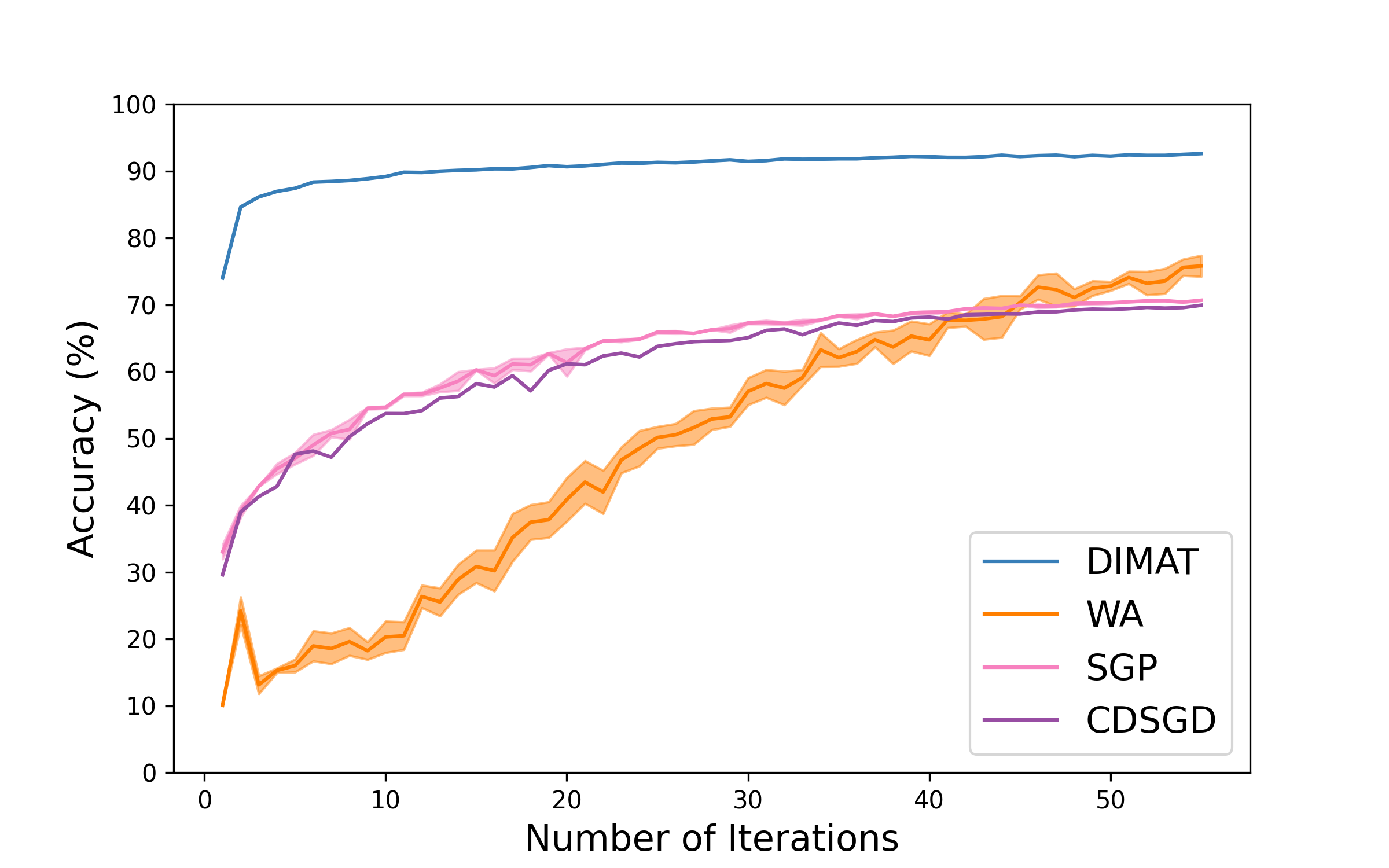}
        \caption{Ring}
        \label{fig:cifar10R}
    \end{subfigure}
    \caption{Comparing algorithmic accuracy (mean$\pm$std) in fully connected (a) and ring (b) topologies with ResNet-20 architecture on CIFAR-10 IID data for 5 agents.}
    \label{fig:iid_cifar10}
\end{figure*}
\begin{figure*}[h!]    
    \begin{subfigure}[b]{0.49\textwidth}
        \centering
        \includegraphics[width=\linewidth]{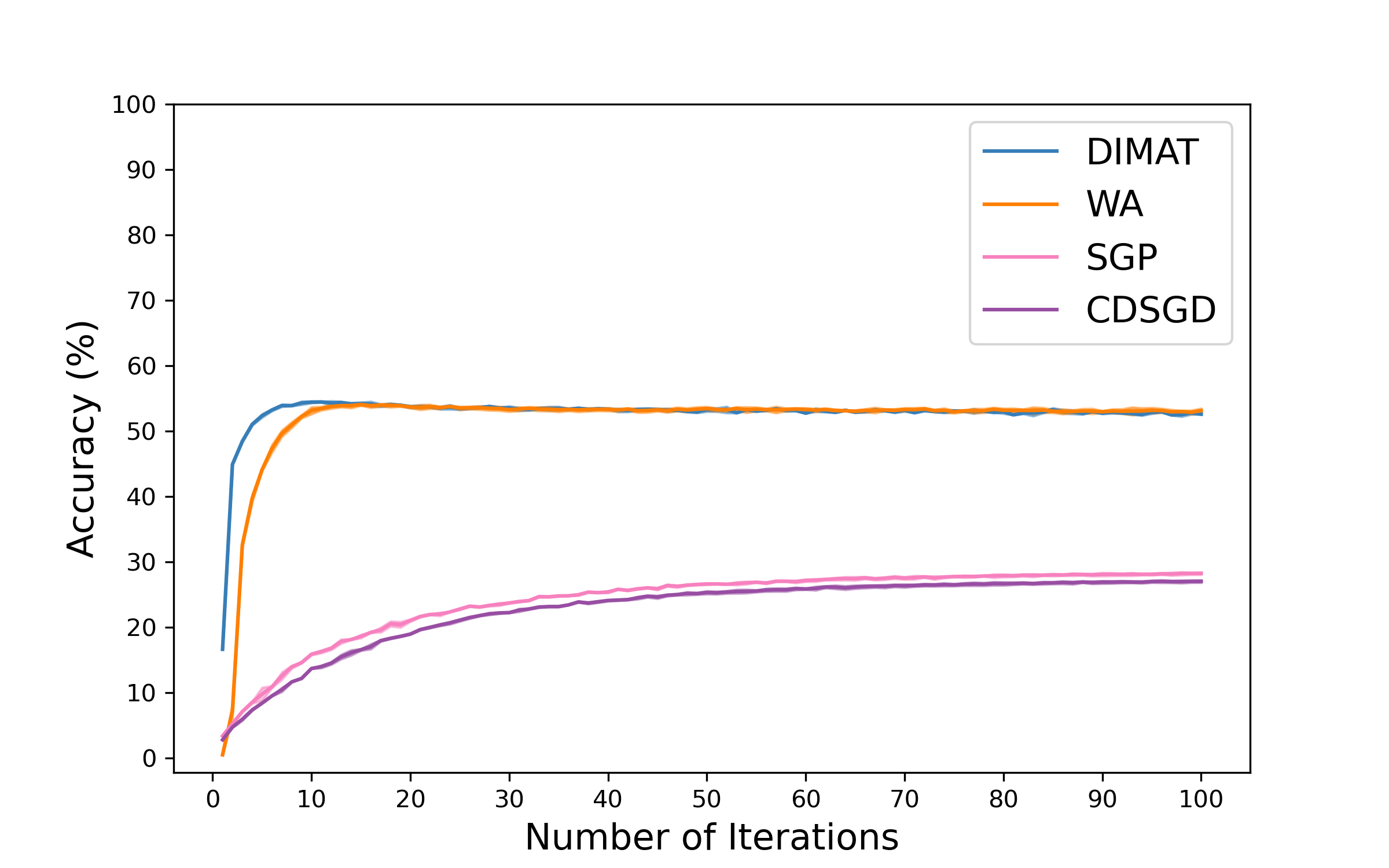}
        \caption{Fully Connected}
        \label{fig:tinyimFC}
    \end{subfigure}
    \begin{subfigure}[b]{0.49\textwidth}
        \centering
        \includegraphics[width=\linewidth]{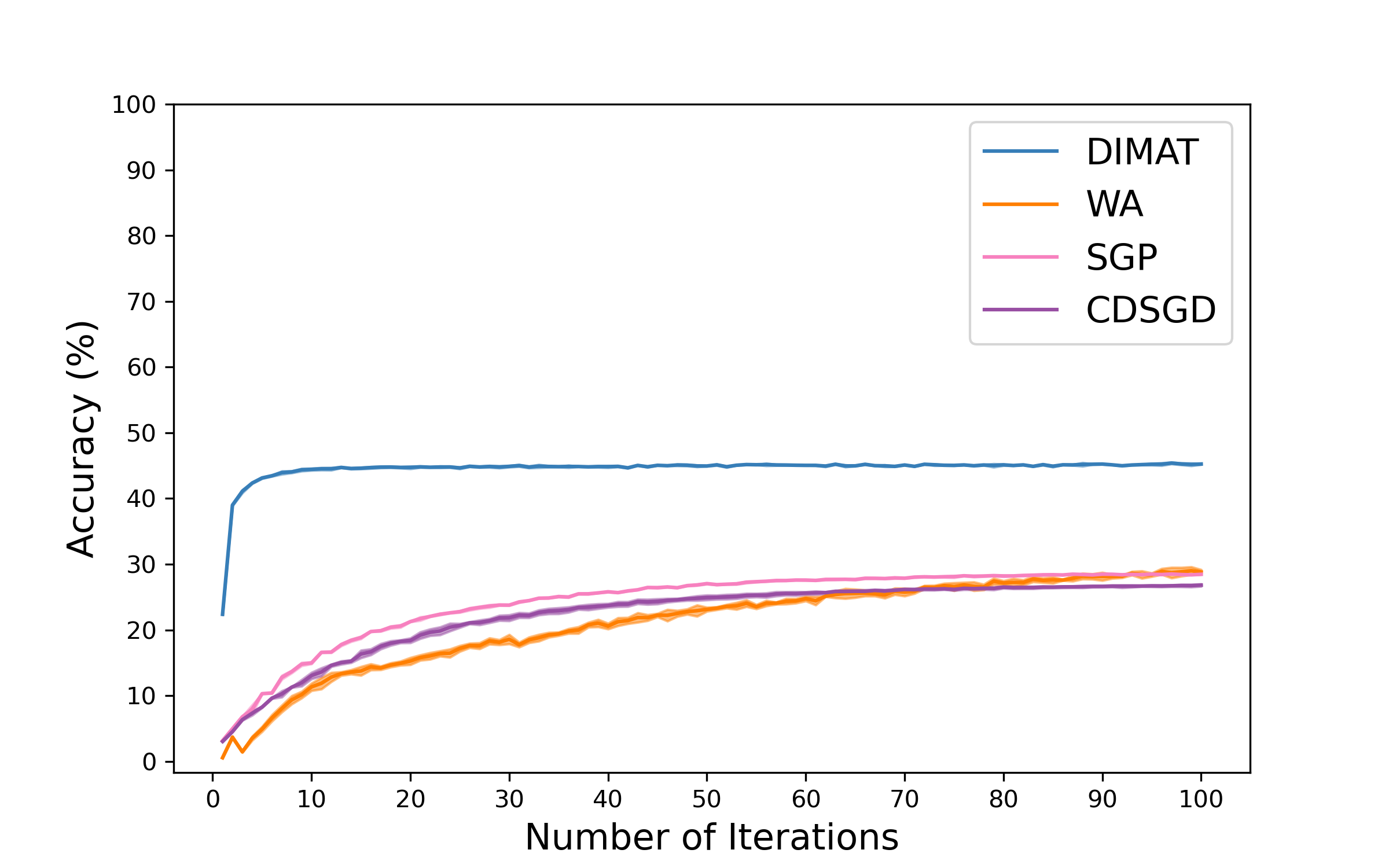}
        \caption{Ring}
        \label{fig:tinyimR}
    \end{subfigure}
    \caption{Comparing algorithmic accuracy (mean$\pm$std) in fully connected (a) and ring (b) topologies with ResNet-20 architecture on Tiny ImageNet IID data for 5 agents.}
    \label{fig:iid_tinyim}
\end{figure*}
\begin{figure*}[h!]    
    \begin{subfigure}[b]{0.49\textwidth}
        \centering
        \includegraphics[width=\linewidth]{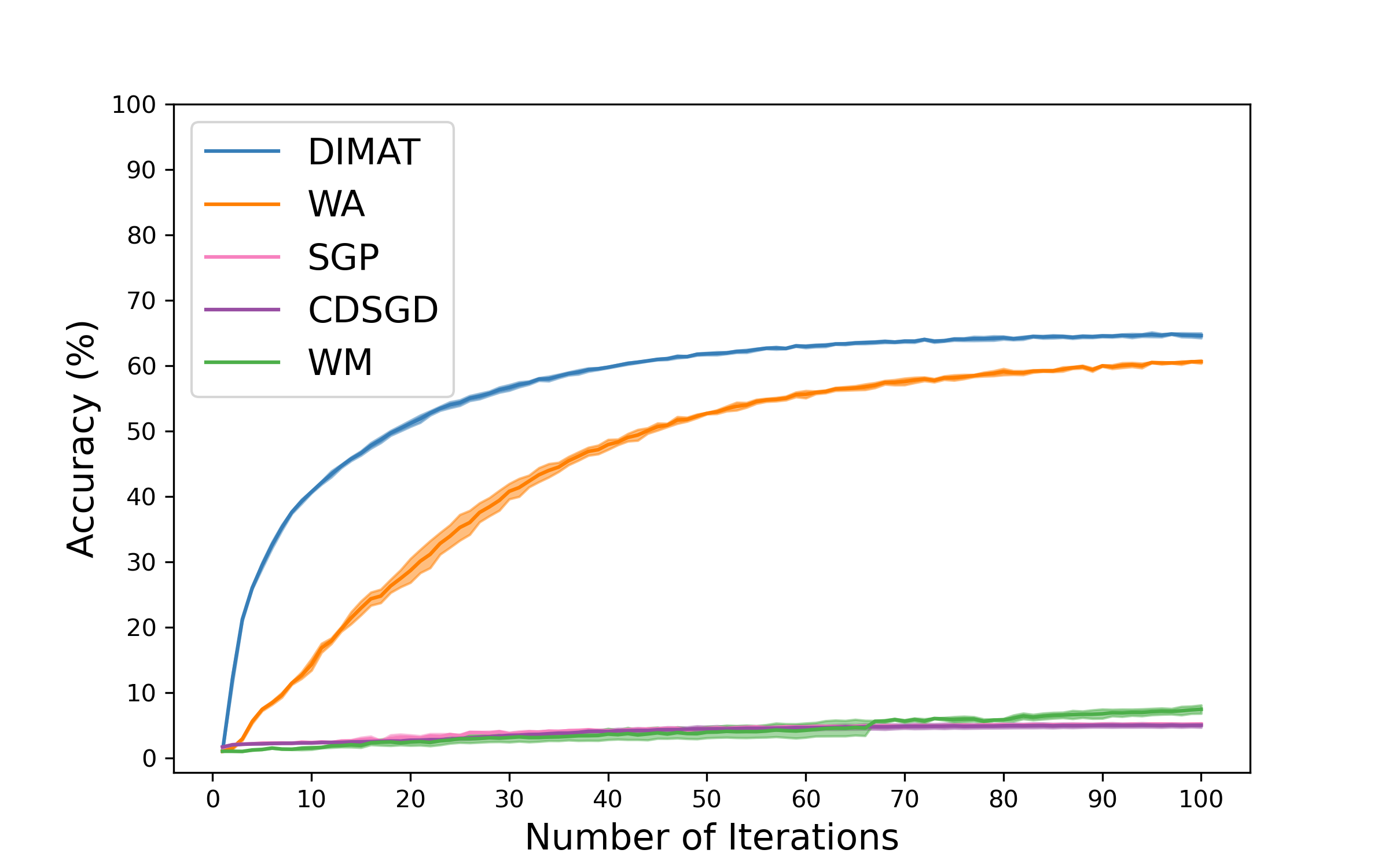}
        \caption{10 Agents}
        \label{fig:10vgg16}
    \end{subfigure}
    \begin{subfigure}[b]{0.49\textwidth}
        \centering
        \includegraphics[width=\linewidth]{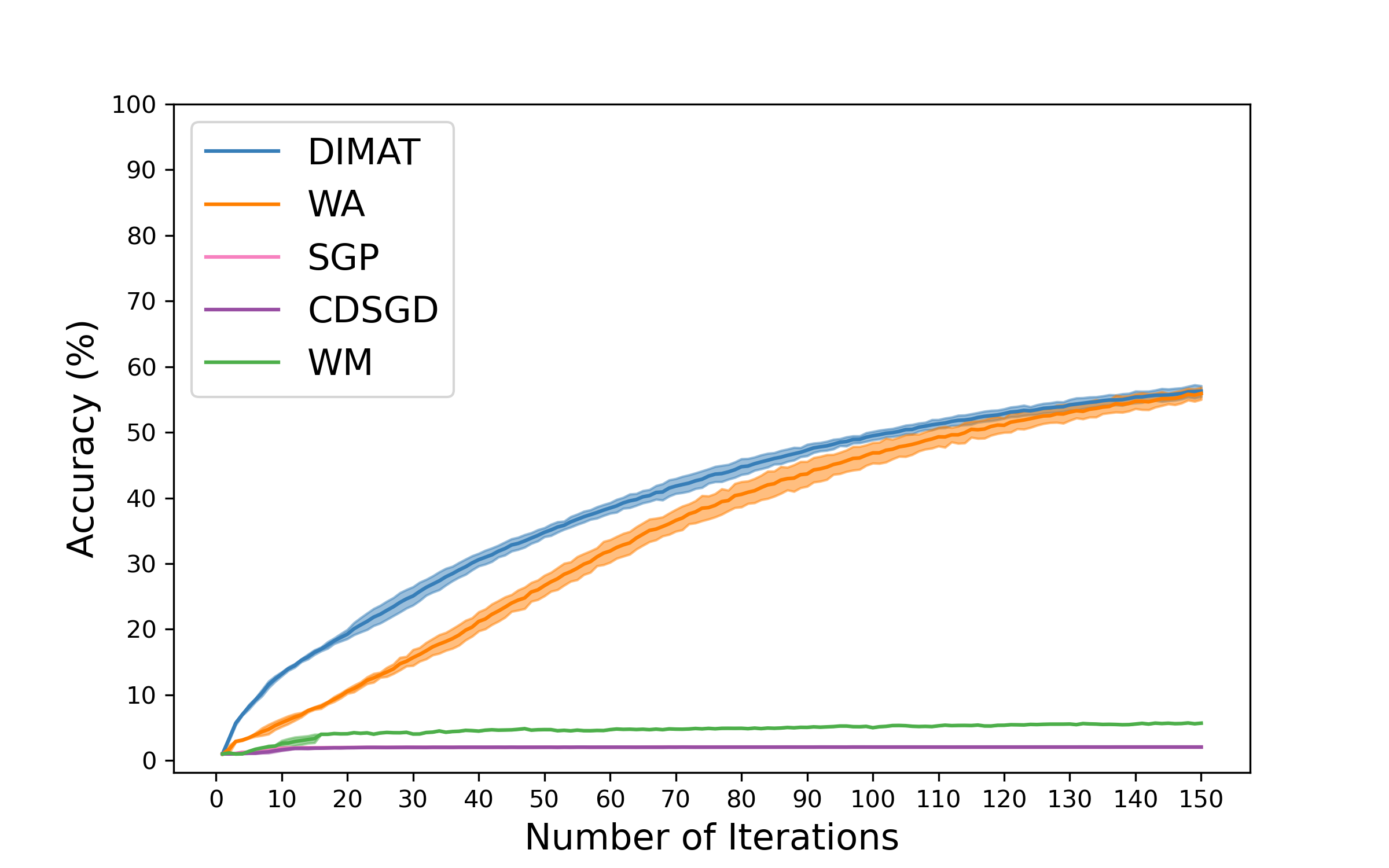}
        \caption{20 Agents}
        \label{fig:20vgg16}
    \end{subfigure}
    \caption{Comparing algorithmic accuracy (mean$\pm$std) for ten (a) and twenty (b) agents with VGG16 architecture on CIFAR-100 IID data.}
    \label{fig:iid_scale}
\end{figure*}
\section{Additional Experimental Results}\label{sec:additional_results}
Within this section, we present supplementary experimental results encompassing additional datasets, specifically Tiny ImageNet and CIFAR10, along with an alternative architecture, ResNet50.

Moreover, our investigation delves into scalability, considering 10 and 20 models across various baseline algorithms. Additionally, we scrutinize the impact of different epoch configurations for DIMAT, providing insights into its performance over successive iterations.

Furthermore, we sought to explore the key factors contributing to the challenge of scalability in non-iid scenarios. By employing random initialization techniques and incorporating larger datasets, we aimed to assess the efficacy of our model under diverse conditions.

\subsection{Additional Dataset Comparisons}\label{subsec:additional_dataset}
In this subsection, we present an analysis of the Tiny ImageNet and CIFAR10 datasets. Table \ref{tab:additional_datasets} provides a comprehensive overview of algorithmic performance across these additional datasets. The evaluation considers two key scenarios for each dataset: Fully Connected (FC) and Ring topologies in non-IID data. The reported values represent the mean and standard deviation of the performance metric obtained through multiple trial runs.

\begin{table}[h!]
\caption{Comparing algorithmic accuracy (mean$\pm$std) in fully connected and ring topologies with ResNet-20 architecture on Tiny ImageNet and CIFAR-10 non-IID data for 5 agents.}
\label{tab:additional_datasets}
\resizebox{\columnwidth}{!}{%
\begin{tabular}{lllll}
\hline
\multirow{2}{*}{\textbf{Algorithm}} & \multicolumn{2}{c}{\textbf{Tiny ImageNet}}        & \multicolumn{2}{c}{\textbf{CIFAR10}}              \\ \cline{2-5} 
                                    & FC                      & Ring                    & FC                      & Ring                    \\ \hline
SGP                                 & 9.49$\pm$0.43           & 7.42$\pm$0.24           & 19.18$\pm$0.11          & 19.04$\pm$0.27          \\
CDSGD                               & 9.05$\pm$0.15           & 7.36$\pm$0.25           & 18.85$\pm$0.08          & 19.20$\pm$0.16          \\
WA                                  & 48.59$\pm$0.71          & 10.48$\pm$0.25          & \textbf{49.25$\pm$3.95} & \textbf{23.14$\pm$1.46} \\
DIMAT (ours)                        & \textbf{49.09$\pm$0.23} & \textbf{17.70$\pm$0.14} & 27.12$\pm$3.39          & 20.22$\pm$0.20          \\ \hline
\end{tabular}%
}
\end{table}

Notably, in the case of Tiny ImageNet, our proposed algorithm, DIMAT, emerges as a standout performer, outperforming all baseline algorithms across both non-IID and IID scenarios, as it can be seen in Table \ref{tab:additional_datasets} and fig.\ref{fig:iid_tinyim}. However, for CIFAR10, the limited pretraining on only two classes results in a bias among agents, impacting their learning effectiveness. This bias is evident in their suboptimal performance. Nevertheless, in an IID setting, as depicted in fig. \ref{fig:iid_cifar10}, DIMAT demonstrates superior performance on both fully connected and ring topologies.

\subsection{Additional Architecture Comparisons}\label{subsec:additional_architecture}
The results for ResNet50 are presented in Table \ref{tab:accuracy-comparison-RN50}. Notably, the performances of WA and DIMAT are comparable. It is evident that DIMAT's performance is highly dependent on the chosen architecture. Additionally, in the IID ring scenario, DIMAT emerges as the top performer, surpassing all other algorithms in terms of accuracy.

\begin{table}[ht!]
    \caption{Comparison of Test Accuracy (mean$\pm$std) on CIFAR-100 with ResNet-50 architecture for 5 agents under both IID and non-IID data distribution, considering fully connected (FC) and ring topologies.}
    \label{tab:accuracy-comparison-RN50}
    \resizebox{\columnwidth}{!}{%
    \begin{tabular}{lllll}
    \hline
    \multirow{2}{*}{\textbf{Algorithm}} & \multicolumn{2}{c}{\textbf{IID}}                  & \multicolumn{2}{c}{\textbf{non-IID}}              \\ \cline{2-5} 
                                        & FC                      & Ring                    & FC                      & Ring                    \\ \hline
    SGP                                 & 39.99$\pm$0.45          & 39.74$\pm$0.04          & 13.27$\pm$0.09          & 13.33$\pm$0.12          \\
    CDSGD                               & 37.99$\pm$0.16          & 37.94$\pm$0.38          & 12.90$\pm$0.10          & 9.10$\pm$5.73           \\
    WA                                  & \textbf{49.07$\pm$0.16}          & 32.47$\pm$0.24          & \textbf{46.76$\pm$0.57}          & \textbf{24.21$\pm$0.39}          \\
    DIMAT (ours)                        & 42.59$\pm$1.00          & \textbf{42.06$\pm$0.13}          & 45.10$\pm$0.49          & 19.54$\pm$0.05          \\ \hline
    \end{tabular}%
    }
\end{table}

\subsection{Additional Scalability Analysis}\label{subsec:additional_scalability}
\subsubsection{IID Data Scalability}

Figures \ref{fig:10vgg16} and \ref{fig:20vgg16} depict algorithmic accuracy trends with varying agent numbers using CIFAR-100 IID data and the VGG16 architecture.
Figure \ref{fig:10vgg16} illustrates accuracy trends for 10 agents, providing a snapshot of algorithmic behavior in a moderately scaled scenario. In fig. \ref{fig:20vgg16}, the analysis extends to 20 agents, offering valuable insights into the algorithm's robustness and scalability as agent numbers increase in the IID scenario.
It's noteworthy that, even with this escalation in the number of agents, DIMAT consistently outperforms all baseline algorithms, showcasing its resilience and superior performance in the face of increased scalability. 
\subsubsection{Non-IID Data Scalability}
In investigating scalability under non-IID scenarios, we hypothesized that the bias introduced by pretraining models might be a contributing factor. Given that the number of classes in non-IID settings decreases, pretraining could potentially lead to biased initializations. To test this hypothesis, we experimented with random initialization instead of starting with pretrained models. However, our results indicate that this change in initialization strategy does not significantly affect the performance of DIMAT. The observed trends remain consistent with those obtained when using pretrained models, suggesting that factors beyond initialization bias do not influence DIMAT's performance in non-IID scenarios.
This can be seen in fig. \ref{fig:random_init}.
\begin{figure}[h!]
    \centering
    \includegraphics[width=\linewidth]{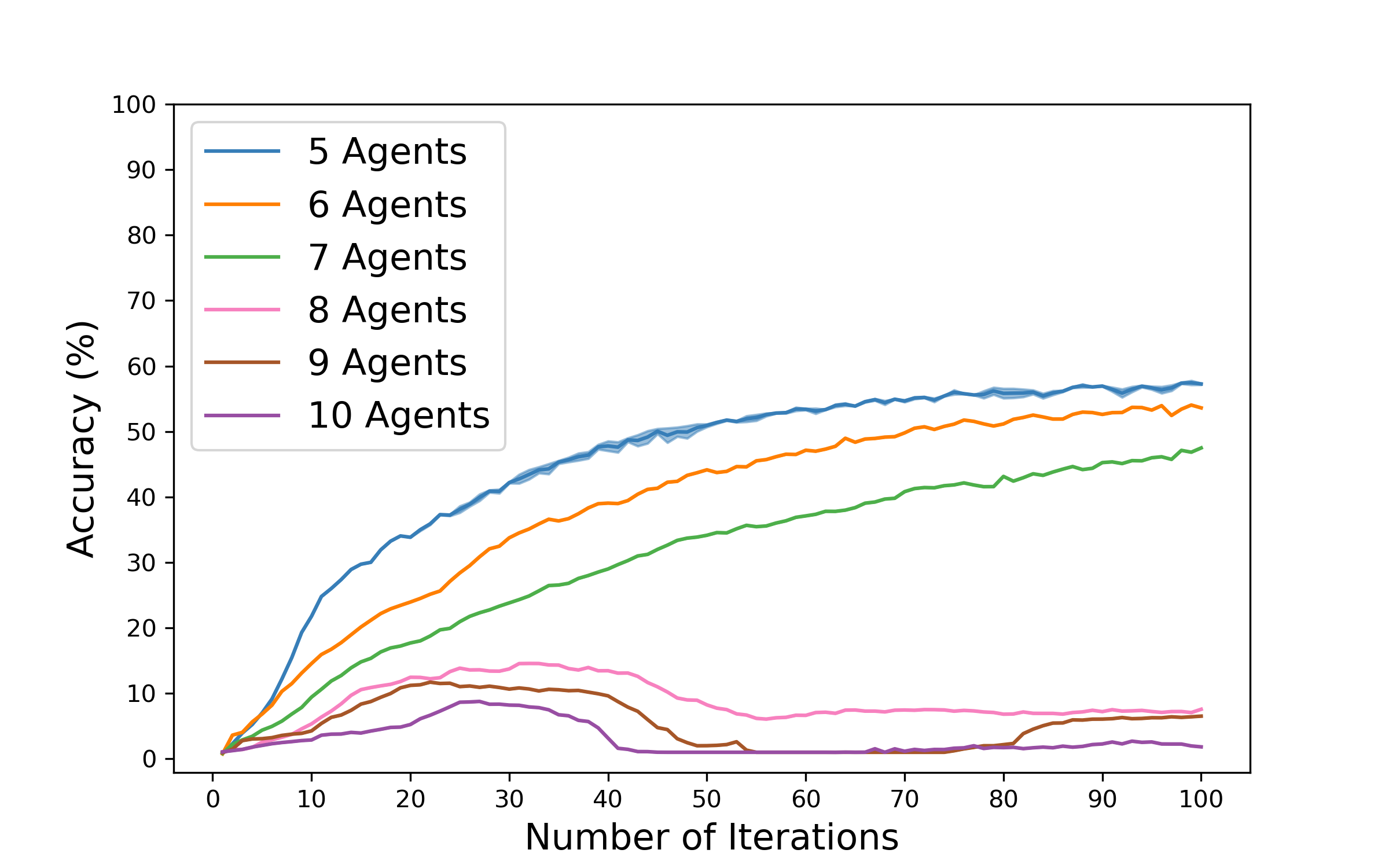}
    \caption{Impact of model's initialization on accuracy (Mean$\pm$Std) using ResNet-20 architecture and a fully connected topology on CIFAR-100 non-IID data. Results show the performance of the DIMAT algorithm with 5 to 10 agents.}
    \label{fig:random_init}
\end{figure}
Another approach we explored to address this issue was utilizing larger datasets, such as Tiny ImageNet, for more agents in non-IID scenarios. As depicted in fig. \ref{fig:large_data}, even with 10 agents, the accuracy continues to increase over time. This suggests that the underlying scalability issue in non-IID scenarios might be influenced by the dataset size. A larger dataset appears to improve performance, particularly for a higher number of agents. However, there seems to be a breakpoint. Unlike CIFAR-100, Tiny ImageNet is capable of handling a larger number of agents but still exhibits a breakpoint as we increase the number of agents, indicating that learning becomes progressively more challenging.

\begin{figure}[h!]
    \centering
    \includegraphics[width=\linewidth]{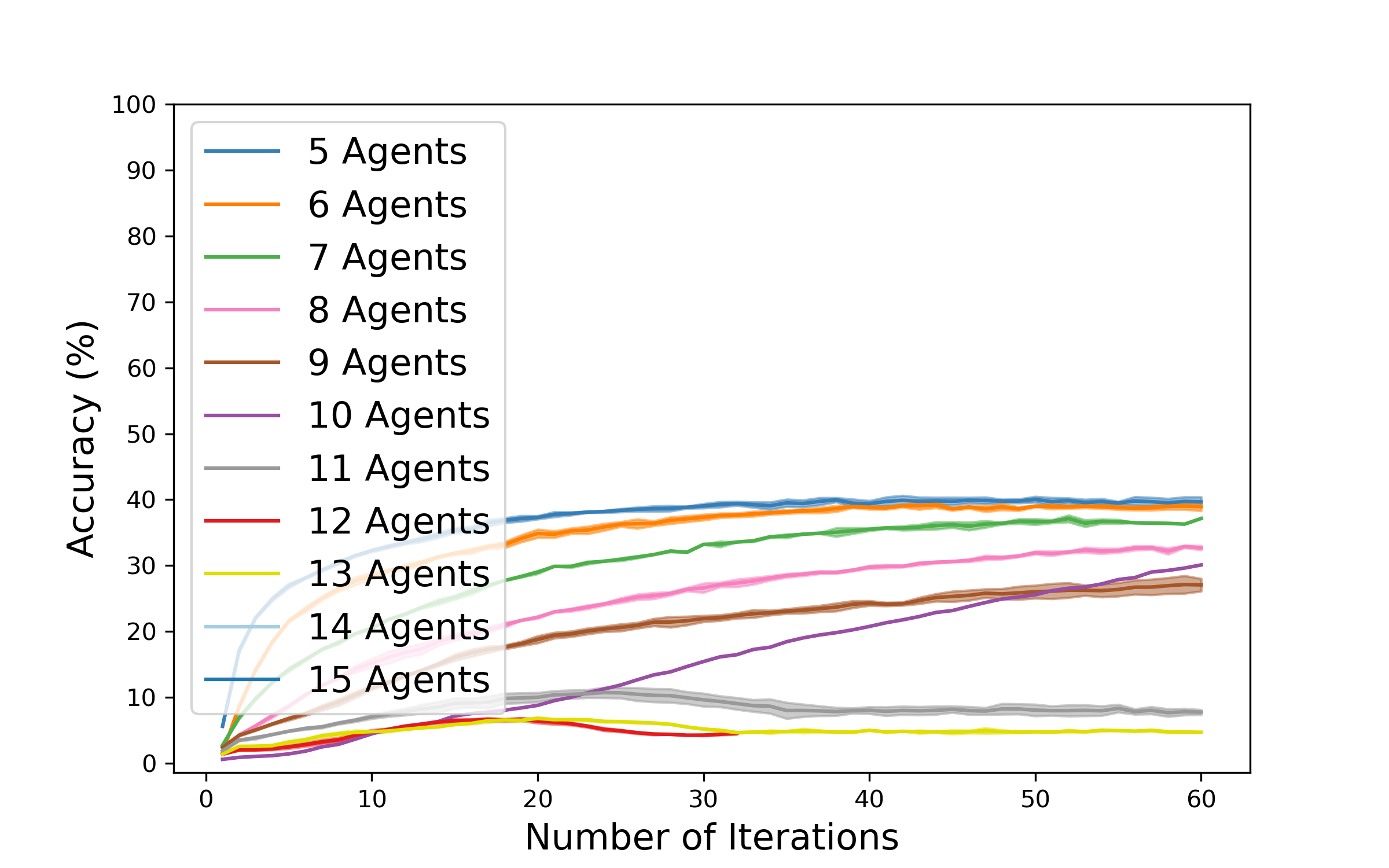}
    \caption{Effect of dataset size on accuracy (Mean$\pm$Std) using ResNet-20 architecture with a fully connected topology on Tiny ImageNet non-IID data. The figure illustrates the performance of the DIMAT algorithm with 5 to 15 agents.}
    \label{fig:large_data}
\end{figure}
\subsection{Exploring Varied Training Epochs}\label{subsec:additional_epochs}
In this subsection, we present results from experiments conducted with different numbers of training epochs for each iteration. Our analysis reveals that the optimal number of training epochs between iterations is 2, outperforming configurations with 1, 5, 7, and 10 training epochs. These findings are illustrated in Fig.~\ref{fig:diff_epochs}.
\begin{figure}[h!]
    \centering
    \includegraphics[width=\linewidth]{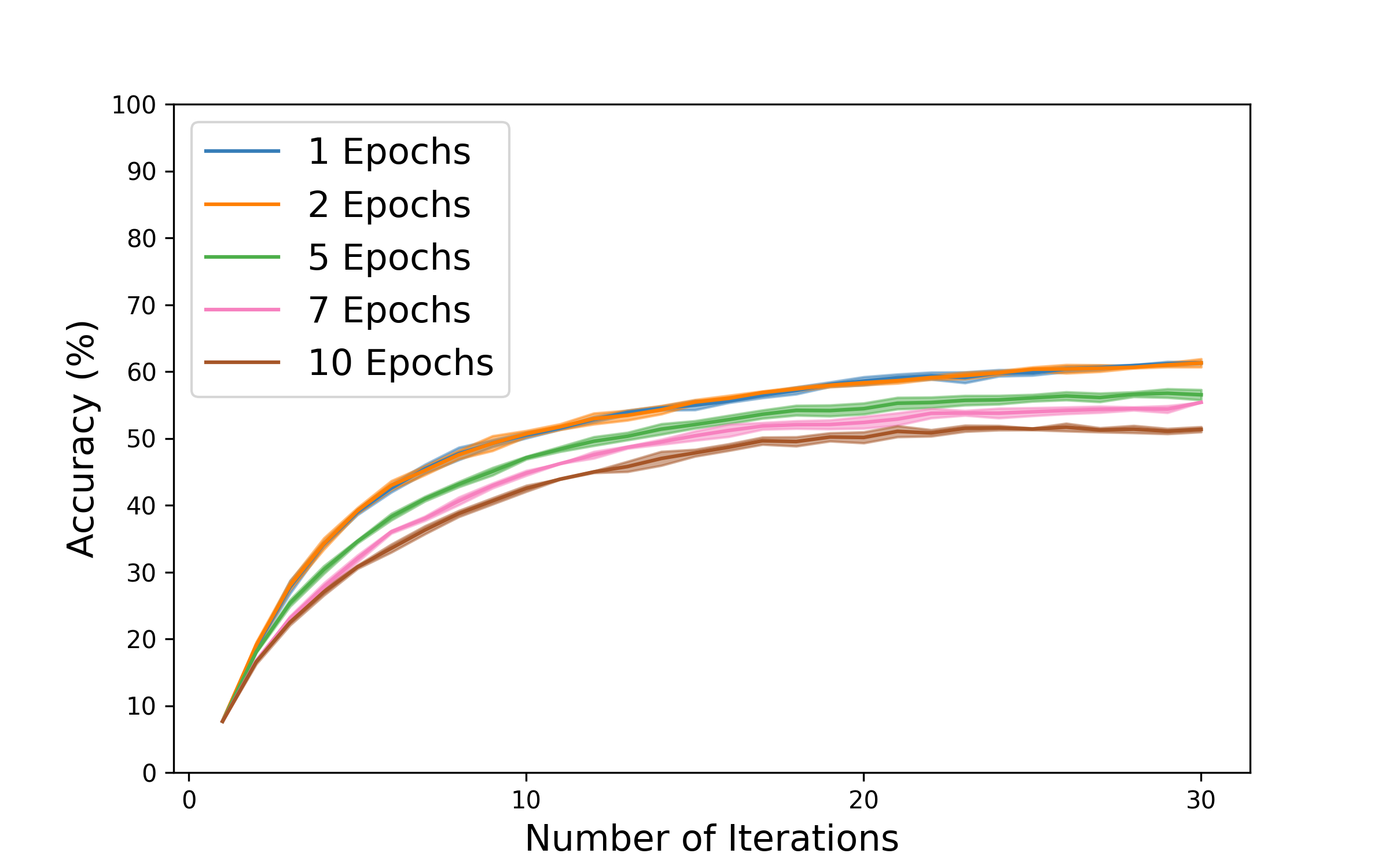}
    \caption{Impact of diverse training epochs on agents accuracy (Mean$\pm$Std) with fully Connected Topology using ResNet-20 architecture on CIFAR-100 non-IID data for 5 agents on the DIMAT algorithm.
.}
    \label{fig:diff_epochs}
\end{figure}
\subsection{Visualization of Communication Overhead}
Figure \ref{fig:comm_overhead} illustrates a comparison of communication overhead among DIMAT, SGP, CGA, and CDSGD for 5, 10, and 20 agents across fully connected and ring topologies.

\begin{figure}[h!]
\centering
\includegraphics[width=\linewidth]{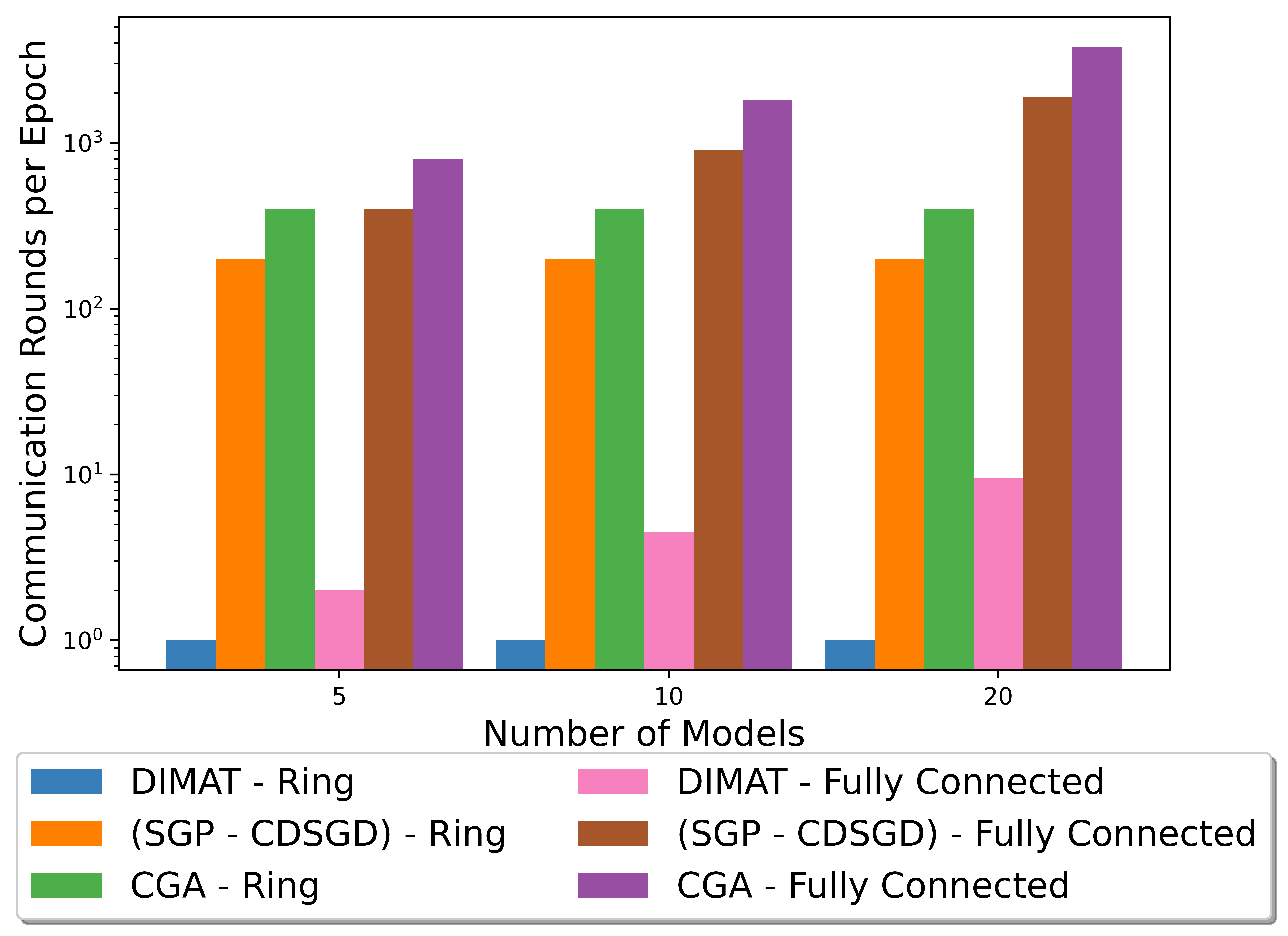}
\caption{Number of communication rounds per epoch for fully connected and ring topologies.}
\label{fig:comm_overhead}
\end{figure}

DIMAT significantly requires fewer communication rounds compared to SGP, CGA, and CDSGD.

\subsection{Computational Overhead}

In this subsection, we examine the computational overhead of various algorithms when training 5 agents using ResNet20 architecture on the non-IID CIFAR-100 dataset for 100 iterations. We focus on GPU memory usage and computation time as key performance metrics. Table \ref{tab:computation_overhead} compares GPU memory usage and computation time for SGP, CDSGD, and our proposed method, DIMAT.

Experiments were conducted on an NVIDIA A100 GPU (80 GB). It is important to note that the reported GPU memory usage is approximate. DIMAT demonstrates significantly lower GPU memory usage, requiring only 6 GB compared to 15 GB for both SGP and CDSGD. Furthermore, DIMAT achieves faster computation, completing the task in 15.95 hours compared to 16.88 hours for SGP and 16.99 hours for CDSGD.

These findings underscore the efficiency of DIMAT in terms of memory usage and computation time, rendering it a promising approach for decentralized learning tasks.

\begin{table}[h]
\centering
\caption{Comparison of GPU memory usage and computation time for 5 agents using ResNet20 on non-IID CIFAR-100 data for 100 iterations. The experiments were conducted on an NVIDIA A100 GPU.}
\label{tab:computation_overhead}
\resizebox{0.7\columnwidth}{!}{%
\begin{tabular}{lll}
\hline
Algorithm    & GPU           & Time                \\ \hline
\small SGP          & \small 15 GB         & \small 16.88 hrs.          \\
\small CDSGD        & \small 15 GB         & \small 16.99 hrs.          \\ 
\small DIMAT (ours) & \small \textbf{6 GB} & \small \textbf{15.95 hrs.} \\ \hline
\end{tabular}%
}
\end{table}

\section{Expanded Explanations of Selected Terminologies}
\subsection{Mixing Matrix}
The mixing matrix, a doubly stochastic matrix, signifies inter-agent influences in collaborative learning systems. While various design choices exist, we adopt a vanilla version for illustration. In a fully connected topology, the matrix is uniform: for instance, in a 5-agent network, all elements are set to 0.2 for symmetrical collaboration. In a ring topology, where agents equally influence their two adjacent counterparts, the matrix takes a circular pattern. Specifically, elements corresponding to the three neighboring agents are 0.333, while the rest are 0. This matrix representation is as follows:

For fully connected topology:
{
\[
\begin{bmatrix}
0.2 & 0.2 & 0.2 & 0.2 & 0.2 \\
0.2 & 0.2 & 0.2 & 0.2 & 0.2 \\
0.2 & 0.2 & 0.2 & 0.2 & 0.2 \\
0.2 & 0.2 & 0.2 & 0.2 & 0.2 \\
0.2 & 0.2 & 0.2 & 0.2 & 0.2 \\
\end{bmatrix}
\]
}

For ring topology:
{
\[
\begin{bmatrix}
0.333 & 0.333 & 0 & 0 & 0.333 \\
0.333 & 0.333 & 0.333 & 0 & 0 \\
0 & 0.333 & 0.333 & 0.333 & 0 \\
0 & 0 & 0.333 & 0.333 & 0.333 \\
0.333 & 0 & 0 & 0.333 & 0.333 \\
\end{bmatrix}
\]
}
\subsection{Activation Matching}
 We adopt the method proposed by Ainsworth et al.~\cite{ainsworth2022git}. This method aims to associate units across two models by performing regression between their activations, under the premise that models must learn similar features to effectively perform the same task.

Given the activations of each model, the objective is to link corresponding units between model 1 ($M_1$) and model 2 ($M_2$), assuming a potential linear relationship between their activations. For activations of the \(\ell\)th layer, represented by $\mathbf{Z}^{(M_1)}$ and $\mathbf{Z}^{(M_2)}$, the goal is to minimize the discrepancy between their activations using a linear assignment problem (LAP), for which efficient algorithms exist.

After solving the assignment problem for each layer, the weights of model 2 are adjusted to closely match those of model 1. This adjustment involves permuting both weights and biases for each layer, resulting in weights that generate activations closely aligned with those of model 1.

This method is computationally efficient, requiring only a single pass over the training dataset to compute activation matrices. Furthermore, activation matching at each layer operates independently of other layers, simplifying the optimization process.

\end{document}